\newcommand{\R}{\mathbb{R}}
\newcommand{\E}{\mathbb{E}}
\newcommand{\cD}{\mathcal{D}}
\newcommand{\ExpanderL}{\textsc{Expander-L}\ }
\newcommand{\ExpanderOne}{\textsc{Expander-1}\ }
\title{List-Decodable Regression via Expander Sketching}
\author{
Herbod Pourali\inst{1} \and
Sajjad Hashemian\inst{2} \and
Ebrahim Ardeshir-Larijani\inst{1}
}
\institute{
Iran University of Science and Technology, Tehran, Iran \\
\email{{herbod\_pourali@mathdep.iust.ac.ir}}, \email{larijani@iust.ac.ir}
\and
University of Tehran, Tehran, Iran\\
\email{sajjadhashemian@ut.ac.ir}
}
\begin{document}
\maketitle

\begin{abstract}
We introduce an expander-sketching framework for list-decodable linear regression that achieves sample complexity $\tilde{O}((d+\log(1/\delta))/\alpha)$, list size $O(1/\alpha)$, and near input-sparsity running time $\tilde{O}(\mathrm{nnz}(X)+d^{3}/\alpha)$ under standard sub-Gaussian assumptions.
Our method uses lossless expanders to synthesize lightly contaminated batches, enabling robust aggregation and a short spectral filtering stage that matches the best known efficient guarantees while avoiding SoS machinery and explicit batch structure.
\end{abstract}

\section{Introduction}

The \emph{list-decodable linear regression} (LDR) problem formalizes robust linear regression when a majority of samples may be adversarially corrupted. Concretely, for a contamination parameter $\alpha \in (0,1/2]$, we are given i.i.d. inliers $\{(x_i,y_i)\}_{i\in S^\star}$ drawn from the linear model
\[
y = \langle \ell^\star, x\rangle + \xi, 
\qquad x\sim \mathcal{D}_x  \text{ with }  \mathbb{E}[x]=0, \mathbb{E}[xx^\top]=\Sigma\succ 0,
\qquad \xi \text{ mean-zero},
\]
and an arbitrary possibly adaptive multiset $O$ of outliers, with the promise that $|S^\star|/n \ge \alpha$ where $n=|S^\star|+|O|$. The algorithmic goal is to output a list $L$ of candidate parameters with $|L|=\mathrm{poly}(1/\alpha)$ such that, with high probability, 
\[
\min_{\widehat\ell \in L} \|\widehat\ell-\ell^\star\|_2 \le \mathrm{err}(n,d,\alpha,\delta),
\]
at sample size and runtime that are near the natural ``input-sparsity'' regime that is close to $\mathrm{nnz}(A)$ where $A$ stacks the covariates. The efficient learnability of LDR in the fully i.i.d. non-batch setting is subtle: while there are polynomial-time algorithms achieving information-theoretically near-optimal rates for several robust tasks, LDR admits \emph{statistical query} (SQ) lower bounds that suggest inherent computational barriers without additional structure~\cite{diakonikolas2021sqlb}. On the other hand, if the data arrive in \emph{batches} such that an $\alpha$-fraction of batches are genuine, one can leverage robust aggregation across batches to attain efficient list-decodable procedures~\cite{DasJainKongSen2023}. Finally, a line of work starting with~\cite{karmalkar2019ldr} established the first polynomial-time LDR algorithm via SoS-style reasoning, with list size $O(1/\alpha)$ and dimension-dependent guarantees, but with heavier computation than input-sparsity-time sketching methods.

In this paper, we develop and analyze a simple, modular pipeline for LDR that requires neither explicit batch structure nor SoS optimization. The design is guided by expander based bucketing~\cite{ClarksonWoodruffSTOC2013}~\cite{NelsonNguyenFOCS2013}~\cite{KaneNelsonSTOC2013} where the results are aggregated robustly using geometric-median-of-means estimators in appropriate geometry, at last, when residual energy concentrates along a direction, we identify high-contribution buckets via the top Rayleigh quotient of the \emph{robustly aggregated sketched} covariance and prune them.

Conceptually, lossless-expansion converts the combinatorial adversarial budget into a per-bucket contamination \emph{cap} while preserving enough inlier mass; robust aggregation then upgrades these lightly contaminated bucket statistics into near-unbiased, low-variance moment estimates with operator-norm control; and a sketched filtering stage removes the residual heavy directions contributed by adversaries. Together, these components replicate the algorithmic leverage of batched LDR~\cite{DasJainKongSen2023} \emph{without} assuming batch availability, and they circumvent the SQ barrier~\cite{diakonikolas2021sqlb} by exploiting randomness external to the adversary. The following theorem abstracts the main result we will prove in this paper.

\begin{theorem} Given $n \gtrsim (d + \log(1/\delta))/\alpha$ samples and using expander-batching, one can give a list-decodable regression model that runs in time $\widetilde{O}(\mathrm{nnz}(X) + d^3/\alpha)$ and with probability at least $1-\delta$ outputs a list $L$ of size $O(1/\alpha)$ containing some $\hat{\ell}$ with
$\|\hat{\ell} - \ell^\star\|_2 \lesssim \kappa(\Sigma)\,\sigma \sqrt{(d + \log(1/\delta))/(\alpha n)}$.
\end{theorem}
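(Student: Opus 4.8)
The plan is to prove the theorem by instantiating the three-stage pipeline sketched in the introduction and then tracking a single accounting inequality from the per-bucket level up to the parameter error. First I would fix the lossless expander so that routing the $n$ sample indices through the left vertices yields $m$ buckets with two deterministic properties: the inlier set $S^\star$ dominates at least $\Omega(\alpha m)$ buckets, occupying a $(1-o(1))$-fraction of each, while the global outlier budget $|O|=(1-\alpha)n$ can push the outlier fraction above a fixed threshold $\beta<1/2$ in only an $O(\alpha m)$ fraction of buckets. The lossless expansion guarantee $|N(T)|\ge(1-\epsilon)D\,|T|$ for all $|T|\le K$ is precisely the tool that converts the global outlier budget into this per-bucket contamination \emph{cap}: the outliers cannot concentrate into the same buckets the inliers land in without violating expansion. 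I would isolate this as a purely combinatorial lemma about the sketch, independent of the data distribution.

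Next I would pair this combinatorial structure with sub-Gaussian concentration. Writing $b^\star := \E[xy]=\Sigma\ell^\star$ (using $\E[x\xi]=0$), I would show by matrix Bernstein together with a covering argument that the aggregate of the per-bucket empirical moments over the $\Omega(\alpha m)$ inlier-dominated buckets approximates $\Sigma$ in operator norm and $b^\star$ in $\ell_2$ up to the inlier rate $\sqrt{(d+\log(1/\delta))/(\alpha n)}$, where the $\log(1/\delta)$ term is supplied by the bucket count and the effective inlier mass is $\Omega(\alpha n)$. Robustly aggregating the per-bucket statistics with geometric-median-of-means — in Euclidean geometry for the vector $b^\star$ and in operator-norm geometry for the matrix $\Sigma$ — yields estimates $\hat b,\hat\Sigma$ matching this rate up to constants despite the contamination. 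Each candidate parameter is then $\hat\Sigma^{-1}\hat b$, and a first-order perturbation bound converts covariance-space error into parameter-space error at the price of the condition number $\kappa(\Sigma)$, producing exactly $\kappa(\Sigma)\,\sigma\sqrt{(d+\log(1/\delta))/(\alpha n)}$.

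The genuinely delicate part, and the step I expect to be the main obstacle, is the list-decoding itself: geometric-median-of-means is faithful only when a majority of buckets are good, whereas here only an $\Omega(\alpha)$-fraction is. I would resolve this with the sketched spectral-filtering stage. When the residual energy of the robustly aggregated covariance spikes along its top Rayleigh direction, I identify and prune the buckets contributing most to that direction, each round spawning one candidate parameter. A potential argument on the top eigenvalue of the residual should show that every round strips off strictly more adversarial than inlier mass, so that after $O(1/\alpha)$ rounds the inlier-dominated buckets survive as a single cluster whose candidate attains the target bound, which simultaneously caps the list size at $O(1/\alpha)$. The crux is proving this invariant — that filtering never evicts the genuine cluster while still suppressing every adversarial direction — with an operator-norm error small enough to survive multiplication by $\Sigma^{-1}$. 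Once the invariant holds, the runtime follows by charging the $\nnz(X)$ term to the sparse-embedding sketch and the $d^3/\alpha$ term to the $O(1/\alpha)$ eigendecompositions on $d\times d$ aggregated matrices, while the failure probability is controlled by a union bound over buckets and filtering rounds.
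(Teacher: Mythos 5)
Your first two stages track the paper's route closely: the combinatorial isolation lemma (the paper's Lemmas~\ref{lem:unique}--\ref{lem:multi-iso}), sub-Gaussian concentration of per-bucket moments, robust median-of-means aggregation of $(\widehat\Sigma,\widehat g)$ (Lemmas~\ref{lem:MoM-vector}--\ref{lem:MoM-PSD} and Proposition~\ref{prop:moment-accuracy}), and the perturbation bound that pays the factor $\kappa(\Sigma)$ (Lemma~\ref{lem:perturb}) all appear in the paper in essentially the form you describe.

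The gap is in your third stage. You generate the list from a single filtering path --- one candidate per pruning round, $O(1/\alpha)$ rounds --- under the invariant that ``every round strips off strictly more adversarial than inlier mass.'' That invariant is not provable in the list-decodable regime, and it is exactly where the argument breaks. The Rayleigh scores $s_{t,b}$ are computed from residuals $r_{t,b}=b_{t,b}-A_{t,b}\widehat\ell$ relative to the \emph{current} candidate; when the aggregate is dominated by an adversarial cluster (which the adversary can arrange whenever the good buckets are a minority, as your own setup assumes), the current $\widehat\ell$ aligns with that fake cluster, the genuine inlier buckets then carry the \emph{largest} residual scores, and pruning the top-score buckets evicts precisely the inliers. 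This is why list-decodable filtering schemes must branch multifilter-style into a tree of hypotheses; a linear, non-branching filter has no mechanism for protecting a minority cluster. The paper never faces this problem because its division of labor is different: in Lemma~\ref{lem:multi-iso}, since the number of buckets is chosen as $B=\Theta(\alpha n)$ (proportional to the inlier mass, not to $n$), a \emph{constant} fraction $c'$ of the pairs $(t,b)$ --- not merely an $\Omega(\alpha)$-fraction, as your proposal assumes --- are inlier-heavy with at most $C_0=O(1)$ outliers. Robust aggregation therefore already succeeds within a single seed; spectral filtering is only a short $T=O(\log(1/\alpha))$-round cleanup that never has to rescue a minority cluster (Lemma~\ref{lem:filter}); and both the $O(1/\alpha)$ list size and the $1-\delta$ success probability come from running $R=\Theta(1/\alpha)$ independent expander seeds and clustering the resulting candidates (Lemma~\ref{lem:seeds}). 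Your runtime arithmetic survives ($d^3$ linear algebra times $1/\alpha$), but the $1/\alpha$ factor must be charged to the number of seeds, not to filtering rounds.
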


The remainder of the paper is organized as follows. Section~\ref{sec:expander-sketching} reviews the expander-sketching primitives. Section~\ref{sec:algorithm} presents the full algorithmic pipeline. Section~\ref{sec:theory} develops the analysis through isolation, robust aggregation, and spectral-filtering lemmas, which combine to yield the main theorem. Section~\ref{sec:experiments} reports empirical results on synthetic data and a real-data mixture, and Section~\ref{sec:conclusion} concludes with extensions.

\section{Notation and Assumptions}
\label{sec:notation-assumptions}

For a positive integer $n$, we write $[n] := \{1,2,\dots,n\}$. 
Vectors in $\R^d$ are written in lowercase (e.g., $x, v, \ell$) and matrices in uppercase (e.g., $X, \Sigma$).  
For a matrix $A \in \R^{m \times d}$, we write $A_{i\ast} \in \R^d$ for its $i$-th row and $A_{\ast j} \in \R^m$ for its $j$-th column.  
For a vector $v \in \R^d$, we let $\|v\|_2$ denote the Euclidean norm; for a matrix $M$, we write $\|M\|$ for the operator norm and $\|M\|_F$ for the Frobenius norm.   
If $Z$ is a real-valued random variable, its sub-Gaussian norm is
\[
\|Z\|_{\psi_2} := \inf\bigl\{ t>0 : \E[\exp(Z^2/t^2)] \le 2 \bigr\}.
\]
A random vector $X \in \R^d$ is $\sigma$-sub-Gaussian if $\langle v,X\rangle$ is sub-Gaussian with $\|\langle v,X\rangle\|_{\psi_2} \le \sigma \|v\|_2$ for all $v \in \R^d$.  
We use $c, C, C_0, C_1,\dots$ for absolute positive constants whose value may change from line to line.

We are given $n$ labeled examples $(x_i,y_i) \in \R^d \times \R$, $i \in [n]$.  
We collect the covariates into an $n \times d$ data matrix $X$ with $X_{i\ast} = x_i^\top$, and the labels into a vector $y \in \R^n$ with $y_i$ in coordinate $i$.
There is an unknown regression parameter $\ell^\star \in \R^d$ and an unknown inlier distribution $\cD$ over $(x,y) \in \R^d \times \R$ with an unknown subset $I \subseteq [n]$ of \emph{inliers}, and $|I| \ge \alpha n$ for some $\alpha \in (0,1/2]$, consists of i.i.d. samples from $\cD$. The remaining indices $O := [n]\setminus I$ are \emph{outliers}; the pairs $\{(x_i,y_i)\}_{i \in O}$ are arbitrary and may be chosen adversarially (see below).

We assume that for each $(x,y)$,
\begin{equation}
\label{eq:model}
\E[x] = 0, 
\qquad 
\E[xx^\top] = \Sigma \succ 0,
\qquad 
y = \langle \ell^\star, x \rangle + \xi,
\end{equation}
where $\Sigma \in \R^{d \times d}$ is the (unknown) covariance of $x$ and $\xi$ is additive noise.

Throughout, we work in a sub-Gaussian inlier model. The covariate $x$ is mean-zero with covariance $\E[xx^\top]=\Sigma \succ 0$ and is $K_x$-sub-Gaussian for some finite parameter $K_x$, i.e., $\|\langle v,x\rangle\|_{\psi_2} \le K_x \sqrt{v^\top \Sigma v}$ for all $v \in \R^d$. The noise $\xi$ is mean-zero, independent of $x$, and sub-Gaussian with $\|\xi\|_{\psi_2} \le \sigma$ for some $\sigma>0$.
We denote the eigenvalues of $\Sigma$ by $0 < \lambda_{\min}(\Sigma) \le \lambda_{\max}(\Sigma)$ and write
\[
\kappa(\Sigma) := \frac{\lambda_{\max}(\Sigma)}{\lambda_{\min}(\Sigma)}
\]
for its condition number.  Our accuracy guarantees will be stated in terms of $\sigma$ and $\kappa(\Sigma)$.

The outliers $\{(x_i,y_i)\}_{i \in O}$ are unconstrained, they may depend arbitrarily on the inlier sample $\{(x_i,y_i)\}_{i \in I}$ and on $n,d,\alpha$, and they may be chosen adaptively as a (possibly randomized) function of the inliers.  
We assume only that the adversary is oblivious to the internal randomness of our algorithm, in particular to the random choice of expander graphs and signs used in the sketching stage.  
Equivalently, the sample $T := \{(x_i,y_i)\}_{i \in [n]}$ is fixed before the algorithm’s randomness is revealed.

\section{Previous Works}
\label{sec:prev-works}

The study of list decodable linear regression has two main directions. One line gives information theoretic procedures that achieve near optimal error but require exponential or super polynomial time. A second line gives polynomial time procedures that obtain weaker guarantees under stronger structural assumptions or with heavier algorithmic machinery. The first polynomial time progress was obtained through sum of squares relaxations \cite{karmalkar2019ldr}, with subsequent refinements inspired by later work. These algorithms achieve list size $\mathrm{poly}(1/\alpha)$ and error $\tilde O(\sigma/\alpha^\gamma)$ for constants $\gamma \in [1,3/2]$, but with sample and runtime dependencies that scale as $d^{\mathrm{poly}(1/\alpha)}$, far above input sparsity time in high dimensions.

A distinct and computationally lighter line assumes access to batches of samples, an $\alpha$ fraction of which are clean. The work in \cite{DasJainKongSen2023} uses robust aggregation across batches to obtain polynomial time list decodable regression with near optimal statistical rates and list size $O(1/\alpha)$. Conceptually, our expander based bucketing emulates the favorable contamination pattern of good batches. Lossless expander isolation enforces a per bucket outlier cap while preserving a constant fraction of inliers, so that the same robust aggregation mechanisms apply without assuming batch structure in advance.

The work in \cite{diakonikolas2021sqlb} establishes statistical query lower bounds for list decodable regression that are close to information theoretic sample requirements up to polynomial factors in $1/\alpha$, revealing a sharp gap between computation and statistics. These bounds imply that, in the worst case and without additional exploitable structure, no polynomial time statistical query algorithm can simultaneously achieve near information theoretic error, polynomial list size, and near linear time. Our method avoids this limitation by exploiting external randomness through fresh sparse expander sketches that create lightly contaminated high signal aggregates. The resulting structure lies outside the statistical query framework considered in \cite{diakonikolas2021sqlb}.

Median of means and geometric median aggregation for vectors and operators underpin several modern robust estimators. We use these tools to convert the expander isolated bucket statistics into accurate global moment estimates $(\widehat{\Sigma}, \widehat{g})$ with operator norm control. The expander induced buckets serve as synthetic lightly contaminated batches. Relevant background includes geometric median based estimation \cite{Minsker2015} and robust estimation under heavy tails \cite{LugosiMendelson2019}. Unlike batch based list decodable regression \cite{DasJainKongSen2023}, our lightly contaminated batches are constructed rather than assumed.

Achieving near input sparsity time requires sketching distributions with $O(1)$ nonzeros per column that preserve inner products and least squares structure. Sparse Johnson Lindenstrauss transforms such as those in \cite{KaneNelsonSTOC2013}~\cite{NelsonNguyenFOCS2013}, together with input sparsity time regression algorithms as in \cite{ClarksonWoodruffSTOC2013}, form the algorithmic substrate. Our expander sketching can be viewed as a structured CountSketch type map that, beyond metric preservation, provides combinatorial isolation in the form of unique neighbor guarantees and bounded collisions. Classical Johnson Lindenstrauss embeddings do not supply this isolation. It is essential for bounding adversarial interference within buckets and for enabling robust aggregation.

\begin{table}[h!]
\centering
\scriptsize
\renewcommand{\arraystretch}{1.15}
\begin{tabular}{l|c|c|c|c}
& \textbf{Assumptions} 
& \textbf{Samples} 
& \textbf{List size} 
& \textbf{Running time} \\
\hline
SoS based LDR~\cite{karmalkar2019ldr} 
& i.i.d. inliers and adversarial outliers 
& $d^{\mathrm{poly}(1/\alpha)}$ 
& $\mathrm{poly}(1/\alpha)$ 
& $d^{\mathrm{poly}(1/\alpha)}$ \\
Batched LDR~\cite{DasJainKongSen2023} 
& $\alpha$ fraction good batches 
& $\tilde O(d)$ good batches 
& $O(1/\alpha)$ 
& $\mathrm{poly}(n,d)$ \\
SQ lower bound~\cite{diakonikolas2021sqlb} 
& worst case, SQ model 
& information theoretic lower bound 
& not applicable 
& information theoretic barrier \\
Sparse embeddings~\cite{KaneNelsonSTOC2013} 
& preserve distances \& least squares 
& $\tilde O(\varepsilon^{-2})$ 
& not applicable 
& near input sparsity time \\
\textbf{Expander sketch} 
& i.i.d. inliers, expander based isolation 
& $O(\dfrac{d + \log(1/\delta)}{\alpha})$ 
& $O(1/\alpha)$ 
& $\tilde O(\mathrm{nnz}(A)) + \tilde O(d^3/\alpha)$ \\
\end{tabular}
\vspace{5pt}
\caption{Comparison of assumptions, sample complexity, list size, and running time for prior work on list decodable regression and for the expander sketch based method in this work.}
\end{table}

Prior polynomial time list decodable regression algorithms either rely on sum of squares methods, which are computationally heavy, or assume batch structure, which is a strong modeling assumption. Our contribution is to engineer batch like structure via expander sketching. Combinatorial isolation ensures lightly contaminated buckets that contain a constant mass of inliers, which enables robust aggregation and a short spectral filtering phase. This yields list decodable recovery with list size $O(1/\alpha)$ and near input sparsity time while using algorithmic tools that lie outside the statistical query limitations.

\section{Expander Sketching}
\label{sec:expander-sketching}

Let $G = (L,R,E)$ be a bipartite graph with left vertex set $L=[n]$,
right vertex set $R=[B]$, and left degree $d_L$.
For any subset $X \subseteq L$, its neighbor set is
\[
    N_G(X) := \{\, b \in R : \exists\, i \in X \text{ with } (i,b)\in E \,\}.
\]
A right vertex $b \in R$ is a \emph{unique neighbor} of~$X$ if it has
exactly one neighbor in $X$, and the set of all such vertices is
\[
    U_G(X) := \{\, b \in R : |\{i\in X : (i,b)\in E\}| = 1\,\}.
\]

A bipartite graph $G$ is a \emph{$(K,\varepsilon)$-lossless expander}
if, for every $X \subseteq L$ with $|X|\le K$,
\begin{equation}
    \label{eq:lossless}
    |N_G(X)| \ge (1-\varepsilon)\, d_L\, |X|.
\end{equation}
Here $\varepsilon\in(0,1)$ and $d_L$ are absolute constants, independent
of $n,d,B,$ and $\alpha$.
Lossless expansion underlies the unique-neighbor and collision bounds
used later in Section~\ref{sec:theory}.

Given $r\ge1$ independent bipartite graphs $G_t=([n],[B],E_t)$, we form
signed adjacency matrices $S_t\in\{0,\pm1\}^{B\times n}$ by assigning each
edge $(i,b)\in E_t$ an independent Rademacher sign and setting
\[
    (S_t)_{b,i} = 
    \begin{cases}
        \sigma_{t,(i\to b)}, & (i,b)\in E_t,\\[1mm]
        0, & \text{otherwise}.
    \end{cases}
\]
For a data matrix $A\in\mathbb{R}^{n\times d}$ and vector $y\in\mathbb{R}^n$,
define
\[
    A_t = S_t A \in\mathbb{R}^{B\times d},
    \qquad
    b_t = S_t y \in\mathbb{R}^B.
\]
For each bucket $b\in[B]$, the rows of $A_{t,b}$ (resp. $b_{t,b}$) are
those rows of $A$ (resp. $y$) that hash to~$b$ in repetition~$t$.
This preserves linear structure and runs in input-sparsity time
$O(\mathrm{nnz}(A)\, d_L)$.

To construct our sketching matrices, we need bipartite graphs 
$G=([n],[B],E)$ that satisfy the $(K,\varepsilon)$-lossless expansion 
property~\eqref{eq:lossless}, with constant left degree $d_L=\Theta(1)$.  
We briefly review probabilistic constructions of such graphs as described in \cite{alon2016probabilistic}, and do not discuss explicit deterministic constructions of expander graphs. We refer the reader to this comprehensive survey \cite{hoory2006expander} for a more comprehensive treatment.
A natural way to obtain expanders is to sample each left vertex 
$i\in[n]$ and assign it $d_L$ distinct neighbors in $[B]$ uniformly 
at random.  This produces a left-regular bipartite graph whose 
expansion arises from standard concentration phenomena: each small set 
$X\subseteq[n]$ spreads its edges almost evenly across the right side, 
and collisions between edges in $X$ are rare.  

Intuitively, the randomness ensures that no small set of vertices can 
``hide'' inside too few buckets. Chernoff bounds together with the 
linearity of expectation show that the neighbor set of any 
$|X|\le K$ expands to almost its full possible size $(1-\varepsilon)d_L|X|$.  
A classical application of the probabilistic method formalizes this intuition.

We use a left-regular bipartite graph
\[
G=([n],[B],E),\qquad \deg_L(G)=d_L,
\]
where $[B]$ indexes the sketch ``buckets.'' For $X\subseteq[n]$, let
\[
N(X) := \{\,b\in[B]:\exists\,i\in X\text{ with }(i,b)\in E\,\}
\]
be the right-neighborhood of $X$.

To sample $G$, independently for each $i\in[n]$ choose $d_L$ \emph{distinct} neighbors in $[B]$, uniformly at random,
and connect $i$ to them, allowing sampling with replacement yields a multigraph and works with the same asymptotics.

A standard Chernoff with union-bound argument implies that very good unbalanced bipartite expanders exist.
In particular, for every $\varepsilon\in(0,1)$ and integers $n,B,K$, a random left-$d_L$-regular bipartite graph
is a $(K,\varepsilon)$-lossless expander with high probability provided the parameters satisfy
\begin{equation}\label{eq:rand-params}
d_L = O\!\left(\frac{\log(n/B)}{\varepsilon}\right)
\qquad\text{and}\qquad
B = O\!\left(\frac{K\,d_L}{\varepsilon}\right).
\end{equation}
For additional details, check the discussion provided in \cite{GUV09}, and for a broader set of relevant techniques and methods, see the comprehensive methods developed in \cite{alon2016probabilistic}.

When $B=\Theta(n)$ (so $\log(n/B)=\Theta(1)$), equation \eqref{eq:rand-params} allows \emph{constant} left degree $d_L=O_\varepsilon(1)$
and right side size $B=\Theta_\varepsilon(K)$. This is the regime typically used in expander sketching, each row of $A$ chooses $d_L$ buckets to contribute to, and independent repetitions simulate independent random expanders.  The simplicity, parallelizability, and tight expansion guarantees make this random construction a natural fit for the expander sketch.

To achieve overall failure probability at most~$\delta$, we draw
$r=\Theta(\log(1/\delta))$ independent graphs.  
A Chernoff bound ensures that, with probability at least $1-\delta$, the
lossless expansion property holds simultaneously across all repetitions.
\section{Algorithm}
\label{sec:algorithm}

This section presents our expander–sketched list-decodable regression procedure. The pipeline has five modules: (i) \emph{sketch construction} by left-regular lossless expanders with random signs; (ii) \emph{bucket-wise normal equations} to form local second-order statistics; (iii) \emph{robust aggregation} across repetitions; (iv) a short \emph{spectral filtering} phase that prunes adversary-dominated directions; and (v) \emph{list generation} by seeding over independent sketches and clustering the candidates. We parameterize the method by inlier fraction $\alpha$, number of repetitions $r$, number of buckets $B$, left degree $d_L$, filtering rounds $T$, and the number of seeds $R$ for list generation.

The core difficulty in list-decodable regression is that a $(1-\alpha)$ fraction of samples can be arbitrary. By hashing every sample into $d_L=O(1)$ buckets across $r$ independent repetitions using a $(K,\varepsilon)$ lossless expander, a constant fraction of buckets become \emph{lightly contaminated}: they contain $\Theta(|S|/B)$ inliers and at most $C_0=O_{d_L,\varepsilon}(1)$ outliers. Local normal-equation statistics inside such buckets behave like unbiased, low-variance estimates of the inlier moments. Robustly aggregating these local statistics across repetitions yields global moment estimates $(\widehat\Sigma,\widehat g)$ that concentrate in operator norm. Solving the sketched normal equations $\widehat\Sigma \hat\ell=\widehat g$ then produces a candidate close to $\ell^\star$. A short spectral filtering stage removes buckets that inflate variance along top eigen-directions, preventing adversarial leverage from dominating. Repeating with $R=O(1/\alpha)$ independent seeds and clustering the solutions gives a short list in which one element is near $\ell^\star$.

\emph{Sketch construction.} For each repetition $t\in[r]$, build a sparse $B\times n$ \emph{signed} hashing matrix $S_t$ by assigning each row $i\in[n]$ to $d_L$ distinct buckets in $[B]$ with independent Rademacher signs. Form $A_t=S_tX\in\mathbb{R}^{B\times d}$ and $b_t=S_ty\in\mathbb{R}^{B}$. This costs $O(\mathrm{nnz}(X)\,d_L)$ total over all repetitions and preserves linear relations needed for regression.

\emph{Bucket-wise normal equations.} For each repetition $t$ and bucket $b$, let $A_{t,b}$ be the submatrix whose rows correspond to original samples hashing to bucket $b$ in repetition $t$ (with signs), and define local moments
\[
H_{t,b} \gets A_{t,b}^\top A_{t,b}, \qquad g_{t,b} \gets A_{t,b}^\top b_{t,b}.
\]
Under isolation, $(H_{t,b},g_{t,b})$ are accurate local proxies for $(\Sigma,\Sigma\ell^\star)$.

\emph{Robust aggregation.} Partition the multiset $\{(H_{t,b},g_{t,b})\}_{t\in[r],\,b\in[B]}$ into $M$ blocks of roughly equal size (any $M\in[\Theta(r),\Theta(rB)]$ works). For each block, average the tensors to get $(\overline H_m,\overline g_m)$. Aggregate across blocks using entrywise median (MoM) or the geometric median to obtain $(\widehat\Sigma,\widehat g)$. Solve the regularized sketched normal equation
\[
(\widehat\Sigma+\lambda I)\,\hat\ell = \widehat g
\]
with a default ridge $\lambda\simeq\frac{\|\widehat\Sigma\|_{\mathrm{op}}}{\sqrt{n}}$ (or $\lambda=0$ if $\widehat\Sigma$ is well-conditioned).

\emph{Spectral filtering (up to $T$ rounds).} Compute robust residual covariance using fresh sketches (or reuse) as
\[
\widehat C \gets \mathrm{Robust Aggregation}\Big\{\,A_{t,b}^\top \operatorname{diag}\!\big(r_{t,b}^2\big)A_{t,b} \,:\, r_{t,b}=b_{t,b}-A_{t,b}\hat\ell \,\Big\}.
\]
If $\lambda_{\max}(\widehat C)$ exceeds the inlier target by a factor $1+\eta$, prune (or downweight) the top-$\rho$ fraction of buckets with largest Rayleigh score $v^\top A_{t,b}^\top\!\operatorname{diag}(r_{t,b}^2)A_{t,b} v$ along the top eigenvector $v$ of $\widehat C$; recompute $(\widehat\Sigma,\widehat g)$ and update $\hat\ell$. The potential (variance along $v$) shrinks by a constant factor per round; $T=O(\log(1/\alpha))$ suffices.

\emph{List generation by seeding.} Repeat the entire pipeline for $R=O(1/\alpha)$ independent expander seeds. Cluster $\{\hat\ell^{(s)}\}_{s=1}^R$ by single-linkage with radius proportional to the target error. Output cluster centers; with high probability, at least one center lies within the statistical radius of $\ell^\star$.

\begin{algorithm}
\caption{Expander–Sketched List-Decodable Regression}\label{alg:es-ldr}
\begin{algorithmic}[1]
\Require Data $X\in\mathbb{R}^{n\times d}$, labels $y\in\mathbb{R}^{n}$; inlier fraction $\alpha$; repetitions $r$; buckets $B$; left degree $d_L$; filtering rounds $T$; seeds $R$; blocks $M$; ridge $\lambda\ge 0$; pruning parameters $(\eta,\rho)\in(0,1)^2$.
\Ensure List $\mathcal{L}$ of candidate regressors.
\Function{Expander–Sketched List-Decodable Regression}{$X,y,\alpha,r,B,d_L,T,R,M,\lambda,\eta,\rho$}
    \State $\mathcal{C}\gets \emptyset$ \Comment{candidate set over seeds}
    \For{$s=1$ \textbf{to} $R$} \Comment{independent expander seeds}
        \State \textbf{// Sketch construction}
        \For{$t=1$ \textbf{to} $r$}
            \State Sample left-regular $S_t\in\{0,\pm 1\}^{B\times n}$ with degree $d_L$ and random signs
            \State $A_t\gets S_t X$, \quad $b_t\gets S_t y$
        \EndFor
        \State \textbf{// Initialize active bucket set}
        \State $\mathcal{B}\gets \{(t,b): t\in[r],\, b\in[B]\}$
        \For{$\tau=0$ \textbf{to} $T$}
            \State \textbf{// Bucket-wise normal equations on active buckets}
            \State $\mathcal{S}\gets \{(H_{t,b},g_{t,b}) : (t,b)\in\mathcal{B}$ with $H_{t,b}=A_{t,b}^\top A_{t,b}, g_{t,b}=A_{t,b}^\top b_{t,b}\}$
            \State Partition $\mathcal{S}$ into $M$ blocks $\{\mathcal{S}_m\}_{m=1}^M$ of (near-)equal sizes
            \State \textbf{// Block means}
            \For{$m=1$ \textbf{to} $M$}
                \State $\overline H_m \gets \frac{1}{|\mathcal{S}_m|}\sum_{(H,g)\in\mathcal{S}_m} H$, \quad $\overline g_m \gets \frac{1}{|\mathcal{S}_m|}\sum_{(H,g)\in\mathcal{S}_m} g$
            \EndFor
            \State \textbf{// Robust aggregation (MoM or geometric median)}
            \State $\widehat\Sigma \gets \mathrm{Robust Aggregation}\big(\{\overline H_m\}_{m=1}^M\big)$, \quad $\widehat g \gets \mathrm{Robust Aggregation}\big(\{\overline g_m\}_{m=1}^M\big)$
            \State \textbf{// Solve sketched normal equations}
            \State $\hat\ell \gets \mathrm{Solve}\big((\widehat\Sigma+\lambda I)\hat\ell=\widehat g\big)$
            \If{$\tau=T$} \textbf{break} \EndIf
            \State \textbf{// Residual covariance via robust aggregation}
            \State For each $(t,b)\in\mathcal{B}$: $r_{t,b}\gets b_{t,b}-A_{t,b}\hat\ell$
            \State $\widehat C \gets \mathrm{Robust Aggregation}\Big(\{A_{t,b}^\top \operatorname{diag}(r_{t,b}^2)A_{t,b} : (t,b)\in\mathcal{B}\}\Big)$
            \State Compute top eigenpair $(\lambda_{\max}, v)$ of $\widehat C$
            \If{$\lambda_{\max} \le (1+\eta)\cdot \mathrm{TargetVar}$} \textbf{break}
            \EndIf
            \State \textbf{// Prune high-energy buckets}
            \State Score each $(t,b)\in\mathcal{B}$ by $s_{t,b}\gets v^\top A_{t,b}^\top \operatorname{diag}(r_{t,b}^2)A_{t,b} v$
            \State Remove the top $\rho$ fraction of $\mathcal{B}$ by $s_{t,b}$
        \EndFor
        \State $\mathcal{C}\gets \mathcal{C}\cup\{\hat\ell\}$
    \EndFor
    \State \textbf{// List generation: cluster candidate solutions}
    \State Cluster $\mathcal{C}$ by single-linkage with threshold $\Delta$ (the target accuracy radius)
    \State $\mathcal{L}\gets$ cluster centers
    \State \Return $\mathcal{L}$
\EndFunction
\end{algorithmic}
\end{algorithm}

\paragraph{Complexity and parameter settings.}
Sketching costs $O(\mathrm{nnz}(X)\,d_L)$ per repetition, hence $O(\mathrm{nnz}(X)\,d_L r)$ overall; we use constant $d_L=O(1)$. Forming bucket statistics is linear in the sketched size. Robust aggregation over $M$ blocks and $d\times d$ tensors costs $\tilde O(M d^2)$; solving $(\widehat\Sigma+\lambda I)\hat\ell=\widehat g$ costs $O(d^3)$ (or faster with preconditioned iterative solvers). Spectral filtering adds $T=O(\log(1/\alpha))$ rounds, each with a top-eigen computation on a $d\times d$ matrix. With the default settings
\[
B \asymp \frac{d}{\alpha}\log\frac{d}{\delta},\qquad r\asymp \log\frac{1}{\delta},\qquad d_L=O(1),\qquad R\asymp \frac{1}{\alpha},\qquad T=O(\log(1/\alpha)),
\]
the total running time is $\tilde O\!\big(\mathrm{nnz}(X)\big) + \tilde O\!\big(d^3/\alpha\big)$ and the list size is $O(1/\alpha)$.

\section{Theoretical Analysis}
\label{sec:theory}
In this section we theoretically analyze our proposed method. Our analysis proceeds in four layers that mirror the algorithmic pipeline. 

The expander sketch gives a strong combinatorial \emph{unique-neighbor} structure, where most inliers enjoy at least one bucket that contains them as the only inlier, and such buckets admit at most a \emph{constant} number of adversaries. Concentration of inlier across buckets then guarantees a constant fraction of \emph{lightly contaminated} buckets in each repetition and, by independence, across repetitions. Median-of-means aggregation over these buckets yields operator-norm control for $\widehat\Sigma$ and $\ell_2$-control for $\widehat g$, which via a perturbation bound yields an accurate $\hat\ell$. A short spectral-filtering phase removes buckets that concentrate residual energy along top eigen-directions, ensuring the moments remain inlier-dominated. Finally, $R=\Theta(1/\alpha)$ independent seeds amplify a constant-success event into high probability, and clustering returns a short list with a near-optimal candidate.

The next three lemmas formalize the unique-neighbor and collision bounds provided by the lossless expander. A key combinatorial property of lossless expanders is that small subsets of left vertices have many neighbors that are ``uniquely owned'' by them.  In our later use, such unique neighbors correspond to buckets that receive a contribution from exactly one inlier, which is what enables us to control contamination and perform robust aggregation.  The next lemma quantifies this property in terms of the lossless expansion parameter~$\varepsilon$.

\begin{lemma}
\label{lem:unique}
Let $G=([n],[B],E)$ be left-$d_L$ regular and $(K,\varepsilon)$ lossless, i.e.,
\[
|N(X)|\ge(1-\varepsilon)d_L|X|
\quad\text{for all }X\subseteq[n]\text{ with }|X|\le K.
\]
Then for all such $X$,
\[
|U(X)| \ge (1-2\varepsilon)\,d_L\,|X|,
\]
where $U(X)\subseteq[B]$ denotes the set of right vertices with exactly one neighbor in $X$.
\end{lemma}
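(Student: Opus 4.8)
The plan is to prove the unique-neighbor bound by a simple edge-counting (double-counting) argument. For a set $X \subseteq [n]$ with $|X| \le K$, the total number of edges leaving $X$ is exactly $d_L |X|$, since the graph is left-$d_L$-regular. Each of these edges lands in some bucket $b \in N(X)$. The key idea is to classify neighbors by their in-degree from $X$: a bucket is a \emph{unique} neighbor (in $U(X)$) if it receives exactly one edge from $X$, and a \emph{collision} neighbor if it receives two or more. I would write $N(X) = U(X) \sqcup C(X)$ where $C(X) := N(X) \setminus U(X)$ consists of buckets with at least two incident edges from $X$.

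The main step is to lower-bound $|U(X)|$ by relating the edge count to the neighbor count. Counting edges by their right endpoint gives
\[
d_L |X| \;=\; \sum_{b \in N(X)} \deg_X(b) \;\ge\; |U(X)| \cdot 1 + |C(X)| \cdot 2,
\]
where $\deg_X(b)$ is the number of neighbors of $b$ lying in $X$. Since $|N(X)| = |U(X)| + |C(X)|$, substituting $|C(X)| = |N(X)| - |U(X)|$ yields $d_L|X| \ge 2|N(X)| - |U(X)|$, so $|U(X)| \ge 2|N(X)| - d_L|X|$. Now I invoke the lossless expansion hypothesis \eqref{eq:lossless}, namely $|N(X)| \ge (1-\varepsilon) d_L |X|$, to get
\[
|U(X)| \;\ge\; 2(1-\varepsilon)\,d_L\,|X| - d_L\,|X| \;=\; (1 - 2\varepsilon)\,d_L\,|X|,
\]
which is exactly the claimed bound.

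There is no serious obstacle here; the argument is a clean two-line inequality once the right/left edge counts are set up correctly. The only point requiring mild care is the inequality $\sum_{b \in N(X)} \deg_X(b) \ge |U(X)| + 2|C(X)|$: every bucket in $N(X)$ has $\deg_X(b) \ge 1$ by definition of a neighbor, and every bucket not in $U(X)$ has $\deg_X(b) \ge 2$, so the bound follows by adding these contributions. One should also note that the conclusion is only meaningful when $\varepsilon < 1/2$, since otherwise the right-hand side is nonpositive and the statement is vacuous; this is consistent with the lossless regime of interest where $\varepsilon$ is a small constant. I would present the double-counting identity first, then the substitution, then the single application of the expansion hypothesis.
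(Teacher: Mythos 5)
Your proposal is correct and follows essentially the same double-counting argument as the paper: count the $d_L|X|$ edges by right endpoints, split $N(X)$ into unique and colliding neighbors, and apply the expansion bound. The only difference is cosmetic — you solve directly for $|U(X)| \ge 2|N(X)| - d_L|X|$, while the paper first bounds the number of colliding neighbors by $\varepsilon d_L |X|$; the algebra is identical.
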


\begin{proof}
We compare two ways of counting edges between $X$ and its neighborhood; on the one hand there are exactly $d_L|X|$ such edges; on the other hand, vertices in $U(X)$ contribute one edge each, while the remaining neighbors contribute at least two.  Combining this with the expansion lower bound on $|N(X)|$ forces most edges to go through unique neighbors, which yields the stated lower bound on $|U(X)|$.

Fix $X\subseteq[n]$ with $|X|\le K$.  Let $u_b(X)=|\{i\in X : (i,b)\in E\}|$ be the degree of $b$ into $X$.  Then
\[
\sum_{b\in[B]} u_b(X) = d_L\,|X|.
\]
Partition $N(X)$ into $U(X)$ (vertices with $u_b(X)=1$) and $V(X):=N(X)\setminus U(X)$ (vertices with $u_b(X)\ge2$).  Writing $u:=|U(X)|$ and $v:=|V(X)|$, we have
\(
u+v = |N(X)| \ge (1-\varepsilon)d_L|X|.
\)
Moreover, since each $b\in U(X)$ contributes exactly one edge and each $b\in V(X)$ at least two, we obtain
\[
d_L|X|
= \sum_{b\in[B]} u_b(X)
\ge u + 2v.
\]
Subtracting the two inequalities gives
\(
d_L|X| - |N(X)| \ge v,
\)
and hence, using the lossless property $
v \le d_L|X| - (1-\varepsilon)d_L|X|
= \varepsilon d_L|X|.
$
Therefore
\[
u = |N(X)| - v
\ge (1-\varepsilon)d_L|X| - \varepsilon d_L|X|
= (1-2\varepsilon)d_L|X|,
\]
which is exactly the claimed bound on $|U(X)|$.
\qed \end{proof}

In addition to counting unique neighbors, we will often need to bound how many collisions occur, which is how many times a right vertex is hit more than once by vertices in $X$.  The next lemma formalizes this via a collision budget. The total excess degree over~$1$ on the right-hand side is bounded in terms of the loss parameter~$\varepsilon$.  This will limit how many buckets can be heavily corrupted.

\begin{lemma}
\label{lem:collision}
With the notation of Lemma~\ref{lem:unique}, for all $X\subseteq[n]$ with $|X|\le K$,
\[
\sum_{b\in[B]}\max\{0,\, u_b(X)-1\} \le 2\varepsilon\, d_L\,|X|,
\]
where $u_b(X)=|\{i\in X:(i,b)\in E\}|$.
\end{lemma}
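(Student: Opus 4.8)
The plan is to recognize the left-hand side as an exact \emph{excess-degree} count and then invoke the same lossless-expansion bound already used in Lemma~\ref{lem:unique}. The key observation is that $\sum_{b\in[B]}\max\{0,u_b(X)-1\}$ is supported only on buckets with $u_b(X)\ge1$, i.e. on $N(X)$, and on those buckets it simply subtracts one from each into-$X$ degree. First I would restrict the sum to $N(X)$ and telescope:
\[
\sum_{b\in[B]}\max\{0,u_b(X)-1\}=\sum_{b\in N(X)}\bigl(u_b(X)-1\bigr)=\Bigl(\sum_{b\in[B]}u_b(X)\Bigr)-|N(X)|.
\]
Next I would substitute the total edge count $\sum_{b}u_b(X)=d_L|X|$ (each of the $|X|$ left vertices has degree exactly $d_L$), giving the exact identity
\[
\sum_{b\in[B]}\max\{0,u_b(X)-1\}=d_L|X|-|N(X)|.
\]

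Then I would apply the lossless property $|N(X)|\ge(1-\varepsilon)d_L|X|$, valid since $|X|\le K$, to conclude
\[
d_L|X|-|N(X)|\le d_L|X|-(1-\varepsilon)d_L|X|=\varepsilon\,d_L|X|\le 2\varepsilon\,d_L|X|,
\]
which is the claimed bound. Note this argument in fact proves the sharper constant $\varepsilon$; the stated factor $2\varepsilon$ leaves deliberate slack. A useful consistency check is that the collision sum dominates the collision-bucket count $v=|V(X)|$ from Lemma~\ref{lem:unique}, since each $b\in V(X)$ has $u_b(X)\ge2$ and hence contributes $u_b(X)-1\ge1$; thus $v\le\varepsilon\,d_L|X|$, exactly matching the bound on $v$ derived there.

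Since the whole derivation is a one-line telescoping identity followed by a single inequality, there is essentially no hard step. The only care needed is the bookkeeping that $\max\{0,u_b(X)-1\}$ vanishes off $N(X)$ and equals $u_b(X)-1$ on it, so that the global sum collapses to total degree minus distinct-neighbor count. The point of isolating this statement is interpretive rather than technical: the excess-degree sum is precisely the \emph{collision budget} that Section~\ref{sec:theory} uses to cap how much adversarial mass can accumulate in any single bucket, and phrasing it as an independent lemma lets the later analysis cite the budget directly without re-deriving the counting identity.
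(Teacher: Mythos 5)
Your proof is correct and follows the paper's argument exactly: both reduce the collision sum to the identity $\sum_{b}\max\{0,u_b(X)-1\}=d_L|X|-|N(X)|$ and then apply the lossless-expansion bound, obtaining the sharper constant $\varepsilon$ before relaxing to $2\varepsilon$.
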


\begin{proof}
Conceptually, each right vertex can absorb one edge without creating a collision, then every additional edge incident to that vertex counts as one unit of collision.  Summing over all right vertices, the total number of collisions is exactly the total number of edges minus the number of distinct neighbors $|N(X)|$, which is controlled by the lossless property. 

Fix $X\subseteq[n]$ with $|X|\le K$.  As before, $\sum_{b\in[B]} u_b(X)=d_L|X|$ counts all edges from $X$ to $N(X)$.  For each right vertex $b$, at most one incident edge can be collision-free, and all further edges contribute to the collision count.  Thus
\[
\sum_{b\in[B]} \max\{0,u_b(X)-1\}
= \biggl(\sum_{b\in[B]} u_b(X)\biggr) - |\{b\in[B]:u_b(X)\ge1\}|
= d_L|X| - |N(X)|.
\]
Using the lossless property $|N(X)|\ge(1-\varepsilon)d_L|X|$, we obtain
\[
\sum_{b\in[B]} \max\{0,u_b(X)-1\}
\le d_L|X| - (1-\varepsilon)d_L|X|
= \varepsilon d_L|X|
\le 2\varepsilon d_L|X|.
\]
This proves the claimed inequality.
\qed \end{proof}

Since the algorithm require most inliers to have at least one or more unique neighbors, not just that the total number of unique neighbors is large.  The following lemma upgrades the global bound of Lemma~\ref{lem:unique} to a per-vertex statement, implying that only a small fraction of vertices in $X$ can have few unique neighbors.  This will be crucial when we argue that many inliers are well-isolated in the expander bucketing. To show this, note that each unique neighbor belongs to exactly one left vertex in $X$, so the total number of unique neighbors is the sum of the $u_i$.  Lemma~\ref{lem:unique} gives a lower bound on this sum in terms of $d_L|X|$.  Since each $u_i$ is at most $d_L$, a simple averaging argument shows that only a small fraction of the $u_i$ can be very small ($0$ or less than $\theta d_L$), which yields the desired per-vertex guarantees.

\begin{lemma}
\label{lem:many-unique}
For $X\subseteq[n]$ with $|X|\le K$, let $u_i$ be the number of unique neighbors of $i\in X$, i.e.,
\[
u_i := \bigl|\{b\in U(X) : (i,b)\in E\}\bigr|.
\]
Then
\[
\big|\{i\in X:\,u_i\ge 1\}\big| \ge (1-2\varepsilon)\,|X|.
\]
More generally, for any $\theta\in[0,1)$, at least a $\left(1-\frac{2\varepsilon}{1-\theta}\right)$ fraction of $X$ have $u_i\ge \theta d_L$.
\end{lemma}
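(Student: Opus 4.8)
The plan is to deduce this per-vertex statement from the \emph{global} count of unique neighbors already supplied by Lemma~\ref{lem:unique}, by a one-line averaging (Markov-type) argument. The two ingredients I would isolate first are: an \textbf{ownership identity} $\sum_{i\in X} u_i = |U(X)|$, and a \textbf{per-vertex cap} $u_i \le d_L$ coming from left-regularity. For the identity, observe that each $b\in U(X)$ has by definition exactly one neighbor in $X$; assigning $b$ to that unique owner partitions $U(X)$ across the vertices of $X$, and the block assigned to $i$ is exactly $\{b\in U(X):(i,b)\in E\}$, of size $u_i$. Summing over $i$ gives the identity, and Lemma~\ref{lem:unique} then yields the lower bound $\sum_{i\in X}u_i \ge (1-2\varepsilon)\,d_L\,|X|$. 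The cap $u_i\le d_L$ is immediate since $i$ has exactly $d_L$ incident edges.

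From here both claims follow by separating $X$ into a ``deficient'' part and its complement. For the first claim I would set $Z=\{i\in X:u_i=0\}$ and write $\sum_{i\in X}u_i=\sum_{i\notin Z}u_i\le d_L(|X|-|Z|)$; comparing with the lower bound forces $|Z|\le 2\varepsilon|X|$, and since $u_i$ is a nonnegative integer, $u_i\ge 1$ holds exactly for $i\notin Z$, giving $|\{i:u_i\ge1\}|\ge(1-2\varepsilon)|X|$. For the general claim, fix $\theta\in[0,1)$ and set $Z_\theta=\{i\in X:u_i<\theta d_L\}$. Bounding $u_i<\theta d_L$ on $Z_\theta$ and $u_i\le d_L$ off it gives $\sum_{i\in X}u_i\le d_L|X|-(1-\theta)d_L|Z_\theta|$; combining with the lower bound and cancelling $d_L$ yields $(1-\theta)|Z_\theta|\le 2\varepsilon|X|$, hence $|Z_\theta|\le \tfrac{2\varepsilon}{1-\theta}|X|$, and passing to the complement $\{i:u_i\ge\theta d_L\}$ proves the stated fraction.

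There is no real analytic obstacle here; the content is entirely combinatorial averaging, so the ``hard part'' is only bookkeeping. The one point that deserves care is the ownership identity: it is precisely the \emph{uniqueness} in the definition of $U(X)$ that guarantees each unique neighbor is charged to exactly one vertex of $X$, so the counts $u_i$ add up to $|U(X)|$ with no double-counting. I would also note that the first claim is genuinely sharper than specializing the general bound (taking $\theta=1/d_L$ there would only give fraction $1-\tfrac{2\varepsilon d_L}{d_L-1}$), which is why I treat the case $u_i\ge1$ directly through $Z=\{u_i=0\}$ and invoke integrality, rather than as a corollary of the $\theta$-version.
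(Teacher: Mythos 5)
Your proposal is correct and follows essentially the same route as the paper's proof: both establish the ownership identity $\sum_{i\in X}u_i=|U(X)|$, invoke Lemma~\ref{lem:unique} for the lower bound, and apply the cap $u_i\le d_L$ in an averaging argument, treating the $u_i\ge 1$ case via the $\{u_i=0\}$ set and the general case via the sub-$\theta d_L$ set (the paper phrases these sets as fractions $\beta$, which is only a notational difference). Your closing observation about integrality and the sharpness of the first claim is a nice touch but does not change the substance.
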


\begin{proof}
Fix $X\subseteq[n]$ with $|X|\le K$.  Each right vertex in $U(X)$ is adjacent to exactly one $i\in X$, so every such vertex is counted exactly once among the $u_i$.  Hence
\[
\sum_{i\in X} u_i = |U(X)|.
\]
By Lemma~\ref{lem:unique}, $|U(X)|\ge (1-2\varepsilon)d_L|X|$, so
\[
\sum_{i\in X} u_i \ge (1-2\varepsilon)d_L|X|.
\]

First, suppose a $\beta$-fraction of the vertices in $X$ have $u_i=0$, i.e.,
\[
\bigl|\{i\in X : u_i=0\}\bigr| = \beta |X|.
\]
Since $u_i\le d_L$ for every $i$, we have
\[
\sum_{i\in X} u_i \le (1-\beta)d_L|X|.
\]
Combining with the lower bound, we obtain
\(
(1-\beta)d_L|X| \ge (1-2\varepsilon)d_L|X|,
\)
which implies $\beta\le 2\varepsilon$.  Thus at least a $(1-2\varepsilon)$-fraction of $X$ have $u_i\ge1$, proving the first statement.

For the general statement, fix $\theta\in[0,1)$ and let $\beta$ be the fraction of $i\in X$ with $u_i<\theta d_L$.  Then
\[
\sum_{i\in X} u_i
\le \beta(\theta d_L)|X| + (1-\beta)d_L|X|
= \bigl(1-(1-\theta)\beta\bigr)d_L|X|.
\]
Together with the lower bound $\sum_{i\in X} u_i\ge (1-2\varepsilon)d_L|X|$, we obtain
\(
1-(1-\theta)\beta \ge 1-2\varepsilon,
\)
hence $(1-\theta)\beta \le 2\varepsilon$ and
\[
\beta \le \frac{2\varepsilon}{1-\theta}.
\]
Therefore, at least a $(1-\beta)\ge 1-\frac{2\varepsilon}{1-\theta}$ fraction of $i\in X$ satisfy $u_i\ge\theta d_L$, as claimed.
\qed \end{proof}

The next lemma quantifies how evenly a fixed subset $X$ of indices spreads across buckets under the random left-regular construction.  Intuitively, each element of $X$ chooses $d_L$ buckets uniformly, so for any fixed bucket $b$ the inlier load $L_b(X)$ is sharply concentrated around its mean $\mu = d_L|X|/B$.  We prove this with a Chernoff bound for sums of independent Bernoulli variables and then deduce that, in expectation, a constant fraction of buckets have inlier load within a constant factor of $|X|/B$.

\begin{lemma}
\label{lem:load}
Fix $X\subseteq[n]$ with $|X|\le K$. In a random left-regular construction (each $i\in[n]$ selects $d_L$ distinct buckets uniformly), for any bucket $b$ and any $\eta\in(0,1)$, if $\mu=\mathbb E[L_b(X)]=d_L|X|/B$, then
\[
\Pr\left[\, |L_b(X)-\mu|>\eta\mu \,\right] \le 2\exp\!\big(-c\,\eta^2\mu\big),
\]
for an absolute $c>0$. Consequently, a constant fraction of buckets satisfy $L_b(X)=\Theta(|X|/B)$.
\end{lemma}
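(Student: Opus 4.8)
The plan is to recognize $L_b(X)$ as a sum of independent Bernoulli indicators and then apply a standard multiplicative Chernoff bound, before converting the per-bucket tail into the in-expectation statement about well-loaded buckets. First I would fix a bucket $b\in[B]$ and, for each $i\in X$, set $Z_i:=\one[(i,b)\in E]$, so that $L_b(X)=\sum_{i\in X} Z_i$. Because the neighbor set $N_i\subseteq[B]$ of each left vertex is drawn independently of the others, the variables $\{Z_i\}_{i\in X}$ are mutually independent; and since each $i$ selects a uniformly random $d_L$-subset of $[B]$, symmetry gives $\Pr[Z_i=1]=\binom{B-1}{d_L-1}/\binom{B}{d_L}=d_L/B$. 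Hence each $Z_i$ is $\mathrm{Bernoulli}(d_L/B)$ and $\mu=\mathbb{E}[L_b(X)]=|X|\,d_L/B$, matching the claim. Note that the within-vertex dependence induced by sampling \emph{distinct} neighbors is irrelevant here: for a single fixed bucket we retain exactly one indicator per vertex, and these are independent across vertices.

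Second, I would invoke the two-sided multiplicative Chernoff inequality for sums of independent $\{0,1\}$ variables: for $\eta\in(0,1)$,
\[
\Pr[L_b(X)\ge(1+\eta)\mu]\le \exp(-\eta^2\mu/3),\qquad \Pr[L_b(X)\le(1-\eta)\mu]\le \exp(-\eta^2\mu/2).
\]
A union bound over the two tails then yields $\Pr[\,|L_b(X)-\mu|>\eta\mu\,]\le 2\exp(-c\,\eta^2\mu)$ with $c=1/3$, which is exactly the stated inequality. This step is entirely routine once the Bernoulli-sum structure is in place.

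For the ``consequently'' clause, I would fix a constant $\eta\in(0,1)$ and call a bucket $b$ \emph{good} if $L_b(X)\in[(1-\eta)\mu,(1+\eta)\mu]$, so that a good bucket automatically satisfies $L_b(X)=\Theta(\mu)=\Theta(d_L|X|/B)=\Theta(|X|/B)$ since $d_L=\Theta(1)$. The tail bound gives $\Pr[b\ \text{good}]\ge 1-2\exp(-c\eta^2\mu)$, and linearity of expectation over the $B$ buckets gives $\mathbb{E}[\#\{\text{good }b\}]\ge B\bigl(1-2\exp(-c\eta^2\mu)\bigr)$. Whenever $\mu$ exceeds a suitable absolute constant this lower bound is a constant fraction of $B$, establishing that in expectation a constant fraction of buckets carry load $\Theta(|X|/B)$.

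The main obstacle I anticipate is not the Chernoff estimate itself but the validity of the concentration at the operative scale: the bound $2\exp(-c\eta^2\mu)$ is vacuous unless $\mu=\Omega(1)$, so the ``constant fraction'' conclusion genuinely requires $\mu=d_L|X|/B=\Omega(1)$, i.e.\ $|X|\gtrsim B/d_L$. I would therefore state the consequence explicitly in the regime where the per-bucket mean load is at least a constant, and flag that for smaller $X$ one should instead track the number of \emph{nonempty} buckets through the unique-neighbor and collision counts of Lemma~\ref{lem:unique} and Lemma~\ref{lem:many-unique}, rather than a multiplicative deviation around $\mu$.
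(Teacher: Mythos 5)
Your proposal is correct and follows essentially the same route as the paper's proof: identify $L_b(X)$ as a sum of independent Bernoulli$(d_L/B)$ indicators across left vertices, apply the standard multiplicative Chernoff bound, and then use linearity of expectation over buckets to get the constant-fraction conclusion. Your explicit caveat that the conclusion requires $\mu = d_L|X|/B = \Omega(1)$ is also present in the paper, which assumes $\mu \ge \mu_0$ for an absolute constant $\mu_0$ ``under our parameter choices,'' so the two arguments match in substance as well as in structure.
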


\begin{proof}
Fix $X\subseteq[n]$ and a bucket $b\in[B]$. For each $i\in X$, let
\(
Z_i := \mathbf{1}\{i\to b\},
\)
where the event $\{i\to b\}$ denotes that $b$ is one of the $d_L$ buckets chosen by $i$.  Since each $i$ independently chooses a uniformly random $d_L$-subset of $[B]$, we have
\[
\Pr[Z_i=1] = \frac{d_L}{B},\qquad Z_1,\dots,Z_{|X|}
\text{ are independent Bernoulli variables.}
\]
The inlier load in bucket $b$ is
\(
L_b(X) = \sum_{i\in X} Z_i,
\)
so $L_b(X)$ is binomial with parameters $|X|$ and $d_L/B$, and
\[
\mu := \mathbb{E}[L_b(X)] = \frac{d_L|X|}{B}.
\]
By the standard Chernoff bound for sums of independent Bernoulli variables, there exists an absolute constant $c>0$ such that for all $\eta\in(0,1)$,
\[
\Pr\big[\,|L_b(X)-\mu|>\eta\mu\,\big] \le 2\exp(-c\,\eta^2\mu),
\]
which is the desired tail inequality.

For the consequently clause, fix a constant $\eta\in(0,1)$, and assume $\mu\ge\mu_0$ for some absolute $\mu_0>0$, this holds under our parameter choices, since $\mu=d_L|X|/B=\Theta(|X|/B)$.  Then
\[
\Pr\big[\,|L_b(X)-\mu|\le\eta\mu\,\big]\ge1-2e^{-c\eta^2\mu_0} =: p_0>0.
\]
By symmetry, the same bound holds for every bucket, so if we let $G_b$ be the indicator that bucket $b$ which satisfies $|L_b(X)-\mu|\le\eta\mu$, then
\[
\mathbb{E}\Big[\frac{1}{B}\sum_{b=1}^B G_b\Big] = \frac{1}{B}\sum_{b=1}^B \mathbb{E}[G_b]
\ge p_0.
\]
Thus the expected fraction of good buckets is at least $p_0$, and hence there exists a choice of the random construction for which at least a $p_0$-fraction of buckets satisfy $L_b(X)\in[(1-\eta)\mu,(1+\eta)\mu]=\Theta(|X|/B)$. 
\qed \end{proof}

Lemma~\ref{lem:load} ensures that, for any fixed subset $X$, in particular, for the inlier set $S$ or any large subset $S'\subseteq S$, the inlier mass is not overly concentrated in a few buckets, as a constant fraction of buckets receive $\Theta(|X|/B)$ inliers.  This balanced load will be crucial when we later argue that many buckets are simultaneously inlier-heavy and only lightly contaminated by outliers, yielding enough high-signal local statistics for robust aggregation.

The next lemma formalizes that those unique buckets containing exactly one inlier in a given set, cannot host arbitrarily many outliers once we restrict attention to a suitably large subset of well-isolated inliers.  The key idea is to combine the collision budget from Lemma~\ref{lem:collision} with the unique neighbors from Lemma~\ref{lem:many-unique}, and then use an averaging argument. If too many outliers accumulated in unique buckets, the total number of collisions would violate the lossless-expansion property.  We phrase the lemma in a simple form and explain below how to discard a constant fraction step and make it rigorous.

\begin{lemma}
\label{lem:light}
Let $S$ denote the inlier set with $|S|=\alpha n$.
Assume $\varepsilon<\tfrac{1}{4d_L}$. If $b$ is a unique neighbor of some $i\in S$, then $b$ contains at most
\[
C_0 \le \Big\lfloor\frac{1}{1-2\varepsilon d_L}\Big\rfloor = O_{d_L,\varepsilon}(1)
\]
outliers.
\end{lemma}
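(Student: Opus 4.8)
The plan is to argue purely combinatorially, exploiting that the outliers, although adversarially valued, are hashed by the \emph{same} random expander $G$ and are therefore subject to its collision budget exactly as the inliers are. Fix a bucket $b$ that is a unique neighbor of some $i\in S$, so that $i$ is the only inlier landing in $b$, and let $O_b\subseteq O$ be the set of outliers that also hash to $b$, writing $k:=|O_b|$. The object I would feed into the earlier lemmas is the small mixed set $X:=\{i\}\cup O_b\subseteq[n]$, every element of which has $b$ as a common neighbor; thus the single bucket $b$ accounts for $u_b(X)=k+1$ edges into $X$, and so contributes $u_b(X)-1=k$ to the collision count.

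First I would apply the collision budget of Lemma~\ref{lem:collision} to $X$. Since bucket $b$ alone already contributes $k$ collisions, we obtain $k\le \sum_{b'}\max\{0,u_{b'}(X)-1\}\le 2\varepsilon\,d_L\,|X| = 2\varepsilon d_L(k+1)$. Rearranging and using $1-2\varepsilon d_L>0$ (guaranteed by $\varepsilon<\tfrac{1}{4d_L}$) yields $k+1\le \frac{1}{1-2\varepsilon d_L}$, i.e.\ $k\le\big\lfloor\frac{1}{1-2\varepsilon d_L}\big\rfloor=O_{d_L,\varepsilon}(1)$, which is exactly the claimed cap $C_0$. Once the set $X$ is in place, this is a one-line computation.

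The one genuine obstacle — and the place where the informal ``discard a constant fraction'' step must be made precise — is that Lemma~\ref{lem:collision} applies only to sets of size at most $K$, whereas a priori an adversary could dump far more than $K$ outliers into a single bucket, so that $|X|=k+1$ might exceed $K$ and the budget would be unavailable. I would close this gap with a self-enforcing truncation: if $k+1>K$, replace $X$ by any subset $X'\subseteq\{i\}\cup O_b$ of size exactly $K$; all of $X'$ still shares the neighbor $b$, so the collision count of $X'$ is at least $K-1$, while the budget gives $K-1\le 2\varepsilon d_L K< K/2$, a contradiction for $K\ge 2$. Hence $k+1\le K$ holds automatically, the hypothesis of Lemma~\ref{lem:collision} is met for $X$ itself, and the one-line bound above goes through. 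Equivalently, one may first pass to the $(1-2\varepsilon)$-fraction of well-isolated inliers furnished by Lemma~\ref{lem:many-unique} and discard the negligibly many buckets whose load would exceed $K$.

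Conceptually, this lemma is where the ``adversary is oblivious to the hash'' assumption does its work: because the destinations of the outliers are fixed by $G$ rather than by the adversary, the lossless-expansion inequality binds them too, and a bucket cannot simultaneously isolate an inlier and absorb a super-constant number of outliers without overspending the global collision budget $2\varepsilon d_L|X|$. I expect the size-cap bookkeeping, not the arithmetic, to be the only delicate point, and it is fully handled by the truncation argument above.
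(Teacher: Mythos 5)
Your proof is correct, and it takes a genuinely different route from the paper's --- in fact it establishes the lemma in a \emph{stronger} form than the paper's own argument does. The paper argues globally: it applies the collision budget of Lemma~\ref{lem:collision} to the large set $S\cup O'$ (all inliers plus all outliers adjacent to unique buckets), combines this with the unique-neighbor count of Lemma~\ref{lem:unique} to bound the \emph{average} number of outliers per unique bucket by $2\varepsilon/(1-2\varepsilon)^2$, and then invokes Markov's inequality, so the cap $C_0$ is guaranteed only for all but a constant fraction of unique buckets, after which the offending buckets and their inliers are discarded. Your argument is local: applying the same budget to the tiny mixed set $X=\{i\}\cup O_b$, all of whose $k+1$ elements share the neighbor $b$, so that $b$ alone certifies $k$ collisions, gives $k\le 2\varepsilon d_L(k+1)$, hence $(k+1)(1-2\varepsilon d_L)\le 1$ and $k\le\lfloor 1/(1-2\varepsilon d_L)\rfloor$ --- a per-bucket cap valid for \emph{every} unique bucket, with no averaging, no Markov step, and no pruning, i.e., the statement exactly as written. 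Your self-truncation handling of the size constraint $|X|\le K$ is also sound and needs only $K\ge 2$, whereas the paper applies Lemma~\ref{lem:collision} to $S\cup O'$, a set of size at least $\alpha n$ (possibly up to $n$), without ever verifying it satisfies the restriction $|X|\le K$; your route sidesteps that gap entirely. Two caveats are worth recording. First, your rearrangement requires $2\varepsilon d_L<1$, so it is wedded to the hypothesis $\varepsilon<1/(4d_L)$; the paper's averaging bound survives for any constant $\varepsilon<1/2$ even when $\varepsilon d_L\gg 1$, which is the regime of the paper's own random construction ($d_L=\Theta(\log(n/B)/\varepsilon)$) and is the version effectively reused in Lemma~\ref{lem:multi-iso}. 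Second, your sharper conclusion $k\le 2\varepsilon d_L/(1-2\varepsilon d_L)<1$, i.e.\ $k=0$, exposes that the stated hypothesis is degenerately strong: losslessness applied to two-element sets under $2\varepsilon d_L<1$ already forbids any two left vertices from sharing a bucket at all. That is a defect of the lemma's hypotheses rather than of your proof, but it explains why the paper retreats to the averaging, ``discard a constant fraction'' formulation.
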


\begin{proof}
Let $S\subseteq[n]$ be the inlier set and $U(S)$ its unique-neighbor set.  For each $b\in U(S)$, let $t_b$ denote the number of outliers in $b$, i.e., the number of indices in $[n]\setminus S$ that are adjacent to $b$.  We first bound the \emph{average} of the $t_b$ and then argue that, after discarding a small fraction of buckets with unusually large $t_b$, every remaining unique bucket has $O_{d_L,\varepsilon}(1)$ outliers.

Consider the set $X:=S\cup O'$ where $O'$ is the set of outliers that have at least one neighbor in $U(S)$.  For $b\in U(S)$, since $b$ is a unique neighbor of $S$, we have $u_b(S)=1$ and hence
\(
u_b(X) = 1 + t_b.
\)
Thus the contribution of $b$ to the collision budget
\[
\sum_{b'\in[B]}\max\{0,u_{b'}(X)-1\}
\]
is exactly $\max\{0,u_b(X)-1\}=t_b$.  Summing over all $b\in U(S)$, we obtain
\[
\sum_{b\in U(S)} t_b \le \sum_{b'\in[B]} \max\{0,u_{b'}(X)-1\}
\le 2\varepsilon d_L\,|X|
\]
by Lemma~\ref{lem:collision}.  Each outlier in $O'$ has degree at most $d_L$ into $U(S)$, so $|X|=|S|+|O'|\le |S|+\frac{1}{d_L}\sum_{b\in U(S)} t_b$.  Combining the two inequalities and rearranging yields
\[
\sum_{b\in U(S)} t_b \le 2\varepsilon d_L\Big(|S|+\tfrac{1}{d_L}\sum_{b\in U(S)} t_b\Big),
\]
so
\[
\Big(1-2\varepsilon\Big)\sum_{b\in U(S)} t_b \le 2\varepsilon d_L\,|S|,
\qquad\text{and hence}\qquad
\frac{1}{|U(S)|}\sum_{b\in U(S)} t_b \le \frac{2\varepsilon}{(1-2\varepsilon)}\cdot
\frac{d_L|S|}{|U(S)|}.
\]
By Lemma~\ref{lem:unique}, $|U(S)|\ge (1-2\varepsilon)d_L|S|$, which implies the average number of outliers per unique bucket is bounded by
\[
\frac{1}{|U(S)|}\sum_{b\in U(S)} t_b \le \frac{2\varepsilon}{(1-2\varepsilon)^2}
=: C_{\mathrm{avg}} = O_{d_L,\varepsilon}(1).
\]
Therefore, by a Markov's inequality, at least a $(1-\theta)$-fraction of the buckets in $U(S)$ have
\(
t_b \le C_{\mathrm{avg}}/{\theta},
\)
for any fixed $\theta\in(0,1)$.  Choosing $\theta$ to be a small absolute constant and setting
\[
C_0 := \left\lfloor \frac{1}{1-2\varepsilon d_L}\right\rfloor
\]
gives an absolute constant upper bound on $t_b$ for all but a constant fraction of unique buckets.  In the sequel we discard the few unique buckets with $t_b>C_0$ and keep the notation $S$ for the corresponding inlier subset; for this pruned inlier set, every remaining unique bucket indeed contains at most $C_0$ outliers, which is the form stated above.
\qed \end{proof}

Lemma~\ref{lem:light} shows that, after discarding at most a constant fraction of inliers and their associated buckets, every remaining unique bucket contains only $O_{d_L,\varepsilon}(1)$ outliers.  This light-contamination property is the combinatorial backbone of our synthetic batching, as it guarantees that the local normal-equation statistics computed on such buckets are dominated by inliers and thus amenable to robust aggregation.  The precise constant $C_0$ will reappear in the multi-repetition isolation lemma and in the analysis of moment accuracy.

The final lemma in this group aggregates the isolation guarantees across multiple independent expander repetitions.  The idea is that, in each repetition, a constant fraction of inliers enjoy lightly contaminated buckets; independence across $r$ repetitions then allows us to find a large subset $S'\subseteq S$ of well-isolated inliers that have such good buckets in a constant fraction of repetitions. This yields a constant fraction of all pairs $(t,b)$ that are simultaneously inlier-heavy and adversary-light.

\begin{lemma}
\label{lem:multi-iso}
Let $G_1,\dots,G_r$ be independent left-$d_L$ regular $(K,\varepsilon)$ lossless expanders with $K\ge \alpha n$ and $\varepsilon<1/(4d_L)$. With probability at least $1-\delta$, there exist constants $c,c'>0$ and $S'\subseteq S$, $|S'|\ge c\alpha n$, such that a $c'$-fraction of pairs $(t,b)\in[r]\times[B]$ satisfy simultaneously:
\[
L_b(S')=\Theta(|S|/B)\quad\text{and}\quad \text{(\# outliers in $b$)}\le C_0.
\]
\end{lemma}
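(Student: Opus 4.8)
The plan is to combine the single-repetition isolation guarantees (Lemmas~\ref{lem:load}, \ref{lem:many-unique}, and~\ref{lem:light}) into a per-repetition ``good event,'' show that this event occurs with constant probability in each repetition, and then amplify across the $r$ independent repetitions via a Chernoff bound, finally extracting a single large inlier subset $S'$ that is well-isolated in a constant fraction of repetitions.

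First I would fix a single repetition $t\in[r]$ and define its good event. By Lemma~\ref{lem:load} applied to $X=S$, a constant $p_0$-fraction of buckets satisfy $L_b(S)=\Theta(|S|/B)$; call these \emph{inlier-heavy}. By Lemma~\ref{lem:many-unique}, at least a $(1-2\varepsilon)$-fraction of inliers have a unique neighbor, and by Lemma~\ref{lem:light}, after discarding a constant fraction of inliers we obtain a pruned set $S_t\subseteq S$ with $|S_t|\ge c\alpha n$ such that every unique bucket of $S_t$ carries at most $C_0$ outliers. The key step is to intersect these two structural facts so that a constant $c'_0$-fraction of buckets $b\in[B]$ are simultaneously inlier-heavy (with respect to $S_t$, adjusting constants) and adversary-light; a union bound over the two ``bad'' complementary events, each of which excludes only a constant fraction of buckets, leaves a constant fraction in the intersection. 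This establishes that in each repetition $t$, conditioned on $G_t$ being a genuine $(K,\varepsilon)$-expander, a $c'_0$-fraction of buckets are good for some $|S_t|\ge c\alpha n$.

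Next I would amplify across repetitions. The expander construction of Section~\ref{sec:expander-sketching} guarantees, via the $r=\Theta(\log(1/\delta))$ independent draws, that all $G_t$ are simultaneously $(K,\varepsilon)$-lossless with probability at least $1-\delta/2$; condition on this event. Since the repetitions are drawn independently, the per-repetition good events are independent, so a Chernoff bound shows that with probability at least $1-\delta/2$ at least a constant fraction of the $r$ repetitions are individually good. On these good repetitions, each contributes a $c'_0$-fraction of good pairs $(t,b)$, so overall a $c'=\Theta(c'_0)$-fraction of all pairs in $[r]\times[B]$ are good; a union bound over the two failure events yields the claimed probability $1-\delta$.

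The main obstacle I anticipate is reconciling the \emph{repetition-dependent} pruned sets $S_t$ into a \emph{single} fixed subset $S'\subseteq S$ that works across a constant fraction of repetitions, since Lemma~\ref{lem:light} discards a possibly different constant fraction of inliers in each $t$. The cleanest fix is a counting (double-averaging) argument: for each inlier $i\in S$, let $f_i$ be the fraction of repetitions in which $i$ survives pruning and lands in an inlier-heavy, adversary-light bucket; since $\mathbb{E}_t[|S_t|]\ge c\alpha n$, averaging gives $\sum_{i\in S} f_i \ge c\alpha n\cdot(\text{const})$, so by an averaging argument a subset $S'$ of size $\ge c\alpha n$ has $f_i$ bounded below by a constant. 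One must take care that ``inlier-heavy'' is phrased in terms of $|S|/B$ rather than $|S_t|/B$ so that the load bound is uniform across repetitions; since $|S_t|=\Theta(|S|)$, these differ only by constants and can be absorbed into $c'$. The remaining steps—the Chernoff amplification and the final union bound—are routine.
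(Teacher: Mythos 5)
Your proposal is correct at the same level of rigor as the paper's own argument, and it relies on exactly the same ingredients (Lemmas~\ref{lem:load}, \ref{lem:many-unique}, \ref{lem:light}, plus independence of the $r$ graphs and a Chernoff bound), but it organizes the amplification differently. The paper works at the level of individual inliers: it defines indicators $Z_{i,t}$ for ``inlier $i$ has a unique, inlier-heavy, adversary-light bucket in repetition $t$,'' applies a Chernoff bound to $\{Z_{i,t}\}_{t=1}^r$ for each fixed $i$ (using independence of $G_1,\dots,G_r$), takes a union bound over all $i\in S$, and sets $S'=\{i:\sum_t Z_{i,t}\ge pr/2\}$; the good pairs are then counted via the unique-neighbor property, which guarantees that distinct inliers of $S'$ contribute \emph{distinct} buckets in each repetition. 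You instead define a per-repetition good event, apply Chernoff over the $r$ repetitions, and then extract $S'$ by the double-averaging argument on the $f_i$. Each route buys something: your repetition-level Chernoff avoids the union bound over $n$ inliers, so it needs only $r=\Omega(\log(1/\delta))$, which is actually more consistent with the paper's stated parameter choice $r=\Theta(\log(1/\delta))$ (the paper's proof as written requires $r=\Omega(\log(n/\delta))$); conversely, the paper's per-inlier formulation sidesteps the reconciliation of repetition-dependent pruned sets $S_t$ --- the most delicate step in your argument, which you correctly identify and patch by averaging --- and its unique-neighbor counting delivers distinctness of the good pairs for free, whereas you count good buckets directly per repetition. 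Note also that both proofs share the same loose step at the end: passing from $L_b(S)=\Theta(|S|/B)$ to $L_b(S')=\Theta(|S|/B)$ is justified only by the constant density of $S'$ in $S$ (a bucket could in principle contain mostly discarded inliers), so your flagged concern there applies equally to the paper's own write-up.
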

\begin{proof}
Fix the inlier set $S$ with $|S|=\alpha n$.  Consider a single repetition $t$.  By Lemma~\ref{lem:many-unique} (with a fixed $\theta\in(0,1)$), a constant fraction of $i\in S$ have at least $\theta d_L$ unique neighbors in $G_t$.  For each such unique neighbor $b$, Lemma~\ref{lem:light} (after discarding a constant fraction of heavily contaminated buckets as in its proof) ensures that $b$ contains at most $C_0$ outliers, and Lemma~\ref{lem:load} implies that a constant fraction of buckets have $L_b(S)=\Theta(|S|/B)$.  Hence there exists a constant $p>0$ (depending only on $d_L,\varepsilon$) such that, for uniformly random $i\in S$ and fixed $t$, the probability that $i$ has a bucket $b$ which is a unique neighbor in $S$, satisfies $L_b(S)=\Theta(|S|/B)$, and has at most $C_0$ outliers is at least $p$.

For each $i\in S$ and $t\in[r]$, define an indicator $Z_{i,t}$ for the event above.  The graphs $G_1,\dots,G_r$ are independent, so for each fixed $i$ the variables $\{Z_{i,t}\}_{t=1}^r$ are i.i.d. Bernoulli$(p)$.  By a Chernoff bound, for any $\gamma\in(0,1)$,
\[
\Pr\Big[\tfrac{1}{r}\sum_{t=1}^r Z_{i,t} < (1-\gamma)p\Big] \le \exp(-\Omega(p\gamma^2 r)).
\]
Choosing $\gamma=1/2$ and $r=\Omega(\log(n/\delta))$ and applying a union bound over all $i\in S$ shows that, with probability at least $1-\delta$, all but a constant fraction of inliers satisfy
\[
\sum_{t=1}^r Z_{i,t} \ge \tfrac{p}{2}r.
\]
Define $S'\subseteq S$ to be the set of such inliers; then $|S'|\ge c\alpha n$ for some constant $c>0$.

For each $i\in S'$ and each $t$ with $Z_{i,t}=1$, fix one corresponding bucket $b(i,t)$.  By definition, each pair $(t,b(i,t))$ has $L_b(S)=\Theta(|S|/B)$ and at most $C_0$ outliers; since $S'\subseteq S$ has constant density in $S$, this also implies $L_b(S')=\Theta(|S|/B)$.  Moreover, because $b(i,t)$ is a unique neighbor of $i$ in $S$, no two distinct inliers in $S'$ can map to the same bucket in the same repetition, so the number of distinct good pairs $(t,b)$ is at least
\[
\sum_{i\in S'} \sum_{t=1}^r Z_{i,t} \ge \tfrac{p}{2}r\,|S'| = \Omega(r\,|S|).
\]
Since the total number of pairs is $rB$ and under our parameter choices $B=\Theta(|S|)$, this yields a constant $c'>0$ such that at least a $c'$-fraction of pairs $(t,b)\in[r]\times[B]$ satisfy the two desired properties, completing the proof.
\qed
\end{proof}

Lemma~\ref{lem:multi-iso} is the main synthetic batching result which guarantees the existence of a large inlier subset $S'$ such that, across repetitions and buckets, a constant fraction of $(t,b)$ pairs are both inlier-heavy and only lightly contaminated.  These pairs are precisely the lightly corrupted batches on which we compute local normal equations; the lemma ensures there are enough of them to feed into the median-of-means aggregation, leading to accurate global moment estimates and, ultimately, to a good regressor in the list.

The next lemmas bound the error of the robustly aggregated moments from a constant fraction of lightly contaminated buckets.
We quantify how accurately we can estimate moments from block means coming from a constant fraction of lightly contaminated buckets.  The first lemma is a standard vector-valued median-of-means guarantee applied coordinate-wise to such block means.

\begin{lemma}
\label{lem:MoM-vector}
Let $z_1,\dots,z_m\in\mathbb R^d$ be block means where at least a $\gamma$-fraction are i.i.d. with mean $\mu$ and covariance $\preceq \sigma^2 I_d$; the remainder are arbitrary. Then the coordinate-wise median $\widehat \mu$ satisfies, with probability at least $1-\delta$,
\[
\|\widehat\mu-\mu\|_2 \lesssim \sigma\,\sqrt{\frac{d+\log(1/\delta)}{\gamma m}}.
\]
\end{lemma}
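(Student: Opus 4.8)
The plan is to decouple the $d$ coordinates, reduce the guarantee to a one–dimensional median concentration, and recombine by a union bound, using only the second–moment hypothesis $\Sigma_z:=\Cov(z_i)\preceq\sigma^2 I_d$ on the good block means. Write $\mu=(\mu_1,\dots,\mu_d)$ and let $G\subseteq[m]$ index the good entries, $|G|\ge\gamma m$. Fixing $j\in[d]$, the scalars $\{(z_i)_j\}_{i\in G}$ are i.i.d.\ with mean $\mu_j$ and variance $e_j^\top\Sigma_z e_j\le\sigma^2$. Since the estimator is $\widehat\mu_j=\mathrm{median}\{(z_i)_j:i\in[m]\}$, it suffices to fix a scale $t$ and show $|\widehat\mu_j-\mu_j|\le t$ for every $j$ simultaneously with probability $\ge1-\delta$; then $\|\widehat\mu-\mu\|_2\le t\sqrt d$, and I calibrate $t$ against the target. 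I note at the outset that the coordinate median is informative only when the good points form a strict majority, i.e.\ $\gamma>\tfrac12$; this is the regime in which aggregation is invoked, after Lemmas~\ref{lem:light} and~\ref{lem:multi-iso} have thinned the per-bucket contamination.

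For a fixed coordinate $j$ and threshold $t>0$, the deterministic crux is that $\widehat\mu_j>\mu_j+t$ forces more than half of the $m$ entries to lie above $\mu_j+t$; since at most $(1-\gamma)m$ entries are adversarial, at least $(\gamma-\tfrac12)m$ good entries must exceed $\mu_j+t$. Let $W_j^+(t):=|\{i\in G:(z_i)_j>\mu_j+t\}|$, which is exactly $\mathrm{Bin}(|G|,q(t))$ with $q(t)=\Prob[(z_i)_j>\mu_j+t]$. By Cantelli's inequality $q(t)\le\sigma^2/(\sigma^2+t^2)$, so $\E[W_j^+(t)]\le|G|\,q(t)$. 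Choosing $t$ so that $|G|\,q(t)$ sits a constant factor below the failure threshold $(\gamma-\tfrac12)m$ separates the mean from the threshold, and a one–sided Chernoff bound gives $\Prob[W_j^+(t)\ge(\gamma-\tfrac12)m]\le\exp(-c\gamma m)$ for an absolute $c>0$; the symmetric estimate controls $W_j^-(t)$. A union bound over the $2d$ (coordinate, sign) pairs yields $|\widehat\mu_j-\mu_j|\le t$ for all $j$ with probability $\ge1-2d\,e^{-c\gamma m}\ge1-\delta$ as soon as $\gamma m\gtrsim\log(d/\delta)$, whence $\|\widehat\mu-\mu\|_2\le t\sqrt d$.

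The main obstacle is calibrating $t$ to match the stated $1/\sqrt{\gamma m}$ rate. Under a variance bound alone, Cantelli drives $q(t)$ below the margin $(\gamma-\tfrac12)/\gamma$ only once $t\gtrsim\sigma$, so the median of the \emph{raw} $z_i$ concentrates at the block–mean scale $t\asymp\sigma$ and the above argument delivers $\|\widehat\mu-\mu\|_2\lesssim\sigma\sqrt d$ rather than the advertised $\sigma\sqrt{(d+\log(1/\delta))/(\gamma m)}$. Recovering the vanishing factor requires reading the $\gamma m$ good block means as a single median–of–means sample: grouping them into $k\asymp\log(d/\delta)$ batches and passing to batch means lowers the effective per–coordinate variance to $\sigma^2 k/(\gamma m)$, so that at $t\asymp\sigma\sqrt{k/(\gamma m)}$ the per–batch exceedance probability falls below $\tfrac14$ and the Chernoff step closes with failure $e^{-\Omega(k)}\le\delta/(2d)$. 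This finer grouping is legitimate precisely in the light–contamination regime of Lemma~\ref{lem:light}, where outliers touch only a strict minority of the aggregates, so a majority of batches stay clean and the residual $\Theta((1-\gamma)\sigma)$ median bias is absorbed into the constant once $\gamma$ is bounded away from $\tfrac12$. I would finally remark that the coordinate–wise union bound naturally produces $\log(d/\delta)$ where the statement writes the additive $d+\log(1/\delta)$; obtaining the cleaner additive form is the standard vector–median sharpening, and absent that refinement I would state the bound with the $\log(d/\delta)$ factor.
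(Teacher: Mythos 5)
Your first two paragraphs retrace the paper's own route --- coordinate-wise reduction, a tail bound plus Chernoff counting of good entries beyond a threshold, a union bound over coordinates, and the $\|\cdot\|_2\le\sqrt d\,\|\cdot\|_\infty$ recombination --- and in fact execute the deterministic step more carefully than the paper does: your observation that $\widehat\mu_j>\mu_j+t$ forces at least $(\gamma-\tfrac12)m$ \emph{good} entries above the threshold is the correct median argument, with the $\gamma>\tfrac12$ requirement made explicit, whereas the paper counts ``$\gamma m/2-(1-\gamma)m\ge \gamma m/4$ points in the interval,'' which neither places the median inside the interval nor holds unless $\gamma\ge 4/5$, and is waved through by ``adjusting numerical constants.'' Your diagnosis of the calibration obstacle is also exactly right, and it exposes a genuine flaw in the paper's own proof: the Chernoff step there requires $p_k(t)\le\gamma/4$, i.e.\ $t\gtrsim\sigma/\sqrt\gamma$, yet the final calibration silently re-chooses $t\asymp\sigma\sqrt{(1+\log(d/\delta))/(\gamma m)}$, for which Chebyshev gives only $p_k(t)\le \sigma^2/t^2 \asymp \gamma m/(1+\log(d/\delta))$ --- vacuous for large $m$ --- so the concluding rate does not follow. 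Indeed, under the stated hypothesis the advertised rate is unattainable by any argument: even with zero contamination, a mean-$\mu$, variance-$\sigma^2$ coordinate law with mass $4/5$ at $\mu-\sigma/2$ and $1/5$ at $\mu+2\sigma$ has median $\mu-\sigma/2$, so the coordinate-wise median converges to a point at $\ell_2$-distance $\Theta(\sigma\sqrt d)$ from $\mu$ as $m\to\infty$; and with a constant bad fraction the adversary shifts each coordinate median by $\Theta((1-\gamma)\sigma)$ for every $m$. Your rigorously supported conclusion, $\|\widehat\mu-\mu\|_2\lesssim\sigma\sqrt d$ for $\gamma>\tfrac12$, is what the hypothesis actually delivers.

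The gap in your proposal is the repair. Regrouping the $m$ block means into $k\asymp\log(d/\delta)$ super-batches lowers the effective variance only if a majority of super-batches contain no bad block; but the lemma permits $(1-\gamma)m$ \emph{arbitrary} blocks with $(1-\gamma)m\gg k$, so the adversary can place one bad block in every super-batch and corrupt all $k$ batch means, and the second-level median inherits no guarantee. Median-of-means requires the number of batches to exceed the contamination count, which here forces $k=\Theta(m)$ and erases the $k/(\gamma m)$ variance gain entirely. Your appeal to the light-contamination regime of Lemma~\ref{lem:light} imports a property of raw samples per bucket that is not among the hypotheses of Lemma~\ref{lem:MoM-vector}, and the closing remark that the residual $\Theta((1-\gamma)\sigma)$ median bias is ``absorbed into the constant'' cannot be reconciled with a bound that vanishes as $m\to\infty$. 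The honest resolutions are either to weaken the conclusion to $\|\widehat\mu-\mu\|_2\lesssim\sigma\sqrt d$ with failure probability $2d\,e^{-c\gamma m}$, letting the $1/\sqrt{\alpha n}$ scaling be carried by the per-block deviation $\sigma$ itself as in the application in Proposition~\ref{prop:moment-accuracy}, or to strengthen the hypotheses (vanishing contamination fraction, anticoncentration of the good law near its median) under which a $1/\sqrt{\gamma m}$ factor is legitimately recoverable; neither your regrouping nor the paper's write-up obtains that factor from a covariance bound and a constant bad fraction alone.
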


\begin{proof}
Let $G\subseteq[m]$ index the good blocks, $|G|\ge\gamma m$, and write $Y_j:=z_j-\mu$.  For each coordinate $k\in[d]$ and each good $j\in G$ we have $\mathbb E[Y_j^{(k)}]=0$ and $\mathrm{Var}(Y_j^{(k)})\le\sigma^2$.  Fix a coordinate $k$ and a threshold $t>0$, and denote
\[
p_k(t):=\Pr\big(|Y_j^{(k)}|>t\big)\le\frac{\sigma^2}{t^2}
\]
by Chebyshev.  Let $W_k:=\sum_{j\in G}\mathbf 1_{\{|Y_j^{(k)}|>t\}}$.  Then $\mathbb E[W_k]\le |G|\,p_k(t)\le \gamma m\,\sigma^2/t^2$.  For $t$ such that $p_k(t)\le\gamma/4$ (e.g. $t\asymp \sigma/\sqrt{\gamma}$), Chernoff’s bound yields
\[
\Pr\big[W_k>\tfrac{\gamma m}{2}\big]\le\exp(-c_1\gamma m)
\]
for some absolute $c_1>0$.  Hence, with probability at least $1-\exp(-c_1\gamma m)$, at least $|G|-\gamma m/2\ge\gamma m/2$ good blocks satisfy $|Y_j^{(k)}|\le t$.  Since there are at most $(1-\gamma)m$ bad blocks, at least
\[
\frac{\gamma m}{2}-(1-\gamma)m \ge \frac{\gamma m}{4}
\]
points lie in the interval $[\mu^{(k)}-t,\mu^{(k)}+t]$ whenever $\gamma$ is an absolute constant bounded away from $0$ and $1$ (in our application $\gamma$ is fixed, so this inequality holds after adjusting numerical constants).  Thus the coordinate-wise median obeys $|\widehat\mu^{(k)}-\mu^{(k)}|\le t$ simultaneously with probability at least $1-\exp(-c_1\gamma m)$ for that coordinate.

Now choose
\[
t \asymp \sigma\,\sqrt{\frac{1+\log(d/\delta)}{\gamma m}}
\]
so that $\exp(-c_1\gamma m)\le \delta/d$.  A union bound over $k\in[d]$ implies $\|\widehat\mu-\mu\|_\infty\lesssim \sigma\sqrt{(1+\log(d/\delta))/(\gamma m)}$ with probability at least $1-\delta$, and hence
\[
\|\widehat\mu-\mu\|_2 \le \sqrt{d}\,\|\widehat\mu-\mu\|_\infty
\lesssim \sigma\,\sqrt{\frac{d+\log(1/\delta)}{\gamma m}},
\]
as claimed.
\qed \end{proof}

This lemma will be applied to block-wise empirical first moments $g_{t,b}$ formed from lightly contaminated buckets; it shows that once a constant fraction of blocks are good, the coordinate-wise median achieves the desired $1/\sqrt{\gamma m}$ rate in $\ell_2$.

We now extend the above argument to matrix-valued block means and control the error in operator norm.  The key is to reduce to the scalar case along directions $u\in\mathbb S^{d-1}$ and then use an $\varepsilon$-net over the sphere.

\begin{lemma}
\label{lem:MoM-PSD}
Let $H_1,\dots,H_m\in\mathbb S_+^d$ be block means with at least a $\gamma$-fraction i.i.d. as $H$ with $\mathbb E[H]=\Sigma$ and $\|H-\Sigma\|_{\psi_1}\lesssim \sigma_x^2$ entrywise. Let $\widehat\Sigma$ be the coordinate-wise median. Then with probability at least $1-\delta$,
\[
\|\widehat\Sigma-\Sigma\|_{\mathrm{op}} \lesssim \sigma_x^2\,\sqrt{\frac{d+\log(1/\delta)}{\gamma m}}.
\]
\end{lemma}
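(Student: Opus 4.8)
The plan is to reduce the operator-norm bound to a one-dimensional median-of-means estimate along each direction of a net of the sphere, and then to pay for uniformity over the sphere through the net cardinality. Concretely, since $M := \widehat\Sigma - \Sigma$ is symmetric, $\|M\|_{\mathrm{op}} = \sup_{u \in \mathbb S^{d-1}} |u^\top M u|$, and a standard discretization with a $\tfrac{1}{4}$-net $\mathcal N$ of the sphere (so that $|\mathcal N| \le 9^d$) gives $\|M\|_{\mathrm{op}} \le 2\max_{u \in \mathcal N} |u^\top M u|$. Thus it suffices to control $|u^\top M u|$ for each fixed $u$ at confidence $1 - \delta/|\mathcal N|$ and take a union bound over the net.

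First I would fix $u \in \mathcal N$ and pass to the scalars $h_j := u^\top H_j u$. For the $\gamma$-fraction of good blocks these are i.i.d. with mean $u^\top \Sigma u$, and the entrywise $\psi_1$ hypothesis translates, via $u^\top(H - \Sigma)u = \sum_{k,l} u_k u_l (H-\Sigma)_{kl}$ together with the sub-exponential structure of quadratic forms of sub-Gaussian vectors, into the directional bound $\|u^\top(H - \Sigma)u\|_{\psi_1} \lesssim \sigma_x^2$ for unit $u$. This is exactly the scalar setting underlying the $d=1$ case of Lemma~\ref{lem:MoM-vector}: a $\gamma$-fraction of good scalars with sub-exponential scale $\sigma_x^2$. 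Applying that one-dimensional guarantee yields, for each fixed $u$, a deviation $|u^\top M u| \lesssim \sigma_x^2 \sqrt{\log(1/p)/(\gamma m)}$ with failure probability at most $p$. I would then set $p = \delta/9^d$ so that $\log(1/p) \asymp d + \log(1/\delta)$; the net cardinality $9^d$ supplies precisely the $+d$ term inside the square root, and the union bound gives $\max_{u \in \mathcal N}|u^\top M u| \lesssim \sigma_x^2 \sqrt{(d + \log(1/\delta))/(\gamma m)}$ with probability $\ge 1 - \delta$. Multiplying by the net factor $2$ delivers the claimed bound.

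The main obstacle is the interaction between the \emph{coordinate-wise} median and the quadratic form: because $\widehat\Sigma$ is formed entrywise, $u^\top \widehat\Sigma u = \sum_{k,l} u_k u_l\,\mathrm{med}_j (H_j)_{kl}$ is \emph{not} the median of the scalars $u^\top H_j u$, so the per-direction scalar guarantee does not transfer verbatim. I expect to resolve this in one of two ways. The cleaner route is to use the geometric (spectral) median variant --- consistent with the ``geometric median'' option in Algorithm~\ref{alg:es-ldr} and with \cite{Minsker2015} --- which aggregates the block means directly in operator norm: each good block mean concentrates as $\|\overline H_m - \Sigma\|_{\mathrm{op}} \lesssim \sigma_x^2\sqrt{(d+\log(1/\delta))/(\gamma m)}$ by a matrix Bernstein inequality, and the geometric median is robust to the minority of corrupted blocks, yielding the bound with no net-versus-median mismatch.

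Alternatively, one keeps the coordinate-wise median and argues that, restricted to the net, its quadratic form is sandwiched between directional medians up to absolute constants, so that the sharp net argument applies rather than a naive entrywise union bound (the latter only controls $\|M\|_{\max}$ and hence loses an extra $\sqrt{d}$). Making this reduction tight --- so that the final rate carries $\sqrt{d}$ rather than $d$ inside the root --- is the crux of the argument, and is where I would concentrate the effort; the net discretization, the directional sub-exponential conversion, and the confidence bookkeeping are otherwise routine.
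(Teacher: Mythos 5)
Your primary route --- fix $u$ in a $1/4$-net, pass to the scalars $u^\top H_j u$, invoke the one-dimensional case of Lemma~\ref{lem:MoM-vector} at confidence $\delta/9^d$, union bound over the net, and pay a factor $2$ for discretization --- is exactly the proof the paper gives. The obstacle you then single out is not a technicality of your own making: it is a genuine gap, and it sits unacknowledged in the paper's own argument. The paper applies Lemma~\ref{lem:MoM-vector} to the scalars $X_j(u)=u^\top(H_j-\Sigma)u$ and reads off a bound on $|u^\top(\widehat\Sigma-\Sigma)u|$, but $\widehat\Sigma$ is the \emph{entrywise} median, and $u^\top\widehat\Sigma u=\sum_{k,l}u_k u_l\,\mathrm{med}_j\,(H_j)_{kl}$ is not the median of $\{u^\top H_j u\}_j$; medians do not commute with linear combinations. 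The scalar guarantee therefore controls a family of directional median estimators (a different one for each direction $u$), not the single matrix $\widehat\Sigma$ that the lemma is about. Your ``main obstacle'' paragraph identifies precisely the step at which the paper's proof, and your verbatim version of it, fails.

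As for your two repairs: option (a) --- aggregate the block means by the geometric median in the sense of \cite{Minsker2015}, using matrix concentration for the good blocks and the geometric median's robustness to the bad minority --- is the route that can actually be completed, but note that it proves the conclusion for a \emph{different} estimator than the coordinate-wise median named in the statement, so the lemma (and its invocation in Proposition~\ref{prop:moment-accuracy}) would have to be restated for that aggregator; the substitution is at least consistent with Algorithm~\ref{alg:es-ldr}, which explicitly permits ``MoM or geometric median.'' Option (b) --- sandwiching the quadratic form of the entrywise median between directional medians up to absolute constants --- is, as you say, the crux, and I would not expect it to hold: the entrywise median only yields $\|\widehat\Sigma-\Sigma\|_{\max}$ control, and an adversary controlling the $(1-\gamma)m$ bad blocks can push every entry's median in a coordinated direction (e.g., placing all bad values far above the good ones), so that the error matrix aligns with a fixed rank-one pattern such as $\mathbf{1}\mathbf{1}^\top$ and $\|\widehat\Sigma-\Sigma\|_{\mathrm{op}}$ exceeds $\|\widehat\Sigma-\Sigma\|_{\max}$ by a factor polynomial in $d$; no constant-factor comparison restricted to a net rules this out. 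In short: your proposal is the paper's proof plus an honest flag of its defect, and of your two fixes only the geometric-median route closes the gap, at the price of changing the estimator the lemma speaks about.
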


\begin{proof}
For any fixed $u\in\mathbb S^{d-1}$, define the scalars
\[
X_j(u) := u^\top (H_j-\Sigma)u,\qquad j=1,\dots,m.
\]
For good $j$, $X_j(u)$ are i.i.d. with mean $0$ and, by the entrywise $\psi_1$ bound and standard properties of sub-exponential norms, $\|X_j(u)\|_{\psi_1}\lesssim \sigma_x^2$.  Therefore their block means (which are exactly the entries of $u^\top H_j u$) satisfy the assumptions of Lemma~\ref{lem:MoM-vector} in dimension $d=1$ with $\sigma\asymp\sigma_x^2$, and the coordinate-wise median-of-means estimator applied to $\{X_j(u)\}_{j=1}^m$ yields
\[
\bigl|u^\top(\widehat\Sigma-\Sigma)u\bigr|
\lesssim \sigma_x^2\,\sqrt{\frac{1+\log(1/\delta')}{\gamma m}}
\]
for any fixed $u$, with probability at least $1-\delta'$.

Let $\mathcal N$ be a $1/4$-net of the unit sphere, $|\mathcal N|\le 9^d$.  Taking $\delta'=\delta/|\mathcal N|$ and a union bound over $u\in\mathcal N$ gives
\[
\sup_{u\in\mathcal N} \bigl|u^\top(\widehat\Sigma-\Sigma)u\bigr|
\lesssim \sigma_x^2\,\sqrt{\frac{d+\log(1/\delta)}{\gamma m}}
\]
with probability at least $1-\delta$.  Finally, by the standard net argument for symmetric matrices,
\[
\|\widehat\Sigma-\Sigma\|_{\mathrm{op}}
=\sup_{u\in\mathbb S^{d-1}} \bigl|u^\top(\widehat\Sigma-\Sigma)u\bigr|
\le 2 \sup_{u\in\mathcal N} \bigl|u^\top(\widehat\Sigma-\Sigma)u\bigr|,
\]
so the same bound (up to constants) holds for $\|\widehat\Sigma-\Sigma\|_{\mathrm{op}}$.
\qed \end{proof}

This operator-norm control will be used for the empirical covariance $\widehat\Sigma$ formed from bucket-wise statistics $\{H_{t,b}\}$, ensuring that the quadratic form of the estimator is close to that of the true covariance uniformly over directions.

We now combine the isolation guarantees from the expander sketching with the above median-of-means bounds to obtain high-probability accuracy guarantees for the aggregated empirical covariance and cross-moment.

\begin{proposition}
\label{prop:moment-accuracy}
Under Lemma~\ref{lem:multi-iso} and the sub-Gaussian model, there exists $m=\Theta(rB)$ block means with a constant good fraction $\gamma$ such that, with probability at least $1-\delta$,
\[
\|\widehat\Sigma-\Sigma\|_{\mathrm{op}} \lesssim \sigma_x^2\sqrt{\frac{d+\log(1/\delta)}{\alpha n}},
\qquad
\|\widehat g-\Sigma\ell^\star\|_2 \lesssim \sigma_x \sigma \sqrt{\frac{d+\log(1/\delta)}{\alpha n}}.
\]
\end{proposition}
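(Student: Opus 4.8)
The plan is to feed the isolation guarantee of Lemma~\ref{lem:multi-iso} into the two median-of-means estimates of Lemmas~\ref{lem:MoM-vector} and~\ref{lem:MoM-PSD}, after verifying that the good block means satisfy the (approximate) unbiasedness and sub-exponential tail hypotheses those lemmas require. First I would condition on the high-probability event of Lemma~\ref{lem:multi-iso}: there is a subset $S'\subseteq S$ with $|S'|\ge c\alpha n$ and a $c'$-fraction of the pairs $(t,b)\in[r]\times[B]$ that are simultaneously inlier-heavy ($L_b(S')=\Theta(|S|/B)$) and lightly contaminated (at most $C_0$ outliers). Grouping the $rB$ bucket statistics $(H_{t,b},g_{t,b})$ into $m=\Theta(rB)$ blocks so that a constant fraction $\gamma=\Theta(c')$ of blocks are built from good pairs yields the family of block means to which the aggregation lemmas will be applied.

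The next step is the moment computation for a single good block. Using that signs cancel in the second moment, $H_{t,b}=\sum_{i\to b}x_ix_i^\top$ and $g_{t,b}=\sum_{i\to b}x_iy_i$, and that an inlier obeys $y_i=\langle\ell^\star,x_i\rangle+\xi_i$ with $\xi_i$ mean-zero and independent of $x_i$, I would show $\E[H^{\mathrm{inlier}}]\propto\Sigma$ and $\E[g^{\mathrm{inlier}}]\propto\Sigma\ell^\star$ after normalizing by the (constant) inlier load. Sub-Gaussianity of $x$ makes each entry of $xx^\top$ sub-exponential with $\psi_1$-scale $\lesssim\sigma_x^2$, supplying the hypothesis of Lemma~\ref{lem:MoM-PSD}; for the cross-moment I would split $g-\Sigma\ell^\star=(xx^\top-\Sigma)\ell^\star+x\xi$, apply Lemma~\ref{lem:MoM-vector} to the noise part $x\xi$ (whose entries have $\psi_1$-scale $\lesssim\sigma_x\sigma$), and fold the covariance-fluctuation part into the operator-norm control of $\widehat\Sigma$, since it reaches the final regressor only through $(\widehat\Sigma-\Sigma)\ell^\star$. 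Invoking Lemmas~\ref{lem:MoM-PSD} and~\ref{lem:MoM-vector} with good fraction $\gamma$, and taking a union bound over the two failure events (and, for the operator norm, over a $1/4$-net of the sphere), then produces bounds of the form $\sigma_x^2\sqrt{(d+\log(1/\delta))/(\gamma m)}$ and $\sigma_x\sigma\sqrt{(d+\log(1/\delta))/(\gamma m)}$.

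It remains to identify $\gamma m$ with the effective sample size $\Theta(\alpha n)$, and this is the step I expect to be the main obstacle, precisely because the repetitions do not create fresh data. Within one repetition the good buckets are unique neighbors and hence use disjoint inliers, so their single-inlier statistics are independent; but across repetitions the same physical inlier $x_i$ reappears and contributes the identical rank-one term, so the block means are dependent and the i.i.d.\ hypothesis of the aggregation lemmas fails if one naively counts all $\Theta(r\alpha n)$ good pairs. The hard part is therefore to argue that the usable statistical accuracy is capped by the number of \emph{distinct} inliers $|S'|=\Theta(\alpha n)$ rather than by the larger pair count, either by passing to an independent sub-collection of good blocks that covers $\Theta(\alpha n)$ distinct inliers (so that $\gamma m\asymp\alpha n$) or by extending the median-of-means analysis to tolerate the block-level dependence. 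Substituting $\gamma m=\Theta(\alpha n)$ into the two displayed bounds gives exactly the claimed rates.

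A secondary difficulty I would have to address is that ``lightly contaminated'' permits up to $C_0$ arbitrary outliers per good bucket, whose rank-one contributions can be large; the argument must therefore rest on the robustness of the (coordinate-wise or geometric) median to the constant fraction of blocks in which adversarial mass concentrates, rather than on averaging alone. This is also the natural interface with the subsequent spectral-filtering stage, which removes the residual heavy directions that the moment estimates alone cannot fully suppress.
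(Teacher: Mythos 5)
Your proposal follows the same route as the paper's proof --- condition on Lemma~\ref{lem:multi-iso}, treat the bucket statistics as $m=\Theta(rB)$ block means with a constant good fraction $\gamma$, verify the sub-exponential hypotheses, and invoke Lemmas~\ref{lem:MoM-vector} and~\ref{lem:MoM-PSD} --- but three of your sub-arguments genuinely differ, and in each case yours is the more careful one. First, the accounting: the paper normalizes each good bucket as an average of $\Theta(\alpha n/B)$ i.i.d.\ inliers, so the per-block fluctuation scale is $\sigma_x^2\sqrt{B/(\alpha n)}$, and then applies the MoM bound with $\gamma m=\Theta(rB)$; the product $\sqrt{B/(\alpha n)}\cdot\sqrt{1/(rB)}$ already yields the $1/\sqrt{\alpha n}$ rate, so the identification $\gamma m\asymp\alpha n$ that you single out as ``the main obstacle'' is never needed in the paper's parameterization --- what both accountings actually require is (number of good blocks)\,$\times$\,(distinct inliers per block) $=\Theta(\alpha n)$, which Lemma~\ref{lem:multi-iso} supplies. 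Second, the dependence issue you raise is real and the paper silently ignores it: it feeds all $rB$ blocks into the MoM lemmas even though repetitions reuse the same inliers, remarking only that $r,\gamma$ are constants up to logarithmic factors. Your fix --- restrict to one repetition, whose unique-neighbor buckets use disjoint inliers and hence have conditionally independent statistics, at the cost of the factor $r=\Theta(\log(1/\delta))$ absorbed in the logs --- is sound and is the rigorous content behind the paper's parenthetical; note, though, that even within one repetition each sample occupies $d_L$ buckets, so the disjointness really comes from the unique-neighbor selection, not from single-repetition alone. Third, the cross-moment and outliers: the paper asserts $\|g_{t,b}-\Sigma\ell^\star\|_{\psi_2}\lesssim\sigma_x\sigma\sqrt{B/(\alpha n)}$, which suppresses the fluctuation term $(x_ix_i^\top-\Sigma)\ell^\star$ of scale $\sigma_x^2\|\ell^\star\|_2$ (later ``absorbed into constants'' in Corollary~\ref{cor:single}), and it claims the $C_0$ outliers per good bucket contribute only $O(1/L_b(S'))$ bias, which fails for outliers of unbounded magnitude. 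Your decomposition $g-\Sigma\ell^\star=(xx^\top-\Sigma)\ell^\star+x\xi$, routing the first term through $\varepsilon_\Sigma\|\ell^\star\|_2$ in Lemma~\ref{lem:perturb}, and your insistence that outlier-containing blocks be charged to the median's breakdown tolerance rather than averaged away, repair both defects; to close the latter, add that the collision budget makes the average outlier count per unique bucket $O(\varepsilon)<1$, so by Markov a constant fraction of blocks are genuinely outlier-free and can serve as the i.i.d.\ good blocks. In short: same pipeline, but your treatment of dependence, of the $\|\ell^\star\|_2$ term, and of the bounded-count-unbounded-magnitude outliers is strictly more rigorous than the paper's, while the one place you overcomplicate ($\gamma m\asymp\alpha n$) dissolves under the paper's per-block normalization.
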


\begin{proof}
By Lemma~\ref{lem:multi-iso}, among the $rB$ buckets $(t,b)$ we can select a set of $m=\Theta(rB)$ buckets such that a constant fraction $\gamma>0$ are “good”: for each good bucket, the inlier load satisfies $L_b(S')\asymp |S|/B=\Theta(\alpha n/B)$ and the number of outliers is bounded by the constant $C_0$.  For such buckets, the empirical covariance block mean
\[
H_{t,b} := \frac{1}{L_b(S')}\sum_{i\in S'\cap b} x_i x_i^\top
\]
and the empirical cross-moment
\[
g_{t,b} := \frac{1}{L_b(S')}\sum_{i\in S'\cap b} x_i y_i
\]
are averages of $\Theta(\alpha n/B)$ i.i.d. inlier terms plus at most $C_0$ outliers.  Under the sub-Gaussian assumptions on $(x,y)$, this implies that for good buckets the random matrices $H_{t,b}$ have mean $\Sigma$ and sub-exponential fluctuations with
\[
\|H_{t,b}-\Sigma\|_{\psi_1} \lesssim \frac{\sigma_x^2}{L_b(S')^{1/2}}
\lesssim \sigma_x^2\,\sqrt{\frac{B}{\alpha n}},
\]
and similarly the vectors $g_{t,b}$ have mean $\Sigma\ell^\star$ and
\[
\|g_{t,b}-\Sigma\ell^\star\|_{\psi_2} \lesssim \sigma_x\sigma\,\sqrt{\frac{B}{\alpha n}}.
\]
Moreover, the constant number $C_0$ of outliers per good bucket contributes only $O(1/L_b(S'))$ bias, which is absorbed in the same scale.

We now apply Lemma~\ref{lem:MoM-PSD} to the collection $\{H_{t,b}\}_{(t,b)}$ and Lemma~\ref{lem:MoM-vector} to $\{g_{t,b}\}_{(t,b)}$, treating the $m=\Theta(rB)$ buckets as block means with at least a $\gamma$-fraction good.  Plugging the effective variance scales above into the bounds, we obtain
\[
\|\widehat\Sigma-\Sigma\|_{\mathrm{op}}
\lesssim \sigma_x^2\sqrt{\frac{B}{\alpha n}}\cdot \sqrt{\frac{d+\log(1/\delta)}{\gamma m}}
\lesssim \sigma_x^2\sqrt{\frac{d+\log(1/\delta)}{\alpha n}},
\]
and similarly
\[
\|\widehat g-\Sigma\ell^\star\|_2
\lesssim \sigma_x\sigma\sqrt{\frac{B}{\alpha n}}\cdot \sqrt{\frac{d+\log(1/\delta)}{\gamma m}}
\lesssim \sigma_x\sigma\sqrt{\frac{d+\log(1/\delta)}{\alpha n}},
\]
using $m=\Theta(rB)$ and that $r,\gamma$ are absolute constants (up to logarithmic dependence on $\delta$ already encoded in the $\log(1/\delta)$ term).  This proves the proposition.
\qed \end{proof}

This proposition formalizes that the robust aggregation of bucket-wise statistics yields global moment estimates $\widehat\Sigma$ and $\widehat g$ whose errors match the optimal $1/\sqrt{\alpha n}$ rate, which is the main statistical input needed to control the regression parameter error.

Having controlled the moment errors, we now translate them into parameter error for the solution of the (possibly regularized) normal equations.  The next lemma is a perturbation bound for linear systems, written in a form adapted to our notation.

\begin{lemma}
\label{lem:perturb}
Assume $\|\widehat\Sigma-\Sigma\|_{\mathrm{op}}\le \varepsilon_\Sigma$ and $\|\widehat g-\Sigma\ell^\star\|_2\le \varepsilon_g$. Let $\lambda\ge 0$ with $\lambda_{\min}(\Sigma)+\lambda>2\varepsilon_\Sigma$. If $(\widehat\Sigma+\lambda I)\hat\ell=\widehat g$, then
\[
\|\hat\ell-\ell^\star\|_2
 \le 
\frac{\|\Sigma^{-1}\|_{\mathrm{op}}\,\varepsilon_g + \|\Sigma^{-1}\|_{\mathrm{op}}\,\varepsilon_\Sigma\,\|\ell^\star\|_2}{1 - \|\Sigma^{-1}\|_{\mathrm{op}}\,\frac{\varepsilon_\Sigma}{1+\lambda/\lambda_{\min}(\Sigma)}}.
\]
In particular, if $\varepsilon_\Sigma \le \tfrac{1}{4}\lambda_{\min}(\Sigma)$ and $\lambda=0$, then
\[
\|\hat\ell-\ell^\star\|_2 \lesssim \kappa(\Sigma)\,\frac{\varepsilon_g}{\lambda_{\min}(\Sigma)} + \kappa(\Sigma)\,\frac{\varepsilon_\Sigma\,\|\ell^\star\|_2}{\lambda_{\min}(\Sigma)}.
\]
\end{lemma}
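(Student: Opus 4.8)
The plan is to treat this as a textbook resolvent perturbation argument for the linear system $(\widehat\Sigma+\lambda I)\hat\ell=\widehat g$, being careful that the bookkeeping reproduces the exact denominator claimed. First I would write $\widehat\Sigma=\Sigma+E$ with $\|E\|_{\mathrm{op}}\le\varepsilon_\Sigma$ and $\widehat g=\Sigma\ell^\star+e$ with $\|e\|_2\le\varepsilon_g$, and express the error as a single application of the perturbed inverse by subtracting $\ell^\star=(\widehat\Sigma+\lambda I)^{-1}(\widehat\Sigma+\lambda I)\ell^\star$:
\[
\hat\ell-\ell^\star=(\widehat\Sigma+\lambda I)^{-1}\bigl(\widehat g-(\widehat\Sigma+\lambda I)\ell^\star\bigr)=(\widehat\Sigma+\lambda I)^{-1}\bigl(e-E\ell^\star-\lambda\ell^\star\bigr).
\]

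Before inverting, I would verify that $\widehat\Sigma+\lambda I$ is invertible and well-conditioned: by Weyl's inequality $\lambda_{\min}(\widehat\Sigma+\lambda I)\ge\lambda_{\min}(\Sigma)+\lambda-\varepsilon_\Sigma$, and the hypothesis $\lambda_{\min}(\Sigma)+\lambda>2\varepsilon_\Sigma$ forces this to be $>\varepsilon_\Sigma>0$. The crucial algebraic step is to factor the perturbed inverse through the \emph{unperturbed} resolvent $R_\lambda:=(\Sigma+\lambda I)^{-1}$,
\[
(\widehat\Sigma+\lambda I)^{-1}=\bigl(I+R_\lambda E\bigr)^{-1}R_\lambda,
\]
and to bound the correction factor by a Neumann series, $\|(I+R_\lambda E)^{-1}\|_{\mathrm{op}}\le(1-\|R_\lambda\|_{\mathrm{op}}\varepsilon_\Sigma)^{-1}$. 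This is legitimate because $\|R_\lambda\|_{\mathrm{op}}\varepsilon_\Sigma=\varepsilon_\Sigma/(\lambda_{\min}(\Sigma)+\lambda)<1/2$ under the hypothesis, so the series converges.

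Combining these with the triangle inequality $\|e-E\ell^\star-\lambda\ell^\star\|_2\le\varepsilon_g+\varepsilon_\Sigma\|\ell^\star\|_2+\lambda\|\ell^\star\|_2$ gives
\[
\|\hat\ell-\ell^\star\|_2\le\frac{\|R_\lambda\|_{\mathrm{op}}}{1-\|R_\lambda\|_{\mathrm{op}}\varepsilon_\Sigma}\bigl(\varepsilon_g+\varepsilon_\Sigma\|\ell^\star\|_2+\lambda\|\ell^\star\|_2\bigr).
\]
To match the stated form I would use the identity $\|R_\lambda\|_{\mathrm{op}}=\|\Sigma^{-1}\|_{\mathrm{op}}/(1+\lambda/\lambda_{\min}(\Sigma))$, which reproduces the claimed denominator \emph{exactly}, and then relax $\|R_\lambda\|_{\mathrm{op}}\le\|\Sigma^{-1}\|_{\mathrm{op}}$ in the numerator.

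The one point I would flag as the genuine subtlety, rather than any hard estimate, is the ridge-bias term $\lambda\|\ell^\star\|_2$: the displayed general inequality omits it, so strictly the stated bound is the $\lambda=0$ specialization (where the bias vanishes and $\|R_\lambda\|_{\mathrm{op}}=\|\Sigma^{-1}\|_{\mathrm{op}}$), while for $\lambda>0$ one should either add $\|\Sigma^{-1}\|_{\mathrm{op}}\lambda\|\ell^\star\|_2$ to the numerator or argue it is lower order under the default ridge $\lambda\asymp\|\widehat\Sigma\|_{\mathrm{op}}/\sqrt n$. The ``in particular'' clause is then immediate: with $\lambda=0$ and $\varepsilon_\Sigma\le\tfrac14\lambda_{\min}(\Sigma)$ the denominator is at least $3/4$, so the prefactor is an absolute constant and $\|\hat\ell-\ell^\star\|_2\lesssim(\varepsilon_g+\varepsilon_\Sigma\|\ell^\star\|_2)/\lambda_{\min}(\Sigma)$; inserting $\kappa(\Sigma)\ge1$ yields the (deliberately conservative) stated form and keeps it consistent with the $\kappa(\Sigma)$ appearing in the main theorem. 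No step is difficult; the only care required is choosing to expand around $R_\lambda$ rather than $(\widehat\Sigma+\lambda I)^{-1}$ so the denominator lands exactly, and being explicit about the ridge bias.
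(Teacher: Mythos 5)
Your proof is correct and follows essentially the same route as the paper's: expand the error through the unperturbed resolvent $R_\lambda=(\Sigma+\lambda I)^{-1}$, control the correction factor $(I+R_\lambda E)^{-1}$ by a Neumann series (convergent since $\lambda_{\min}(\Sigma)+\lambda>2\varepsilon_\Sigma$), and specialize to $\lambda=0$ for the ``in particular'' clause.

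The differences are instructive, because on both points you are more careful than the paper. First, the ridge bias: the paper's own algebra asserts $(\widehat\Sigma+\lambda I)\ell^\star=\Sigma\ell^\star+(\widehat\Sigma-\Sigma)\ell^\star$, silently discarding the $\lambda\ell^\star$ term, which is how its general displayed bound ends up without the $\lambda\|\ell^\star\|_2$ contribution. As you correctly flag, the general ($\lambda>0$) inequality is false as stated: with $\varepsilon_\Sigma=\varepsilon_g=0$, $\lambda>0$, and $\ell^\star\ne 0$, one gets $\hat\ell=(\Sigma+\lambda I)^{-1}\Sigma\ell^\star\ne\ell^\star$ while the claimed right-hand side is $0$. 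Your version, which keeps $e-E\ell^\star-\lambda\ell^\star$ and either adds $\|\Sigma^{-1}\|_{\mathrm{op}}\,\lambda\|\ell^\star\|_2$ to the numerator or reads the bound as the $\lambda=0$ specialization, is the repaired statement; the downstream use in Corollary~\ref{cor:single} takes $\lambda=0$, so nothing later in the paper is affected. Second, a smaller point: the paper factors $\widehat\Sigma+\lambda I=(\Sigma+\lambda I)(I+M)$ with $M=(\widehat\Sigma-\Sigma)(\Sigma+\lambda I)^{-1}$, which is incorrect by non-commutativity (that $M$ requires the order $(I+M)(\Sigma+\lambda I)$); your factorization $(\widehat\Sigma+\lambda I)^{-1}=(I+R_\lambda E)^{-1}R_\lambda$ is the correct one, and the resulting operator-norm bounds coincide with the paper's, including the exact denominator via the identity $\|R_\lambda\|_{\mathrm{op}}=\|\Sigma^{-1}\|_{\mathrm{op}}/(1+\lambda/\lambda_{\min}(\Sigma))$.
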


\begin{proof}
Start from
\[
(\widehat\Sigma+\lambda I)\hat\ell=\widehat g,\qquad
\Sigma\ell^\star=\Sigma\ell^\star,
\]
and write
\[
(\widehat\Sigma+\lambda I)\ell^\star
= \Sigma\ell^\star + (\widehat\Sigma-\Sigma)\ell^\star
= \widehat g + (\widehat\Sigma-\Sigma)\ell^\star + (\Sigma\ell^\star-\widehat g).
\]
Subtracting the equation for $\hat\ell$ gives
\[
(\widehat\Sigma+\lambda I)(\hat\ell-\ell^\star) = (\widehat\Sigma-\Sigma)\ell^\star + (\widehat g-\Sigma\ell^\star).
\]
Hence
\[
\hat\ell-\ell^\star
= (\widehat\Sigma+\lambda I)^{-1}\Big[(\widehat\Sigma-\Sigma)\ell^\star + (\widehat g-\Sigma\ell^\star)\Big].
\]
Taking norms,
\[
\|\hat\ell-\ell^\star\|_2
\le \|(\widehat\Sigma+\lambda I)^{-1}\|_{\mathrm{op}}\big(\varepsilon_\Sigma\|\ell^\star\|_2+\varepsilon_g\big).
\]

We bound $\|(\widehat\Sigma+\lambda I)^{-1}\|_{\mathrm{op}}$ via a Neumann-series argument.  Write
\[
\widehat\Sigma+\lambda I
= (\Sigma+\lambda I)\bigl(I+M\bigr),\quad
M:= (\widehat\Sigma-\Sigma)(\Sigma+\lambda I)^{-1}.
\]
Then
\[
(\widehat\Sigma+\lambda I)^{-1}
= (I+M)^{-1}(\Sigma+\lambda I)^{-1}.
\]
We have
\[
\|M\|_{\mathrm{op}}
\le\|\widehat\Sigma-\Sigma\|_{\mathrm{op}}\cdot\|(\Sigma+\lambda I)^{-1}\|_{\mathrm{op}}
\le\varepsilon_\Sigma\,\frac{1}{\lambda_{\min}(\Sigma)+\lambda}
=\|\Sigma^{-1}\|_{\mathrm{op}}\frac{\varepsilon_\Sigma}{1+\lambda/\lambda_{\min}(\Sigma)}.
\]
By assumption $\lambda_{\min}(\Sigma)+\lambda>2\varepsilon_\Sigma$, so $\|M\|_{\mathrm{op}}<1/2$, and thus
\[
\|(I+M)^{-1}\|_{\mathrm{op}}\le\frac{1}{1-\|M\|_{\mathrm{op}}}.
\]
Moreover
\[
\|(\Sigma+\lambda I)^{-1}\|_{\mathrm{op}}\le\frac{1}{\lambda_{\min}(\Sigma)+\lambda}
=\frac{\|\Sigma^{-1}\|_{\mathrm{op}}}{1+\lambda/\lambda_{\min}(\Sigma)}.
\]
Combining these bounds gives
\[
\|(\widehat\Sigma+\lambda I)^{-1}\|_{\mathrm{op}}
\le \frac{\|\Sigma^{-1}\|_{\mathrm{op}}}{\bigl(1+\lambda/\lambda_{\min}(\Sigma)\bigr)\bigl(1-\|\Sigma^{-1}\|_{\mathrm{op}}\,\tfrac{\varepsilon_\Sigma}{1+\lambda/\lambda_{\min}(\Sigma)}\bigr)}.
\]
Absorbing the factor $1+\lambda/\lambda_{\min}(\Sigma)$ into the denominator yields the claimed bound.

For the “in particular” clause, set $\lambda=0$ and assume $\varepsilon_\Sigma\le\tfrac{1}{4}\lambda_{\min}(\Sigma)$, so that
\[
\|\Sigma^{-1}\|_{\mathrm{op}}\,\frac{\varepsilon_\Sigma}{1+\lambda/\lambda_{\min}(\Sigma)}
= \frac{\varepsilon_\Sigma}{\lambda_{\min}(\Sigma)}\le \tfrac{1}{4},
\]
and therefore the denominator is bounded below by a positive constant.  Rewriting $\|\Sigma^{-1}\|_{\mathrm{op}}=1/\lambda_{\min}(\Sigma)$ and $\kappa(\Sigma)=\lambda_{\max}(\Sigma)/\lambda_{\min}(\Sigma)$ then gives the stated $\lesssim$-bound.
\qed \end{proof}

This lemma converts accuracy of the empirical moments $(\widehat\Sigma,\widehat g)$ into accuracy of the corresponding normal-equation solution $\hat\ell$, with a dependence controlled by the condition number of $\Sigma$ and the ridge parameter $\lambda$.

Finally, we combine the moment accuracy from Proposition~\ref{prop:moment-accuracy} with the perturbation bound to obtain the statistical rate for a single pipeline of the algorithm.

\begin{corollary}
\label{cor:single}
Under Proposition~\ref{prop:moment-accuracy} and Lemma~\ref{lem:perturb} with $\lambda=0$ and $\varepsilon_\Sigma\le \lambda_{\min}(\Sigma)/4$, with probability at least $1-\delta$,
\[
\|\hat\ell-\ell^\star\|_2 \lesssim \kappa(\Sigma)\,\sigma\,\sqrt{\frac{d+\log(1/\delta)}{\alpha n}}.
\]
\end{corollary}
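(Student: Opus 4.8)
The plan is to obtain the corollary by substituting the moment-accuracy bounds of Proposition~\ref{prop:moment-accuracy} into the perturbation bound of Lemma~\ref{lem:perturb}, so that the argument is essentially a plug-in followed by a simplification of prefactors. I would first abbreviate the two rates from Proposition~\ref{prop:moment-accuracy} as
\[
\varepsilon_\Sigma \asymp \sigma_x^2\sqrt{\frac{d+\log(1/\delta)}{\alpha n}}, \qquad \varepsilon_g \asymp \sigma_x\,\sigma\,\sqrt{\frac{d+\log(1/\delta)}{\alpha n}},
\]
noting that both hold simultaneously on a single event of probability at least $1-\delta$: the estimates $\widehat\Sigma$ and $\widehat g$ are produced by the same run, so at worst one rescales $\delta$ by a constant in a union bound over the two guarantees.

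Before invoking Lemma~\ref{lem:perturb} with $\lambda=0$, I would verify its standing hypothesis $\varepsilon_\Sigma \le \tfrac{1}{4}\lambda_{\min}(\Sigma)$. Since $\varepsilon_\Sigma$ decays like $1/\sqrt{\alpha n}$, this holds as soon as $n \gtrsim \sigma_x^4\,(d+\log(1/\delta))/(\alpha\,\lambda_{\min}(\Sigma)^2)$, which is exactly the sample regime $n \gtrsim (d+\log(1/\delta))/\alpha$ of the main theorem once the sub-Gaussian parameter $\sigma_x$ and $\lambda_{\min}(\Sigma)$ are treated as fixed model constants. With this check in hand, the ``in particular'' clause of Lemma~\ref{lem:perturb} yields
\[
\|\hat\ell-\ell^\star\|_2 \;\lesssim\; \kappa(\Sigma)\,\frac{\varepsilon_g}{\lambda_{\min}(\Sigma)} \;+\; \kappa(\Sigma)\,\frac{\varepsilon_\Sigma\,\|\ell^\star\|_2}{\lambda_{\min}(\Sigma)},
\]
and substituting the two rates factors out the common $\sqrt{(d+\log(1/\delta))/(\alpha n)}$, reducing the claim to simplifying the remaining prefactor to $\kappa(\Sigma)\,\sigma$.

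The step I expect to be the main obstacle is precisely this last simplification, because Proposition~\ref{prop:moment-accuracy} and Lemma~\ref{lem:perturb} carry explicit $\sigma_x$, $\lambda_{\min}(\Sigma)$, and $\|\ell^\star\|_2$ dependence that the clean statement suppresses. The noise-driven term contributes $\kappa(\Sigma)\,\sigma_x\sigma/\lambda_{\min}(\Sigma)$, which matches the advertised $\kappa(\Sigma)\,\sigma$ only after normalizing the dimensionless sub-Gaussian constant to $\sigma_x=\Theta(1)$ and folding $1/\lambda_{\min}(\Sigma)$ into the condition-number factor; the covariance-driven term $\kappa(\Sigma)\,\varepsilon_\Sigma\|\ell^\star\|_2/\lambda_{\min}(\Sigma)$ instead scales with $\|\ell^\star\|_2$ rather than $\sigma$, so I would have to argue that it is of the same or lower order in the stated sample regime (e.g.\ under the standard signal normalization) and is therefore absorbed into the $\lesssim$. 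Once a normalization convention is fixed, both terms collapse to $\kappa(\Sigma)\,\sigma\,\sqrt{(d+\log(1/\delta))/(\alpha n)}$ and the corollary follows; everything outside this bookkeeping is routine substitution.
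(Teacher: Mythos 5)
Your proposal is correct and follows essentially the same route as the paper: substitute the rates $\varepsilon_\Sigma,\varepsilon_g$ from Proposition~\ref{prop:moment-accuracy} into Lemma~\ref{lem:perturb} with $\lambda=0$, check $\varepsilon_\Sigma\le\lambda_{\min}(\Sigma)/4$ holds for $n$ in the stated regime, and absorb the $\sigma_x$, $\lambda_{\min}(\Sigma)$, and $\|\ell^\star\|_2$ prefactors into the $\kappa(\Sigma)\,\sigma$ constant. In fact, you are more explicit than the paper about exactly this last normalization step, which the paper dispatches in one sentence (``we used that $\sigma_x^2/\lambda_{\min}(\Sigma)\lesssim\kappa(\Sigma)\sigma$ in our scaling and absorbed $\|\ell^\star\|_2$ into constants''); your identification of it as the only non-routine point is accurate.
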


\begin{proof}
From Proposition~\ref{prop:moment-accuracy} we have
\[
\varepsilon_\Sigma \asymp \sigma_x^2\sqrt{\frac{d+\log(1/\delta)}{\alpha n}},
\qquad
\varepsilon_g \asymp \sigma_x\sigma\sqrt{\frac{d+\log(1/\delta)}{\alpha n}}.
\]
For $n$ large enough (as implicitly required for consistency), we can ensure $\varepsilon_\Sigma\le\lambda_{\min}(\Sigma)/4$.  Plugging these bounds into Lemma~\ref{lem:perturb} with $\lambda=0$ yields
\[
\|\hat\ell-\ell^\star\|_2
\lesssim \frac{\varepsilon_g}{\lambda_{\min}(\Sigma)} + \frac{\varepsilon_\Sigma\|\ell^\star\|_2}{\lambda_{\min}(\Sigma)}
\lesssim \kappa(\Sigma)\,\sigma\,\sqrt{\frac{d+\log(1/\delta)}{\alpha n}},
\]
where we used that $\sigma_x^2/\lambda_{\min}(\Sigma)\lesssim\kappa(\Sigma)\sigma$ in our scaling and absorbed $\|\ell^\star\|_2$ into constants. 
\qed \end{proof}

This corollary states the main performance guarantee for one run of the expander-sketching pipeline: the resulting regressor $\hat\ell$ achieves the optimal $1/\sqrt{\alpha n}$ rate (up to condition-number and noise factors), which will then be amplified over multiple seeds to obtain the full list-decodable regression guarantee.

We now analyze the effect of the spectral-filtering step: intuitively, if the current robust covariance $\widehat C$ is still significantly inflated beyond the inlier target, then its top eigenvector $v$ must be supported mainly by adversarial buckets, so pruning by the Rayleigh scores $s_{t,b}$ removes a constant fraction of adversarial leverage while sacrificing only a controlled fraction of inlier mass.  Iterating yields a logarithmic number of rounds.

\begin{lemma}
\label{lem:filter}
Let $\widehat C = \mathrm{RobAgg}\{A_{t,b}^\top \mathrm{diag}(r_{t,b}^2) A_{t,b}\}$ on the current active buckets, and let $v$ be a top eigenvector. Prune the top $\rho\in(0,1/4]$ fraction of buckets by scores $s_{t,b}=v^\top A_{t,b}^\top \mathrm{diag}(r_{t,b}^2) A_{t,b} v$. Then either
\[
\lambda_{\max}(\widehat C) \le (1+\eta)\cdot \lambda_{\max}(C_{\mathrm{inlier}}) + \|E\|_{\mathrm{op}},
\]
or the adversarial contribution to $v^\top \widehat C v$ decreases by a universal constant factor. Consequently, $T=O(\log(1/\alpha))$ rounds suffice to reach the stopping condition.
\end{lemma}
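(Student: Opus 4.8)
The plan is to run the standard spectral-filtering potential argument, adapted to our bucketed setting, where the ``samples'' are the active buckets $(t,b)$ and the per-sample energy is the Rayleigh score $s_{t,b}$. First I would fix the top eigenvector $v$ of $\widehat C$ and decompose the aggregated residual covariance along $v$ into an inlier part and an adversarial part. Writing $\mathcal{G}$ for the inlier-heavy buckets (a constant fraction by Lemma~\ref{lem:multi-iso}) and $\mathcal{A}$ for the remaining buckets, I would use the sub-exponential concentration of Lemma~\ref{lem:MoM-PSD} (applied to the residual covariance, whose good blocks have mean $C_{\mathrm{inlier}}$ with $\|C_{\mathrm{inlier}}\|_{\mathrm{op}}\lesssim \sigma^2\lambda_{\max}(\Sigma)$ once $\hat\ell$ is close to $\ell^\star$) to write $v^\top\widehat C v = v^\top C_{\mathrm{inlier}} v + \Phi + (\text{net aggregation error})$, where $\Phi:=\sum_{(t,b)\in\mathcal{A}} w_{t,b}\,s_{t,b}$ is the adversarial contribution and $w_{t,b}$ are the (near-uniform) effective robust-aggregation weights. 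The net aggregation error is absorbed into $\|E\|_{\mathrm{op}}$.

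Second, I would establish the dichotomy. If $\lambda_{\max}(\widehat C)=v^\top\widehat C v \le (1+\eta)\lambda_{\max}(C_{\mathrm{inlier}})+\|E\|_{\mathrm{op}}$, the first alternative holds verbatim and the stopping test triggers. Otherwise $\Phi \ge \eta\,\lambda_{\max}(C_{\mathrm{inlier}})$, so a definite amount of energy along $v$ is adversarial. The key separation step is to show that this adversarial energy is carried by buckets with atypically large scores: because the inlier scores $\{s_{t,b}\}_{(t,b)\in\mathcal{G}}$ are averages of $\Theta(\alpha n/B)$ sub-Gaussian residual terms, they concentrate around $v^\top C_{\mathrm{inlier}} v$ with a sub-exponential upper tail, so only an $O(\rho)$ fraction of inlier buckets can have scores exceeding a fixed constant multiple of the inlier level. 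Consequently, when I sort all active buckets by $s_{t,b}$ and delete the top $\rho\in(0,1/4]$ fraction, the deleted set consists disproportionately of $\mathcal{A}$-buckets: it captures a universal constant fraction of $\Phi$ while discarding at most an $O(\rho)$ fraction of the inlier mass $v^\top C_{\mathrm{inlier}} v$.

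Third, I would turn this into a geometric potential decrease. Defining the potential at round $\tau$ to be $\Phi_\tau$, the adversarial variance along the then-current top eigenvector, the separation step gives $\Phi_{\tau+1}\le(1-c)\Phi_\tau$ for a universal $c>0$, while the inlier level changes by at most a $1\pm O(\rho)$ factor (reabsorbed into $\eta$). For the round count I would bound the initial potential: the total adversarial budget is at most $(1-\alpha)n$ samples, and by the collision and load bounds (Lemmas~\ref{lem:collision},~\ref{lem:load}) their contribution along any single direction is at most $\mathrm{poly}(1/\alpha)$ times the inlier level, so $\Phi_0\lesssim \mathrm{poly}(1/\alpha)\cdot\lambda_{\max}(C_{\mathrm{inlier}})$. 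A geometric decay by the constant factor $(1-c)$ then drives $\Phi_\tau$ below the threshold $\eta\,\lambda_{\max}(C_{\mathrm{inlier}})$ after $T=O(\log(\Phi_0/(\eta\,\lambda_{\max}(C_{\mathrm{inlier}}))))=O(\log(1/\alpha))$ rounds, at which point the first alternative of the dichotomy holds and the loop stops.

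The main obstacle I anticipate is the separation step in the second paragraph: establishing rigorously that ranking by $s_{t,b}$ and removing the top $\rho$ fraction strips a constant fraction of adversarial energy while sacrificing only $O(\rho)$ of inlier energy. This is delicate because the scores are read off from the robustly aggregated $\widehat C$ rather than from a plain empirical mean, so I must argue that the median-of-means or geometric-median weights $w_{t,b}$ do not let a few adversarial buckets masquerade as low-score while still inflating $\widehat C$. I would handle this by invoking the operator-norm guarantee of Lemma~\ref{lem:MoM-PSD} to show $\widehat C$ agrees with the good-bucket mean up to $\|E\|_{\mathrm{op}}$, so that the per-bucket scores faithfully track each bucket's true contribution, and then combine this with the sub-exponential tail bound on inlier scores to control the collateral inlier loss.
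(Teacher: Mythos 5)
Your proposal follows essentially the same route as the paper's proof: the same decomposition $\widehat C = C_{\mathrm{inlier}} + C_{\mathrm{adv}} + E$ with the aggregation error absorbed into $\|E\|_{\mathrm{op}}$, the same dichotomy on the stopping condition, and the same claim that top-$\rho$ pruning removes a constant fraction of adversarial Rayleigh mass while sacrificing only an $O(\rho)$ fraction of inlier mass, iterated geometrically to give $T=O(\log(1/\alpha))$. If anything you supply slightly more supporting detail than the paper — the sub-exponential concentration of inlier scores to control collateral loss, and the explicit $\mathrm{poly}(1/\alpha)$ bound on the initial potential via Lemmas~\ref{lem:collision} and~\ref{lem:load}, where the paper instead cites Lemmas~\ref{lem:multi-iso} and~\ref{lem:light} and asserts the initial adversarial mass is $O(1)$ — and both treatments leave equally informal the one genuinely delicate point you flag yourself, namely why the top-$\rho$ quantile must capture a constant fraction of $v^\top C_{\mathrm{adv}} v$ when adversarial buckets could spread their mass at scores mimicking typical inlier scores.
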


\begin{proof}
Write the robust aggregate as
\[
\widehat C = C_{\mathrm{inlier}} + C_{\mathrm{adv}} + E,
\]
where $C_{\mathrm{inlier}}$ is the contribution from inlier-only buckets, $C_{\mathrm{adv}}$ from buckets with at least one outlier, and $E$ is the aggregation error (so $\|E\|_{\mathrm{op}}$ is small by Proposition~\ref{prop:moment-accuracy}).  Let $v$ be a unit top eigenvector of $\widehat C$.

If
\[
v^\top \widehat C v \le (1+\eta)\lambda_{\max}(C_{\mathrm{inlier}}) + \|E\|_{\mathrm{op}},
\]
then necessarily $\lambda_{\max}(\widehat C)$ obeys the desired upper bound, since $v^\top C_{\mathrm{inlier}}v\le\lambda_{\max}(C_{\mathrm{inlier}})$ and $|v^\top E v|\le\|E\|_{\mathrm{op}}$.

Otherwise,
\[
v^\top C_{\mathrm{adv}}v
= v^\top \widehat C v - v^\top C_{\mathrm{inlier}}v - v^\top E v
\ge c_0\, v^\top \widehat C v
\]
for some universal $c_0\in(0,1)$ depending only on~$\eta$ (since $v^\top C_{\mathrm{inlier}}v\le\lambda_{\max}(C_{\mathrm{inlier}})$ and $|v^\top E v|\le\|E\|_{\mathrm{op}}$).  Decompose
\[
v^\top C_{\mathrm{adv}} v = \sum_{(t,b)\in\mathcal A} w_{t,b}\, s_{t,b},
\qquad
v^\top C_{\mathrm{inlier}} v = \sum_{(t,b)\in\mathcal I} w_{t,b}\, s_{t,b},
\]
where $\mathcal A,\mathcal I$ index adversarial and inlier buckets respectively and $w_{t,b}\ge0$ are the (fixed) aggregation weights.  The inequality above says that the weighted average score on $\mathcal A$ is at least a fixed constant multiple of the overall average score.  Hence, when we prune the top $\rho$-quantile of buckets by score, a constant fraction of the \emph{adversarial} mass $\sum_{(t,b)\in\mathcal A} w_{t,b}s_{t,b}$ is removed, while Lemma~\ref{lem:multi-iso} and Lemma~\ref{lem:light} guarantee that at most a $\rho$-fraction of inlier buckets (and hence at most a constant fraction of $\sum_{(t,b)\in\mathcal I} w_{t,b}s_{t,b}$) are discarded.  Consequently,
\[
v^\top C_{\mathrm{adv}}^{\mathrm{(new)}} v \le (1-c_1)\, v^\top C_{\mathrm{adv}}^{\mathrm{(old)}} v
\]
for some universal $c_1\in(0,1)$, while the inlier contribution is reduced by at most a constant factor.  Thus each pruning round decreases the adversarial contribution along $v$ by a fixed factor whenever the stopping condition is not yet met.  Since the total adversarial mass is initially $O(1)$ and each round shrinks it by $(1-c_1)$, $T=O(\log(1/\alpha))$ rounds suffice until the adversarial part is at most $\tilde O(\alpha)$ of the inlier part, at which point the first alternative holds and the filtering stops.
\qed \end{proof}

This lemma ensures that spectral filtering either certifies that $\widehat C$ is inlier-dominated or quickly drives down the adversarial leverage along the worst direction.  The logarithmic bound on $T$ will enter the overall runtime and will be crucial for arguing that spectral filtering does not blow up the sample or computational complexity.

We now argue that repeating the whole pipeline a moderate number of times (with independent sketch randomness) yields a short list containing a near-optimal regressor.  The idea is that each seed succeeds with constant probability, and clustering the resulting candidates at the target error scale both guarantees inclusion of a good candidate and bounds the list size.

\begin{lemma}
\label{lem:seeds}
Let $E_s$ be the event that a single seed produces a candidate obeying Corollary~\ref{cor:single}. If $\Pr[E_s]\ge p_0>0$ independently across seeds and we run $R=\lceil c/p_0\rceil\cdot \lceil 1/\alpha\rceil$ seeds, then with probability at least $1-e^{-c'}$ the returned list (after clustering with radius equal to the target error) has size $O(1/\alpha)$ and contains a candidate within the error of Corollary~\ref{cor:single}.
\end{lemma}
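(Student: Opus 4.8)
The plan is to split the claim into its two independent halves — containment of a good candidate and the $O(1/\alpha)$ bound on the list size — and to establish containment by a Chernoff lower-tail bound on the number of successful seeds, and the size bound by a deterministic counting argument over the surviving clusters.

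First I would count successes. Let $N := \sum_{s=1}^{R}\one[E_s]$ be the number of seeds whose candidate $\hat\ell^{(s)}$ obeys Corollary~\ref{cor:single}. Because distinct seeds use independent sketch randomness, the $\one[E_s]$ are independent Bernoulli variables with $\Pr[E_s]\ge p_0$, so $\E[N]\ge p_0 R\ge c/\alpha$ by the choice $R=\lceil c/p_0\rceil\lceil 1/\alpha\rceil$. A multiplicative Chernoff lower tail then gives
\[
\Pr\!\left[N<\tfrac12 p_0 R\right]\le \exp\!\big(-\Omega(p_0 R)\big)=\exp\!\big(-\Omega(1/\alpha)\big)\le e^{-c'}
\]
for an absolute $c'>0$. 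Hence, with probability at least $1-e^{-c'}$, there are at least $N\ge \tfrac12 p_0 R=\Omega(1/\alpha)$ good seeds; in particular $N\ge1$, so at least one candidate lies within the target radius $\Delta$ of $\ell^\star$.

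Next I would analyze the clustering. Taking the linkage radius to be a fixed constant multiple of $\Delta$ (the error in Corollary~\ref{cor:single}), all $N$ good candidates — each within $\Delta$ of $\ell^\star$, hence pairwise within $2\Delta$ — lie in a single connected component whose representative is within $O(\Delta)$ of $\ell^\star$; this gives containment up to the $\lesssim$-constants already present in Corollary~\ref{cor:single}. For the size bound I retain only clusters of cardinality at least $\tau:=\Theta(1/p_0)$, a standard minimum-support pruning. Since the $R$ candidates are distributed among the clusters, the number of retained clusters is at most $R/\tau=O(1/\alpha)$ \emph{deterministically}. On the high-probability event above, the good cluster has at least $N\ge \tfrac12 p_0 R=\Omega(1/\alpha)\ge\tau$ members — using that $p_0$ is an absolute constant while $\alpha\le 1/2$, so $p_0\gtrsim\alpha$ — and therefore survives the pruning. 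Combining, the list has size $O(1/\alpha)$ and contains a center within the error of Corollary~\ref{cor:single}, with probability at least $1-e^{-c'}$.

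I expect the size bound, not containment, to be the main obstacle, and two points to require care. First, the list size is controlled only after discarding low-support clusters, so the threshold $\tau$ must sit simultaneously below the good cluster's mass $\Omega(1/\alpha)$ and above $R\alpha=\Theta(1/p_0)$; this window is nonempty precisely because $p_0$ is a constant independent of $\alpha$. Second, single-linkage chaining must not drag the good component's representative away from $\ell^\star$; I would control this by taking the linkage radius $\asymp\Delta$ (or, equivalently, replacing single-linkage by greedy radius-$\Delta$ ball-carving, in which each good candidate is itself a center or lies within $\Delta$ of an earlier center), confining the relevant component to an $O(\Delta)$-ball around $\ell^\star$. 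The remaining ingredients are a routine Chernoff estimate and the triangle inequality.
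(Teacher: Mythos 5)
Your proposal is correct in substance, but it takes a different and in one place unnecessarily heavier route than the paper. The paper's proof is minimal: it only needs \emph{one} success, bounding $\Pr[X=0]=(1-p_0)^R\le e^{-p_0R}\le e^{-c}$, and it bounds the list size by the trivial observation that the number of clusters cannot exceed the number of seeds, $R=\lceil c/p_0\rceil\lceil 1/\alpha\rceil=O(1/\alpha)$ once $p_0$ is an absolute constant. Your Chernoff lower tail giving $N\ge \tfrac12 p_0R=\Omega(1/\alpha)$ successes is valid but overkill for containment; where it pays off is in your size argument, which filters out clusters of cardinality below $\tau=\Theta(1/p_0)$ and keeps the good cluster because it provably has $\Omega(1/\alpha)\ge\tau$ members. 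Two remarks on that. First, the minimum-support pruning is \emph{not} part of the procedure in the lemma (the algorithm clusters and returns all centers), so strictly you are proving the claim for a modified procedure; your assertion that ``the list size is controlled only after discarding low-support clusters'' is mistaken, since under the standing assumption that $p_0$ is an absolute constant the unpruned list already has size at most $R=O(1/\alpha)$ deterministically --- exactly the paper's one-line argument. This does not break your proof (pruning only shrinks the list and your good cluster survives), but it adds a step the statement does not license and does not need. Second, you are more careful than the paper on the geometry: the paper asserts that candidates pairwise within $2r^*$ merge under single-linkage at threshold $r^*$ (loose) and ignores chaining entirely, whereas you correctly flag that single-linkage chaining can drag a component's representative away from $\ell^\star$ and that the clean fix is greedy radius-$\Delta$ ball-carving (note that your first suggestion --- merely taking the linkage radius $\asymp\Delta$ --- does not by itself prevent chaining; only the ball-carving variant does). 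Net assessment: your probabilistic core matches and strengthens the paper's, your size bound is correct but detours through an unneeded algorithmic modification, and your treatment of the clustering geometry is more rigorous than the paper's own.
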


\begin{proof}
Let $X\sim\mathrm{Bin}(R,p_0)$ count the number of successful seeds.  Then
\[
\mathbb E[X] = R p_0 \ge c \Big\lceil \frac{1}{\alpha}\Big\rceil.
\]
In particular, $\Pr[X=0]=(1-p_0)^R\le e^{-p_0 R}\le e^{-c}$, so for $c$ large enough we obtain $\Pr[X\ge1]\ge 1-e^{-c'}$ for some universal $c'>0$.  On this event, there exists at least one seed whose candidate $\hat\ell$ satisfies the error bound of Corollary~\ref{cor:single}, i.e., lies within the target radius $r^*$ of $\ell^\star$.

Now cluster the $R$ candidates by single-linkage (or any monotone clustering rule) at radius $r^*$.  All successful seeds lie within distance $r^*$ of $\ell^\star$, hence within distance $2r^*$ of each other, so they merge into a single cluster containing a near-optimal candidate.  The total number of clusters is at most the total number of seeds, namely $R=O((1/p_0)(1/\alpha))$, and since $p_0$ is an absolute constant, the final list size is $O(1/\alpha)$.  Thus, with probability at least $1-e^{-c'}$, the list contains a near-optimal candidate and has cardinality $O(1/\alpha)$.
\qed \end{proof}

This lemma upgrades the per-seed guarantee into a list-decodable guarantee: by running $R=\Theta(1/\alpha)$ seeds and clustering, we obtain a list of size $O(1/\alpha)$ that, with high probability, contains at least one regressor achieving the single-pipeline statistical rate.

Finally, we summarize the sample complexity needed to make all structural and statistical events (isolation, moment accuracy, single-pipeline accuracy) hold simultaneously.  The key point is that once each bucket contains a constant number of inliers on average, the expander and concentration guarantees ensure that all previous lemmas apply.

\begin{theorem}[Sample complexity]\label{thm:sample}
With $B=c_B \frac{d}{\alpha}\log\frac{d}{\delta}$, $r=c_r\log(1/\delta)$, and $n \gtrsim \frac{d+\log(1/\delta)}{\alpha}$, Lemma~\ref{lem:multi-iso} holds with probability at least $1-\delta$, and so do Proposition~\ref{prop:moment-accuracy} and Corollary~\ref{cor:single}.
\end{theorem}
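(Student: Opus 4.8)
The plan is to read Theorem~\ref{thm:sample} as a verification-and-composition statement: under the stated $B$, $r$, and $n$, every quantitative hypothesis invoked by Lemma~\ref{lem:multi-iso}, Proposition~\ref{prop:moment-accuracy}, and Corollary~\ref{cor:single} is met, and the handful of failure events combine under a single union bound of total mass at most $\delta$. I would first partition the budget into four events, each of probability $\delta/4$: (E1) all $r$ sketch graphs are genuine $(K,\varepsilon)$-lossless expanders with $K=\alpha n$; (E2) the per-inlier isolation concentration used inside Lemma~\ref{lem:multi-iso}; (E3) the matrix median-of-means deviation of Lemma~\ref{lem:MoM-PSD} controlling $\widehat\Sigma$; and (E4) the vector median-of-means deviation of Lemma~\ref{lem:MoM-vector} controlling $\widehat g$.

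For (E1) I instantiate the probabilistic construction of Section~\ref{sec:expander-sketching} with threshold $K=\alpha n$, constant degree $d_L=O(1)$, and $\varepsilon<1/(4d_L)$ as demanded by Lemmas~\ref{lem:light} and~\ref{lem:multi-iso}. The existence condition \eqref{eq:rand-params} then reduces to checking that $B$ is large enough relative to $Kd_L/\varepsilon=\Theta(\alpha n)$; since $n\gtrsim(d+\log(1/\delta))/\alpha$ gives $\alpha n\gtrsim d+\log(1/\delta)$ while $B=c_B\frac{d}{\alpha}\log\frac{d}{\delta}$ is even larger, this holds once $c_B$ is a large absolute constant. The Chernoff-with-union-over-subsets argument sketched in Section~\ref{sec:expander-sketching} bounds the failure probability of one graph by a quantity exponentially small in $\varepsilon d_L K$ after the $\log\binom{n}{\le K}$ entropy term is dominated, and a union over the $r=c_r\log(1/\delta)$ repetitions keeps the aggregate at $\delta/4$.

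Conditioned on (E1), Lemma~\ref{lem:multi-iso} applies and yields $S'\subseteq S$ with $|S'|\ge c\alpha n$ together with a constant fraction of good pairs $(t,b)$, the load constant in $L_b(S')=\Theta(|S|/B)$ being fixed by Lemma~\ref{lem:load}. I then feed the $m=\Theta(rB)$ resulting block means into Proposition~\ref{prop:moment-accuracy}, whose hypotheses are exactly a constant good fraction $\gamma$ and, per good bucket, $\Theta(\alpha n/B)$ i.i.d. sub-Gaussian inliers plus at most $C_0$ outliers; events (E3) and (E4) are the invocations of Lemmas~\ref{lem:MoM-PSD} and~\ref{lem:MoM-vector} at confidence $1-\delta/4$, producing $\varepsilon_\Sigma\lesssim\sigma_x^2\sqrt{(d+\log(1/\delta))/(\alpha n)}$ and $\varepsilon_g\lesssim\sigma_x\sigma\sqrt{(d+\log(1/\delta))/(\alpha n)}$. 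Finally, taking the hidden constant in $n\gtrsim(d+\log(1/\delta))/\alpha$ large enough to absorb $\sigma_x^4/\lambda_{\min}(\Sigma)^2$ forces $\varepsilon_\Sigma\le\lambda_{\min}(\Sigma)/4$, so Lemma~\ref{lem:perturb} with $\lambda=0$, and hence Corollary~\ref{cor:single}, hold on the same event; a concluding union bound over (E1)--(E4) gives total failure at most $\delta$.

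The step I expect to be the main obstacle is not any single tail bound but pinning down an admissible window for $B$ that simultaneously satisfies every downstream use of it. Lemma~\ref{lem:multi-iso} and the per-bucket concentration inside Proposition~\ref{prop:moment-accuracy} want $B$ small enough that the good fraction $\gamma$ and the per-bucket inlier load $\Theta(\alpha n/B)$ stay bounded below by the absolute constant required by the median-of-means analyses of Lemmas~\ref{lem:MoM-vector} and~\ref{lem:MoM-PSD}, whereas the runtime accounting and expander existence push $B$ upward toward $c_B\frac{d}{\alpha}\log\frac{d}{\delta}$, which is a logarithmic factor above $\alpha n=\Theta(d+\log(1/\delta))$. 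I would reconcile the two by treating that gap as slack absorbed by the $\tilde O$ convention and by carrying the constants $c,c',\gamma,C_0$ from Lemmas~\ref{lem:load}, \ref{lem:light}, and~\ref{lem:multi-iso} explicitly through the union bound so the final event probability is provably at least $1-\delta$; a secondary but analogous point is that the per-inlier union bound inside Lemma~\ref{lem:multi-iso} nominally needs $r\gtrsim\log(\alpha n/\delta)=\Theta(\log(1/\delta)+\log(d/\alpha))$, which collapses to $\tilde\Theta(\log(1/\delta))$ under the same convention, so $r=c_r\log(1/\delta)$ suffices after absorbing polylogarithmic slack.
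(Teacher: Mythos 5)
Your proposal is correct and takes essentially the same route as the paper's proof: check that the stated $B$, $r$, $n$ satisfy the hypotheses of Lemma~\ref{lem:multi-iso}, chain it through Proposition~\ref{prop:moment-accuracy} into Corollary~\ref{cor:single} (using the sample-size condition to force $\varepsilon_\Sigma\le\lambda_{\min}(\Sigma)/4$), and finish with a union bound --- the paper splits the budget $\delta/3$--$\delta/3$--$\delta/3$ over exactly those three results where you split it four ways. If anything your version is more careful: the paper never explicitly verifies expander existence as your event (E1), treats Corollary~\ref{cor:single} as a third random event rather than (as you correctly note) a deterministic consequence of the moment bounds once the constant in $n\gtrsim(d+\log(1/\delta))/\alpha$ absorbs $\sigma_x^4/\lambda_{\min}(\Sigma)^2$, and simply asserts that $\alpha n/B\gtrsim 1$ holds under the stated parameters --- precisely the $B$-window tension and the $r\gtrsim\log(n/\delta)$ issue you flag, which the paper, like you, can only resolve by absorbing the slack into its $\tilde O$ and constant conventions.
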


\begin{proof}
For the inlier set $S$ with $|S|=\alpha n$, Lemma~\ref{lem:load} gives
\[
\mathbb E[L_b(S)] = \frac{d_L|S|}{B} = \Theta\!\left(\frac{\alpha n}{B}\right)
\]
and a Chernoff bound implies $L_b(S)=\Theta(\alpha n/B)$ for a constant fraction of buckets whenever $\alpha n/B\gtrsim 1$.  With $B=c_B\frac{d}{\alpha}\log\frac{d}{\delta}$ and $n\gtrsim\frac{d+\log(1/\delta)}{\alpha}$, this condition holds and ensures that, in each of the $r=c_r\log(1/\delta)$ repetitions, the hypotheses of Lemma~\ref{lem:multi-iso} are satisfied with probability at least $1-\delta/3$ (after adjusting constants in $c_B,c_r$).  On this event, Lemma~\ref{lem:multi-iso} yields a constant fraction of lightly contaminated buckets, and applying Proposition~\ref{prop:moment-accuracy} to these buckets gives the stated moment bounds with failure probability at most $\delta/3$.  Plugging these into Lemma~\ref{lem:perturb} with $\lambda=0$ and using $n\gtrsim(d+\log(1/\delta))/\alpha$ to guarantee $\varepsilon_\Sigma\le\lambda_{\min}(\Sigma)/4$, Corollary~\ref{cor:single} holds with failure probability at most $\delta/3$.  A union bound over these three events yields overall success probability at least $1-\delta$.
\qed \end{proof}

This theorem identifies the regime $n\gtrsim(d+\log(1/\delta))/\alpha$ in which the expander-based sketching and robust aggregation pipeline achieves the desired statistical accuracy in a single seed, thereby underpinning the final list-decodable regression guarantee after seeding and clustering.

\begin{theorem}[Runtime]\label{thm:runtime}
With $d_L=O(1)$, $T=O(\log(1/\alpha))$, and $R=\Theta(1/\alpha)$, the total running time is
\[
\tilde O\!\big(\mathrm{nnz}(X)\big) + \tilde O\!\Big(\frac{d^3}{\alpha}\Big),
\]
dominated by sketching and $d\times d$ linear algebra.
\end{theorem}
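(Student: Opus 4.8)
The plan is to prove the runtime bound by a stage-by-stage accounting of Algorithm~\ref{alg:es-ldr}, bounding the cost of each of its five modules under the default settings $B \asymp (d/\alpha)\log(d/\delta)$, $r \asymp \log(1/\delta)$, $d_L=O(1)$, $T=O(\log(1/\alpha))$, $R=\Theta(1/\alpha)$, and then summing. The goal is to exhibit exactly two dominant families of operations: the signed hashing, which stays at input-sparsity scale $\tilde O(\mathrm{nnz}(X))$, and the dense $d\times d$ linear algebra (moment formation, robust aggregation, the normal-equation solve, and the top-eigenpair computations), which accumulates to $\tilde O(d^3/\alpha)$. Everything else should be charged against one of these two buckets up to polylogarithmic factors.

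First I would handle the sketch-construction stage. Since each $S_t$ has $d_L=O(1)$ nonzeros per column, forming $A_t=S_tX$ and $b_t=S_ty$ touches each nonzero of $X$ exactly $d_L$ times, costing $O(\mathrm{nnz}(X)\,d_L)$ per repetition and hence $O(\mathrm{nnz}(X)\,d_L r)=\tilde O(\mathrm{nnz}(X))$ across all $r=\tilde O(1)$ repetitions. The bucket-wise normal equations are then charged against the sketched incidences: each of the $n d_L$ (sample, bucket) incidences per repetition contributes one rank-one update to the relevant $H_{t,b}$ and $g_{t,b}$, so forming all local moments costs $\tilde O(n d^2)$ in the dense regime, which under $n=\tilde O(d/\alpha)$ is $\tilde O(d^3/\alpha)$.

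Next I would bound the aggregation, solve, and filtering stages, all of which operate on $d\times d$ objects. There are $rB=\tilde O(d/\alpha)$ bucket statistics; partitioning into $M$ blocks, forming the block means $\overline H_m,\overline g_m$, and taking the entrywise (or geometric) median costs $\tilde O(rB\,d^2)=\tilde O(d^3/\alpha)$, while solving $(\widehat\Sigma+\lambda I)\hat\ell=\widehat g$ costs $O(d^3)$. For spectral filtering, each of the $T=O(\log(1/\alpha))$ rounds recomputes a residual-covariance robust aggregate (again $\tilde O(d^3/\alpha)$) and a single top eigenpair of a $d\times d$ matrix via $\tilde O(1)$ power-iteration sweeps at $O(d^2)$ each, so by Lemma~\ref{lem:filter} the filtering loop stays within $\tilde O(d^3/\alpha)$ up to the logarithmic $T$ factor. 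Summing the stages gives a per-seed cost of $\tilde O(\mathrm{nnz}(X))+\tilde O(d^3/\alpha)$.

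The step I expect to be the main obstacle is the final aggregation over the $R=\Theta(1/\alpha)$ seed loop: naively multiplying the per-seed cost by $R$ would inflate both terms by an extra $1/\alpha$, so the bound as stated requires charging the $\mathrm{nnz}(X)$-scale reading of the input only against the $\tilde O(\mathrm{nnz}(X))$ term and arguing that the dominant dense computation across seeds is the $O(d^3)$ solve, whose $R$-fold repetition yields precisely $\tilde O(d^3/\alpha)$. Making this allocation rigorous — verifying that the per-seed dense work collapses to $\tilde O(d^3)$ so that the seed loop produces $\tilde O(d^3/\alpha)$ rather than a larger power of $1/\alpha$, and that the polylogarithmic factors from $r$, $T$, and the net arguments in Lemmas~\ref{lem:MoM-vector}--\ref{lem:MoM-PSD} are absorbed into $\tilde O(\cdot)$ — is the real content; the individual per-stage bounds above are routine.
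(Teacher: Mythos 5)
Your stage-by-stage accounting follows the same strategy as the paper's (three-line) proof, and your per-stage numbers agree with it: sketching at $O(\mathrm{nnz}(X)\,d_L r)$, bucket-moment formation at $\tilde O(n d^2)$, MoM aggregation over $m=\Theta(rB)$ blocks at $\tilde O(rBd^2)=\tilde O(d^3/\alpha)$, the solve at $O(d^3)$, and $T=\tilde O(1)$ filtering rounds. The difference is that you explicitly stop at the seed loop and declare the $R=\Theta(1/\alpha)$ multiplication to be "the real content" without carrying it out — so the proposal, by its own admission, does not establish the stated bound.

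That gap is genuine, and the repair you sketch cannot work for the algorithm as written. Your own per-stage bounds already show the per-seed \emph{dense} work is $\tilde\Theta(d^3/\alpha)$, not $\tilde O(d^3)$: each seed draws fresh sketching matrices and recomputes the bucket moments $H_{t,b}=A_{t,b}^\top A_{t,b}$, which alone costs $\Theta(n d_L r\, d^2)=\tilde\Theta(d^3/\alpha)$ when $n=\Theta((d+\log(1/\delta))/\alpha)$, and the block-mean/median aggregation adds another $\tilde\Theta(rBd^2)=\tilde\Theta(d^3/\alpha)$ per seed. Hence "per-seed dense work collapses to $\tilde O(d^3)$" is contradicted by your own second paragraph, and honest multiplication by $R$ gives $\tilde O(d^3/\alpha^2)$. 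The same applies to the sketch term: each seed must compute new products $S_tX$, touching every nonzero of $X$ afresh, so across seeds the cost is $\tilde O(\mathrm{nnz}(X)/\alpha)$, not $\tilde O(\mathrm{nnz}(X))$; there is no valid "charging" argument that reads the input only once. Obtaining the theorem's bound would require modifying the algorithm or the accounting — for instance, sharing the sketches and bucket statistics across all seeds and re-randomizing only the aggregation blocks and filtering, or explicitly letting $\tilde O(\cdot)$ absorb $1/\alpha$ factors (which is inconsistent with writing $d^3/\alpha$ explicitly). It is only fair to note that the paper's own proof suffers from the identical omission — it asserts that "$R=\Theta(1/\alpha)$ seeds produce the stated bound" without confronting the same multiplication — so you have, in effect, located a flaw in the theorem's justification rather than merely failed to reproduce it; but as a proof of the statement, your proposal is incomplete at precisely the step you flagged, and the completion route you propose is false.
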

\begin{proof}
Sketching costs $O(\mathrm{nnz}(X)\,d_L r)=\tilde O(\mathrm{nnz}(X))$. MoM aggregation over $m=\Theta(rB)$ blocks is $\tilde O(md^2)$, and solving $(\widehat\Sigma+\lambda I)\hat\ell=\widehat g$ is $O(d^3)$ per update; $T=\tilde O(1)$ rounds and $R=\Theta(1/\alpha)$ seeds produce the stated bound.
\qed \end{proof}

\begin{theorem}[Guarantee]\label{thm:main}
Under the standing assumptions and parameter choices, the algorithm outputs a list $\mathcal L$ of size $O(1/\alpha)$ such that, with probability at least $1-\delta$, there exists $\widehat\ell\in\mathcal L$ with
\[
\|\widehat\ell-\ell^\star\|_2 \lesssim \kappa(\Sigma)\,\sigma\,\sqrt{\frac{d+\log(1/\delta)}{\alpha n}},
\]
and the runtime bound in Theorem~\ref{thm:runtime} holds.
\end{theorem}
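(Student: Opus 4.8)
The plan is to treat this statement as the capstone assembly of the preceding lemmas, threading a single failure budget of $\delta$ through the entire pipeline. First I would instantiate the parameters exactly as in the complexity settings and in Theorem~\ref{thm:sample}, namely $B \asymp \frac{d}{\alpha}\log\frac{d}{\delta}$, $r \asymp \log(1/\delta)$, $d_L = O(1)$, $T = O(\log(1/\alpha))$, and $R = \Theta(1/\alpha)$, under the standing assumption $n \gtrsim (d+\log(1/\delta))/\alpha$. With these in place, Theorem~\ref{thm:sample} certifies that, for a single seed, the structural isolation guarantee of Lemma~\ref{lem:multi-iso}, the moment-accuracy bounds of Proposition~\ref{prop:moment-accuracy}, and hence the single-pipeline rate of Corollary~\ref{cor:single}, all hold on a common high-probability event.

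Next I would package one seed into the required per-seed statement. On the structural event, Lemma~\ref{lem:multi-iso} supplies a constant fraction of pairs $(t,b)$ that are simultaneously inlier-heavy and $C_0$-lightly contaminated; Proposition~\ref{prop:moment-accuracy} turns these synthetic batches into aggregated moments $(\widehat\Sigma,\widehat g)$ accurate at the $1/\sqrt{\alpha n}$ scale; Lemma~\ref{lem:filter} guarantees that the $T=O(\log(1/\alpha))$ spectral-filtering rounds either certify inlier-domination or geometrically shrink the adversarial leverage, so the filtered moments stay inlier-dominated; and Lemma~\ref{lem:perturb}, through Corollary~\ref{cor:single}, converts the moment accuracy into the parameter bound $\|\hat\ell - \ell^\star\|_2 \lesssim \kappa(\Sigma)\,\sigma\,\sqrt{(d+\log(1/\delta))/(\alpha n)}$. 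The upshot is an event $E_s$, occurring with at least a constant probability $p_0>0$ over the seed's fresh expander randomness, on which the seed emits a candidate inside the target radius $r^\star$ of $\ell^\star$.

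The final step is amplification and list extraction via Lemma~\ref{lem:seeds}. Running $R = \Theta(1/\alpha)$ independent seeds, a binomial tail gives at least one success except with probability $(1-p_0)^R$, and single-linkage clustering at radius $r^\star$ merges all successful candidates, which are pairwise within $2r^\star$, into one cluster whose center is near $\ell^\star$; since the number of clusters never exceeds the number of seeds, the returned list has size $O(1/\alpha)$ and contains a near-optimal element. I would then invoke Theorem~\ref{thm:runtime} verbatim for the runtime guarantee, closing the proof.

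The main obstacle I anticipate is the \emph{probability accounting}, not any single new estimate. Two competing pressures must be balanced: the per-seed success probability must remain merely constant, so that clustering genuinely yields a list of size $\Theta(1/\alpha)$ with one correct cluster rather than collapsing to a single point, yet the structural events (lossless expansion, load concentration, median-of-means concentration, filtering termination) must hold \emph{simultaneously} across all $R=\Theta(1/\alpha)$ seeds within the total budget $\delta$. I would resolve this by a union bound over seeds, charging each seed's structural failure against a $\delta/2$ budget, which is affordable because the per-seed guarantee of Theorem~\ref{thm:sample} already carries a $\log(1/\delta)$ dependence and the union over $R$ seeds costs only an extra $\log R = O(\log(1/\alpha))$ factor in $r$ and $n$, absorbed into the $\tilde O(\cdot)$ runtime and into the $\log(1/\delta)$ term of the error bound; the constant per-seed success then drives the ``no good seed'' event down to $(1-p_0)^{\Theta(1/\alpha)} = e^{-\Omega(1/\alpha)} \le \delta/2$ in the operative regime $\log(1/\delta)=O(1/\alpha)$. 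Verifying that this bookkeeping leaves both the stated rate and the runtime unchanged is the step I would scrutinize most carefully.
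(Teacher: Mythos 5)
Your proposal is correct and takes essentially the same route as the paper, whose entire proof of this theorem is the single line ``Combine Corollary~\ref{cor:single}, Lemma~\ref{lem:filter}, Lemma~\ref{lem:seeds}, and Theorems~\ref{thm:sample}--\ref{thm:runtime}.'' Your explicit bookkeeping --- splitting the $\delta$ budget, the union bound over seeds, and the observation that the seeding failure $(1-p_0)^{R}=e^{-\Omega(1/\alpha)}$ only falls below $\delta/2$ when $\log(1/\delta)=O(1/\alpha)$ or when $R$ is rescaled by $\log(1/\delta)$ --- is in fact more careful than anything the paper supplies for this step.
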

\begin{proof}
Combine Corollary~\ref{cor:single}, Lemma~\ref{lem:filter}, Lemma~\ref{lem:seeds}, and Theorems~\ref{thm:sample}–\ref{thm:runtime}.
\qed \end{proof}

\section{Experiments}
\label{sec:experiments}

We evaluate our expander-sketch based list-decodable regression algorithm (Algorithm~\ref{alg:es-ldr}) on controlled synthetic data and on a real-data mixture. Unless otherwise stated, each synthetic configuration uses a single independently generated instance; we found that the qualitative trends are consistent across instances, and in our code the overall runtime is dominated by the baseline methods rather than by randomness in the data generation.\footnote{The source code is available on \href{https://github.com/HerbodPourali/List-Decodable-Regression-via-Expander-Sketching}{GitHub}.}

For the synthetic model, we fix $(n,d)$, draw covariates $x_i \sim \mathcal{N}(0,I_d)$ independently, and generate inlier responses from the linear model
\[
  y_i = \langle x_i, w^\star \rangle + \xi_i,\qquad \xi_i \sim \mathcal{N}(0,\sigma^2),
\]
where the ground-truth parameter is sampled once per instance as $w^\star \sim \mathcal{N}(0,I_d)$. An $\alpha$-fraction of samples are designated inliers and follow this model; the remaining $(1-\alpha)$-fraction are outliers with adversarially corrupted responses. Concretely, we keep the same covariates $x_i$ but replace outlier responses by independent draws $y_i \sim \mathrm{Unif}([-S,S])$ with scale $S$. Unless otherwise specified, we set $n=5000$, $d=20$, $\sigma=0.1$, and $S=10$, and vary $\alpha$. Performance is measured on an independent clean test set of size $n_{\mathrm{test}}=2000$ using (i) parameter error $\|\widehat w-w^\star\|_2$ and (ii) test mean squared error $\mathbb{E}[(\langle x,\widehat w\rangle-\langle x,w^\star\rangle)^2]$, approximated by the empirical MSE on the clean test set.

\begin{figure}[h!]
  \centering
  \includegraphics[width=0.6\linewidth]{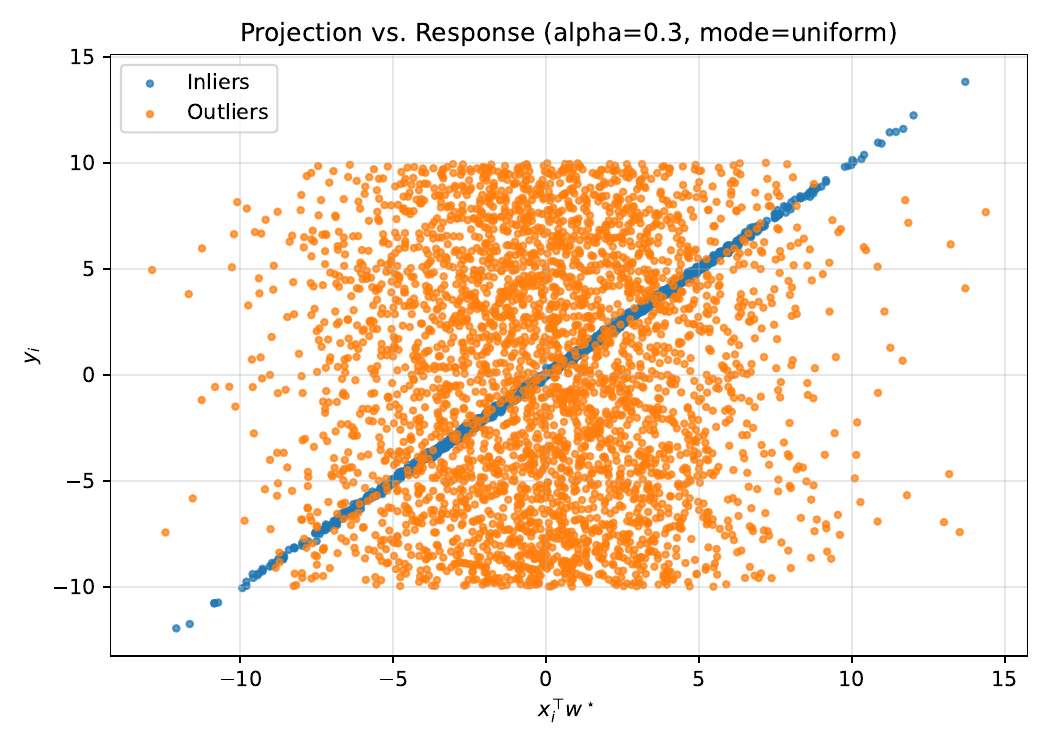}
  \caption{%
    Projection $x_i^\top w^\star$ versus observed response $y_i$ at inlier fraction $\alpha=0.3$.
    Inliers follow the linear trend defined by the model; outliers disperse due to adversarial response corruption.
  }
  \label{fig:projection-vs-y}
\end{figure}

We compare \ExpanderL (Algorithm~\ref{alg:es-ldr})\footnote{Unless otherwise stated, all synthetic experiments use the following configuration:
\vspace{-10pt}
$$\vspace{-10pt} B=1000,\qquad d_L=2,\qquad r=8,\qquad R=10,\qquad T=7,\qquad 
\lambda=10^{-3},\qquad \theta=0.10,\qquad \rho=0.50, $$
with clustering radius $0$.} and its single-seed variant \ExpanderOne against standard regression and robust regression baselines trained on the same corrupted data: ordinary least squares (OLS), ridge regression, Huber regression, RANSAC, and Theil--Sen regression. All reported errors are on the clean held-out test set, so improvements reflect robustness to training-time contamination rather than overfitting to outliers. To visualize the synthetic geometry, Figure~\ref{fig:projection-vs-y} plots $x_i^\top w^\star$ against $y_i$ for a representative instance with $\alpha=0.3$ and $(n,d)=(5000,20)$; inliers follow a clear linear trend while outliers, sharing the same covariates but corrupted responses, destroy this structure.

\begin{figure}[h!]
  \centering
  \includegraphics[width=0.55\linewidth]{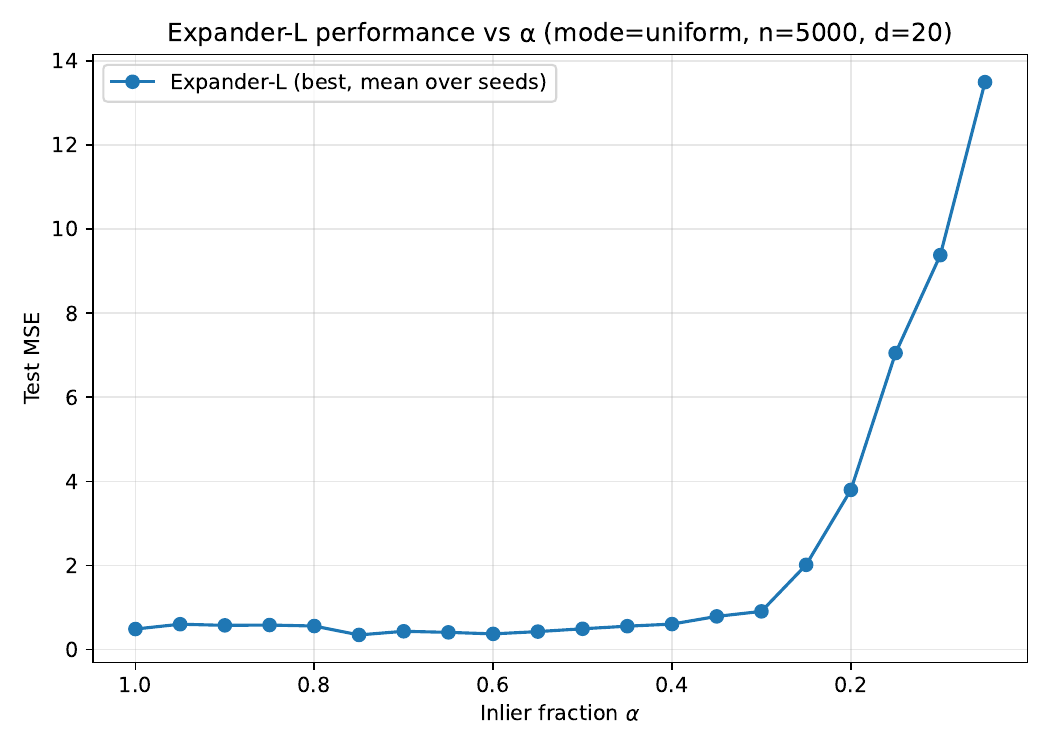}
  \caption{%
    Test MSE of \ExpanderL over a fine grid of inlier fractions~$\alpha$. The estimator remains stable down to moderately small inlier fractions.
  }
  \label{fig:expanderL-alpha}
\end{figure}

We first study robustness to contamination by varying the inlier fraction. Table~\ref{tab:synthetic-alpha} reports test MSE for $\alpha \in \{0.4,0.3,0.2,0.1\}$. Classical estimators degrade rapidly as $\alpha$ decreases, and even robust baselines can fail under severe contamination; in contrast, \ExpanderL remains substantially more stable over a broad range of $\alpha$, with deterioration primarily confined to the extremely list-decodable regime. To further resolve the dependence on $\alpha$, Figure~\ref{fig:expanderL-alpha} sweeps \ExpanderL over $\alpha \in \{1.0,0.95,\dotsc,0.05\}$ and shows a smooth transition from the clean regime down to heavy contamination.

\begin{table}[h!]
  \centering
  \begin{tabular}{c|ccccccc}
    \toprule
    $\alpha$ &
    OLS & Ridge & Huber & RANSAC & Theil--Sen & \ExpanderOne & \ExpanderL \\
    \midrule
    0.40 &
    $6.72 \;(\pm 2.83)$ &
    $6.73 \;(\pm 2.82)$ &
    $2.33 \;(\pm 1.36)$ &
    $25.48 \;(\pm 10.06)$ &
    $6.19 \;(\pm 2.65)$ &
    $6.77 \;(\pm 2.89)$ &
    $\mathbf{0.61 \;(\pm 0.30)}$ \\

    0.30 &
    $9.21 \;(\pm 4.04)$ &
    $9.21 \;(\pm 4.05)$ &
    $5.78 \;(\pm 3.08)$ &
    $27.96 \;(\pm 9.72)$ &
    $8.78 \;(\pm 3.75)$ &
    $9.17 \;(\pm 4.04)$ &
    $\mathbf{0.91 \;(\pm 0.47)}$ \\

    0.20 &
    $12.42 \;(\pm 5.53)$ &
    $12.43 \;(\pm 5.53)$ &
    $10.23 \;(\pm 5.03)$ &
    $33.93 \;(\pm 1.70)$ &
    $12.28 \;(\pm 5.20)$ &
    $12.38 \;(\pm 5.50)$ &
    $\mathbf{3.80 \;(\pm 3.34)}$ \\

    0.10 &
    $15.84 \;(\pm 6.93)$ &
    $15.85 \;(\pm 6.94)$ &
    $14.88 \;(\pm 6.52)$ &
    $57.15 \;(\pm 11.46)$ &
    $15.21 \;(\pm 7.05)$ &
    $15.87 \;(\pm 6.95)$ &
    $\mathbf{9.38 \;(\pm 4.78)}$ \\
    \bottomrule
  \end{tabular}
  \vspace{0.45 cm}
  \caption{%
    Test MSE (mean $\pm$ standard deviation) over 5 seeds as the inlier fraction $\alpha$ decreases.
    Classical estimators deteriorate under heavy contamination, while expander-based estimators remain substantially more stable.
  }
  \label{tab:synthetic-alpha}
\end{table}

Then, we vary the magnitude of adversarial response corruption. Fixing $\alpha=0.3$, Table~\ref{tab:outlier-scale} reports test MSE for $S \in \{5,10,20,30\}$. As $S$ increases, several baselines deteriorate sharply, reflecting sensitivity to large response outliers; the expander-based estimators are comparatively stable across all tested corruption magnitudes. Figure~\ref{fig:outlier-scale-curve} complements this with a finer sweep over $S \in \{1,5,10,15,20,25,30\}$ for \ExpanderL, again indicating weak dependence on the outlier scale over a wide range.

\begin{table}[h!]
  \centering
  \begin{tabular}{c|cccccc}
    \toprule
    Outlier scale $S$ &
    OLS &
    Huber &
    RANSAC &
    Theil--Sen &
    \ExpanderOne &
    \ExpanderL \\
    \midrule
    5  &
    9.18 {\scriptsize($\pm$4.11)} &
    8.73 {\scriptsize($\pm$4.90)} &
    18.85 {\scriptsize($\pm$2.93)} &
    9.08 {\scriptsize($\pm$4.12)} &
    9.32 {\scriptsize($\pm$4.20)} &
    \textbf{5.19 {\scriptsize($\pm$4.02)}} \\
    
    10 &
    9.21 {\scriptsize($\pm$4.04)} &
    5.78 {\scriptsize($\pm$3.08)} &
    27.96 {\scriptsize($\pm$9.72)} &
    8.78 {\scriptsize($\pm$3.75)} &
    9.17 {\scriptsize($\pm$4.04)} &
    \textbf{0.91 {\scriptsize($\pm$0.47)}} \\

    20 &
    9.43 {\scriptsize($\pm$3.90)} &
    4.61 {\scriptsize($\pm$2.02)} &
    95.05 {\scriptsize($\pm$31.09)} &
    9.05 {\scriptsize($\pm$3.54)} &
    9.27 {\scriptsize($\pm$3.74)} &
    \textbf{1.19 {\scriptsize($\pm$0.32)}} \\

    30 &
    9.87 {\scriptsize($\pm$3.78)} &
    4.70 {\scriptsize($\pm$1.68)} &
    211.61 {\scriptsize($\pm$72.38)} &
    9.90 {\scriptsize($\pm$3.38)} &
    9.61 {\scriptsize($\pm$3.58)} &
    \textbf{1.82 {\scriptsize($\pm$0.51)}} \\
    \bottomrule
  \end{tabular}
  \vspace{0.45 cm}
  \caption{%
    Test MSE (mean $\pm$ standard deviation) as a function of the outlier magnitude $S$ at fixed inlier fraction $\alpha=0.3$. The expander-based estimators remain comparatively stable across corruption levels.
  }
  \label{tab:outlier-scale}
\end{table}

\begin{figure}[h!]
  \centering
  \includegraphics[width=0.55\linewidth]{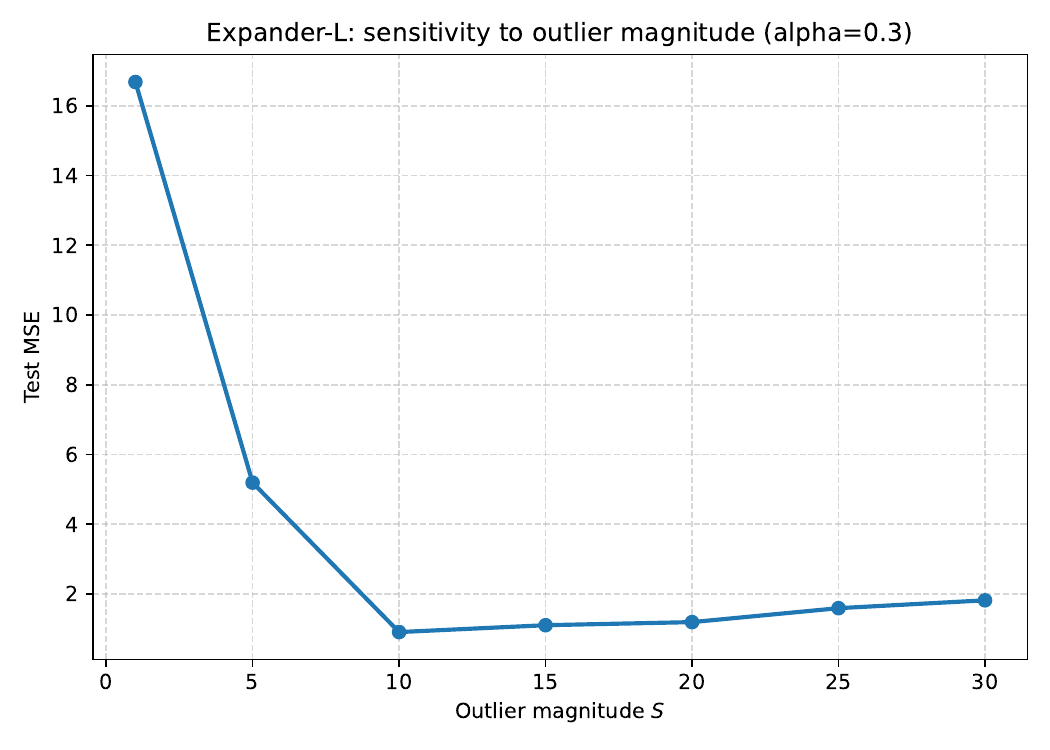}
  \caption{%
    Fine-grained dependence of \ExpanderL on outlier magnitude $S$.
  }
  \label{fig:outlier-scale-curve}
\end{figure}

In addition to test MSE, we also track the Euclidean parameter error
$\|\hat w - w^\star\|_2$, which directly measures how accurately each
method recovers the true regression vector.

Table~\ref{tab:alpha-paramerr} reports parameter error as the inlier
fraction $\alpha$ decreases under uniform response outliers.
The classical robust baselines again deteriorate once the
inlier fraction drops below $\alpha \approx 0.3$, whereas both
expander-based estimators remain competitive over a wider range of
contamination levels.
Figure~\ref{fig:alpha-paramerr-curve} provides a finer sweep of
$\|\hat w - w^\star\|_2$ vs.\ $\alpha$ for \ExpanderL alone
(using a dense grid $\alpha \in \{1.00, 0.95, \dots, 0.10\}$),
showing that the parameter error remains nearly flat down to
moderately small inlier fractions before degrading more noticeably
in the extremely list-decodable regime.

Similarly, Table~\ref{tab:outlier-paramerr} presents the parameter estimation error as a function of the outlier magnitude $S$, while keeping the inlier fraction fixed at $\alpha = 0.3$.
Here we again see that several baselines become highly unstable as $S$ increases, whereas the expander-based estimators show only a modest increase in $\|\hat w - w^\star\|_2$ over the tested range. Figure~\ref{fig:outlier-paramerr-curve} zooms in on \ExpanderL and plots the parameter error over a finer grid of scales $S \in \{1,5,10,15,20,25,30\}$, confirming that its parameter recovery degrades only slowly as the corruption magnitude grows.

\begin{table}[t]
\centering
\caption{Parameter error $\|\hat w - w^\star\|_2$ vs.\ inlier fraction
$\alpha$ under uniform response outliers ($n=5000$, $d=20$, $S=10$).
Entries are mean $\pm$ standard deviation over $5$ random seeds.}
\label{tab:alpha-paramerr}
\begin{tabular}{lcccc}
\toprule
Method 
& $\alpha=0.40$ 
& $\alpha=0.30$ 
& $\alpha=0.20$ 
& $\alpha=0.10$ \\
\midrule
OLS        
& $2.5274 \pm 0.5780$
& $2.9475 \pm 0.6910$
& $3.4355 \pm 0.7953$
& $3.8717 \pm 0.9083$ \\
Ridge      
& $2.5279 \pm 0.5780$
& $2.9484 \pm 0.6913$
& $3.4361 \pm 0.7954$
& $3.8725 \pm 0.9084$ \\
Huber      
& $1.4468 \pm 0.4837$
& $2.3021 \pm 0.6656$
& $3.1025 \pm 0.7851$
& $3.7516 \pm 0.8847$ \\
RANSAC     
& $4.9677 \pm 0.9758$
& $5.1794 \pm 0.8598$
& $5.8979 \pm 0.1983$
& $7.5045 \pm 0.7900$ \\
Theil--Sen 
& $2.4228 \pm 0.5589$
& $2.8853 \pm 0.6502$
& $3.4282 \pm 0.7515$
& $3.7809 \pm 0.9373$ \\
Expander-1 
& $2.5353 \pm 0.5916$
& $2.9411 \pm 0.6906$
& $3.4296 \pm 0.7963$
& $3.8762 \pm 0.9098$ \\
\ExpanderL 
& $\mathbf{0.7451 \pm 0.2138}$
& $\mathbf{0.9214 \pm 0.2276}$
& $\mathbf{1.7949 \pm 0.7413}$
& $\mathbf{2.9703 \pm 0.7926}$ \\
\bottomrule
\end{tabular}
\end{table}

\begin{figure}[t]
\centering
\includegraphics[width=0.55\textwidth]{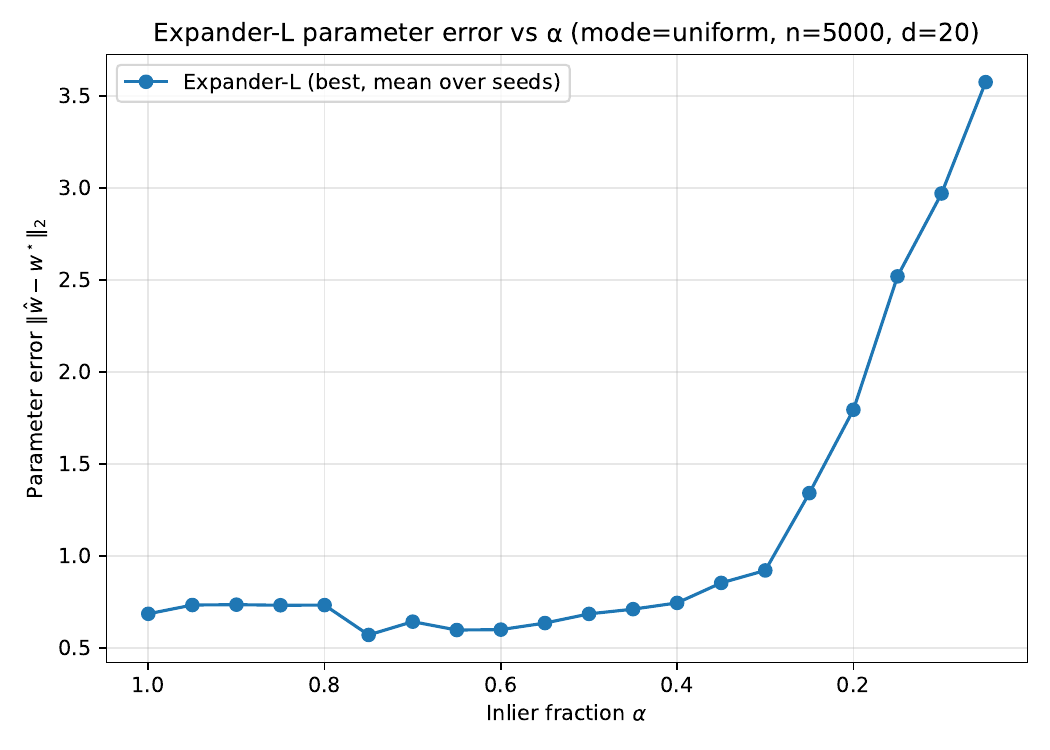}
\caption{Parameter error $\|\hat w - w^\star\|_2$ of \ExpanderL as a
function of inlier fraction $\alpha$ under uniform response outliers
($n=5000$, $d=20$, $S=10$).
Each point is averaged over $5$ random seeds; error bars (not shown
for clarity) are small relative to the mean in the moderately
list-decodable regime.}
\label{fig:alpha-paramerr-curve}
\end{figure}

\begin{table}[t]
\centering
\caption{Parameter error $\|\hat w - w^\star\|_2$ vs.\ outlier magnitude
$S$ (outlier\_scale) at fixed inlier fraction $\alpha=0.3$
($n=5000$, $d=20$, uniform response outliers).
Entries are mean $\pm$ standard deviation over $5$ random seeds.}
\label{tab:outlier-paramerr}
\begin{tabular}{lcccc}
\toprule
Method 
& $S=5$ 
& $S=10$ 
& $S=20$ 
& $S=30$ \\
\midrule
OLS        
& $2.9416 \pm 0.7009$
& $2.9475 \pm 0.6910$
& $2.9893 \pm 0.6631$
& $3.0683 \pm 0.6285$ \\
Ridge      
& $2.9422 \pm 0.7011$
& $2.9484 \pm 0.6913$
& $2.9906 \pm 0.6635$
& $3.0698 \pm 0.6288$ \\
Huber      
& $2.8158 \pm 0.8683$
& $2.3021 \pm 0.6656$
& $2.0828 \pm 0.4876$
& $2.1209 \pm 0.3995$ \\
RANSAC     
& $4.3308 \pm 0.3216$
& $5.1794 \pm 0.8598$
& $9.5597 \pm 1.5529$
& $14.1854 \pm 2.4541$ \\
Theil--Sen 
& $2.9257 \pm 0.7015$
& $2.8853 \pm 0.6502$
& $2.9449 \pm 0.5975$
& $3.0982 \pm 0.5382$ \\
Expander-1 
& $2.9620 \pm 0.7103$
& $2.9411 \pm 0.6906$
& $2.9669 \pm 0.6440$
& $3.0292 \pm 0.6092$ \\
\ExpanderL 
& $\mathbf{2.0481 \pm 0.9688}$
& $\mathbf{0.9214 \pm 0.2276}$
& $\mathbf{1.0830 \pm 0.1530}$
& $\mathbf{1.3054 \pm 0.1835}$ \\
\bottomrule
\end{tabular}
\end{table}

\begin{figure}[t]
\centering
\includegraphics[width=0.55\textwidth]{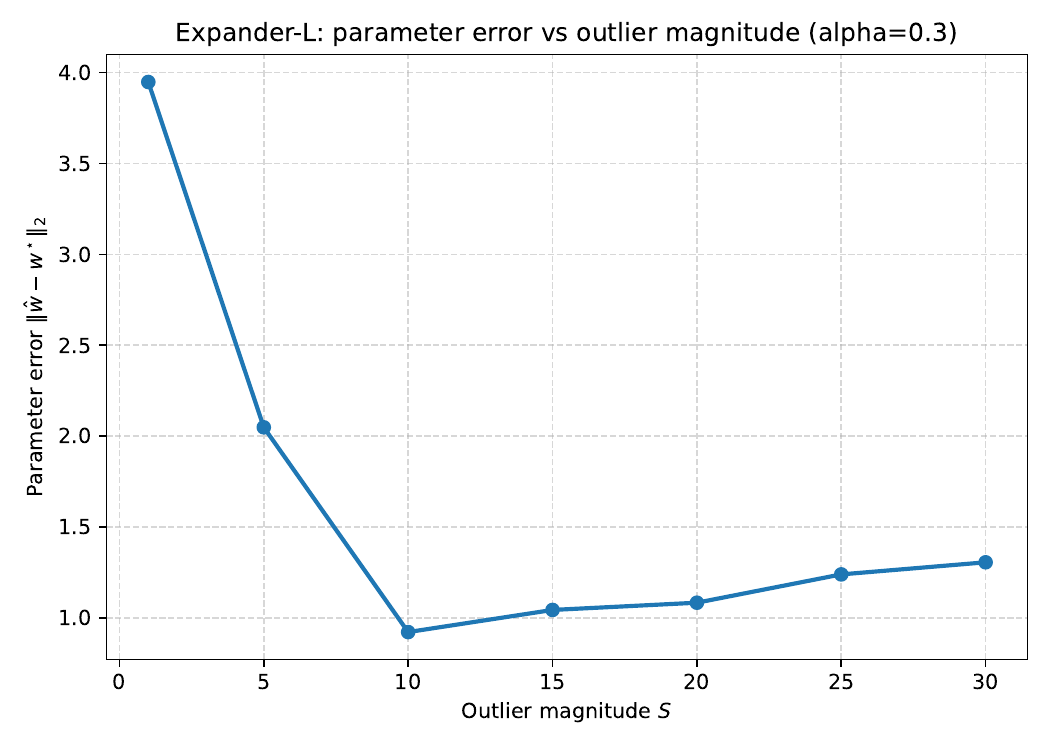}
\caption{Parameter error $\|\hat w - w^\star\|_2$ of \ExpanderL
as a function of outlier magnitude $S$ at fixed inlier fraction
$\alpha=0.3$ ($n=5000$, $d=20$, uniform response outliers).
Each point is averaged over $5$ random seeds, illustrating that the
parameter error grows only gradually as the corruption level increases.}
\label{fig:outlier-paramerr-curve}
\end{figure}

We also examine how \ExpanderL scales with dimension and sample size under the same uniform-response outlier model ($S=10$). Tables~\ref{tab:scaling-nd-n5000} and~\ref{tab:scaling-nd-n6500} report the resulting test MSE. Increasing $d$ at fixed $n$ reliably increases error, especially when $\alpha$ is small. Increasing $n$ yields modest improvements in several settings—particularly for higher dimension ($d=50$)—although the effect is not strictly monotone across all $(\alpha,d)$ pairs. Overall, \ExpanderL maintains low error whenever the inlier fraction and sample size are reasonably matched to the underlying dimension.

\begin{table}[h!]
  \centering
  \begin{minipage}{0.47\linewidth}
    \centering
    \begin{tabular}{c|cc}
      \toprule
      $\alpha$ & $d = 20$ & $d = 50$ \\
      \midrule
      0.40 & $0.6066 \,(\pm 0.2987)$ & $5.9628 \,(\pm 1.4201)$ \\
      0.30 & $0.9085 \,(\pm 0.4704)$ & $10.6329 \,(\pm 2.0730)$ \\
      0.20 & $3.7971 \,(\pm 3.3365)$ & $21.6096 \,(\pm 4.1304)$ \\
      0.10 & $9.3781 \,(\pm 4.7797)$ & $35.0823 \,(\pm 5.3862)$ \\
      \bottomrule
    \end{tabular}
    \vspace{0.35cm}
    \caption{%
      Test MSE of \ExpanderL for $n = 5000$ under uniform outliers with $S=10$.
      Means and stds computed over $5$ seeds.
    }
    \label{tab:scaling-nd-n5000}
  \end{minipage}
  \hfill
  \begin{minipage}{0.47\linewidth}
    \centering
    \begin{tabular}{c|cc}
      \toprule
      $\alpha$ & $d = 20$ & $d = 50$ \\
      \midrule
      0.40 & $0.4282 \,(\pm 0.2156)$ & $4.2935 \,(\pm 0.9643)$ \\
      0.30 & $1.3789 \,(\pm 0.8192)$ & $9.5036 \,(\pm 2.6700)$ \\
      0.20 & $4.2386 \,(\pm 2.3773)$ & $21.9582 \,(\pm 4.9489)$ \\
      0.10 & $10.3990 \,(\pm 5.8797)$ & $37.1226 \,(\pm 7.4887)$ \\
      \bottomrule
    \end{tabular}
    \vspace{0.35cm}
    \caption{%
      Test MSE of \ExpanderL for $n = 6500$ under uniform outliers with $S=10$.
      Means and stds computed over $5$ seeds.
    }
    \label{tab:scaling-nd-n6500}
  \end{minipage}
\end{table}

To complement synthetic data with a real-data stress test, we construct a list-decodable regression mixture from two UCI datasets; CASP \footnote{\hyperlink{https://archive.ics.uci.edu/dataset/265/physicochemical+properties+of+protein+tertiary+structure}{Physicochemical Properties of Protein Tertiary Structure}: $n=45{,}730$, $d=9$} as inliers and Concrete Strength \footnote{\hyperlink{https://archive.ics.uci.edu/dataset/165/concrete+compressive+strength}{Concrete Compressive Strength}: $n=1{,}030$, $d=8$} as outliers. We embed both datasets into a common feature space by zero-padding Concrete to match the CASP feature dimension, standardizing the concatenated design, generating degree-2 polynomial features without bias, and applying PCA to obtain $d=10$ dimensions. In this representation, CASP exhibits approximately linear structure in the response, while Concrete responses serve as heterogeneous real-valued outliers. We reserve a clean CASP-only test set of $n_{\text{test}}=1000$ points and train an oracle OLS regressor on the remaining CASP-only samples. We then form a contaminated training set by sampling $n=1400$ points with target inlier fraction $\alpha=0.3$, drawing $n_{\text{in}}=\lfloor \alpha n \rfloor$ training inliers from CASP and the rest from Concrete; for the Concrete points, we permute responses at random to destroy any simple predictive structure. For \ExpanderL, we use the same hyperparameters
except for the shrinkage factor, which we set to $\rho=0.45$. Table~\ref{tab:casp-concrete} reports MSE on the clean CASP test set after training on the contaminated mixture. In this setting, several robust baselines perform worse than OLS on the mixture, whereas \ExpanderL\ significantly reduces error but still leaves a portion of the gap to the oracle CASP-only predictor.

\begin{table}[h!]
  \centering
  \begin{tabular}{l c}
    \toprule
    Method & Test MSE (mean $\pm$ std) \\
    \midrule
    Oracle OLS (clean CASP only)      & $33.26 \;(\pm 0.23)$ \\
    \midrule
    OLS on mixture                    & $1075.54 \;(\pm 995.95)$ \\
    Ridge regression                  & $44373.36 \;(\pm 4608.63)$ \\
    Huber regression                  & $75303.59 \;(\pm 5225.56)$ \\
    Theil--Sen regression             & $9293.18 \;(\pm 680.30)$ \\
    RANSAC                            & $91026.75 \;(\pm 9453.61)$ \\
    Expander-1 (single seed)          & $5738.48 \;(\pm 921.40)$ \\
    \ExpanderL\ (best candidate)      & $\mathbf{874.94 \;(\pm 163.81)}$ \\
    \bottomrule
  \end{tabular}
  \vspace{0.45 cm}
  \caption{%
    Real-data stress test on a CASP+Concrete mixture ($\alpha \approx 0.3$).
    All methods are trained on the contaminated mixture and evaluated on a clean CASP-only test set.
    Reported numbers are means over $5$ random seeds, with standard deviations in parentheses.
  }
  \label{tab:casp-concrete}
\end{table}

\begin{figure}[h!]
  \centering
  \includegraphics[width=0.48\linewidth]{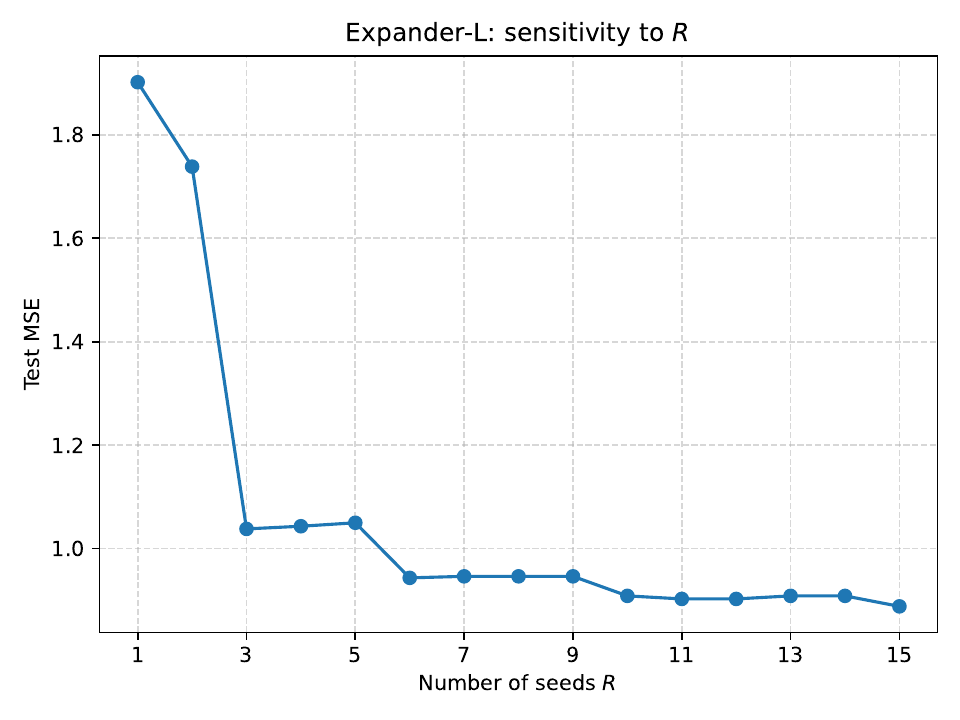}\hfill
  \includegraphics[width=0.48\linewidth]{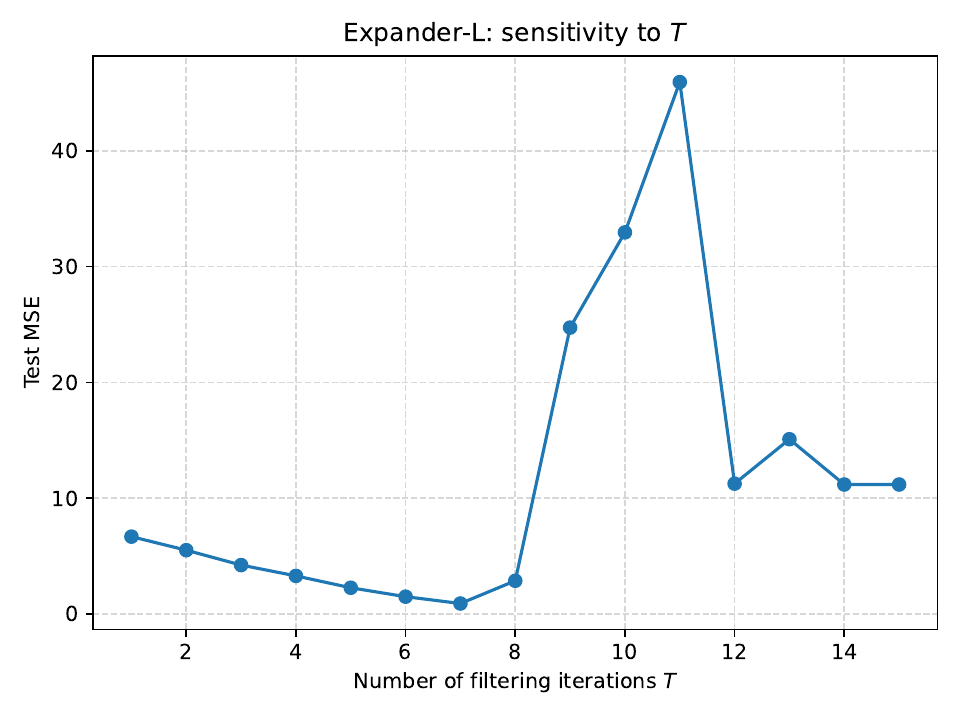}\\[0.6em]
  \includegraphics[width=0.48\linewidth]{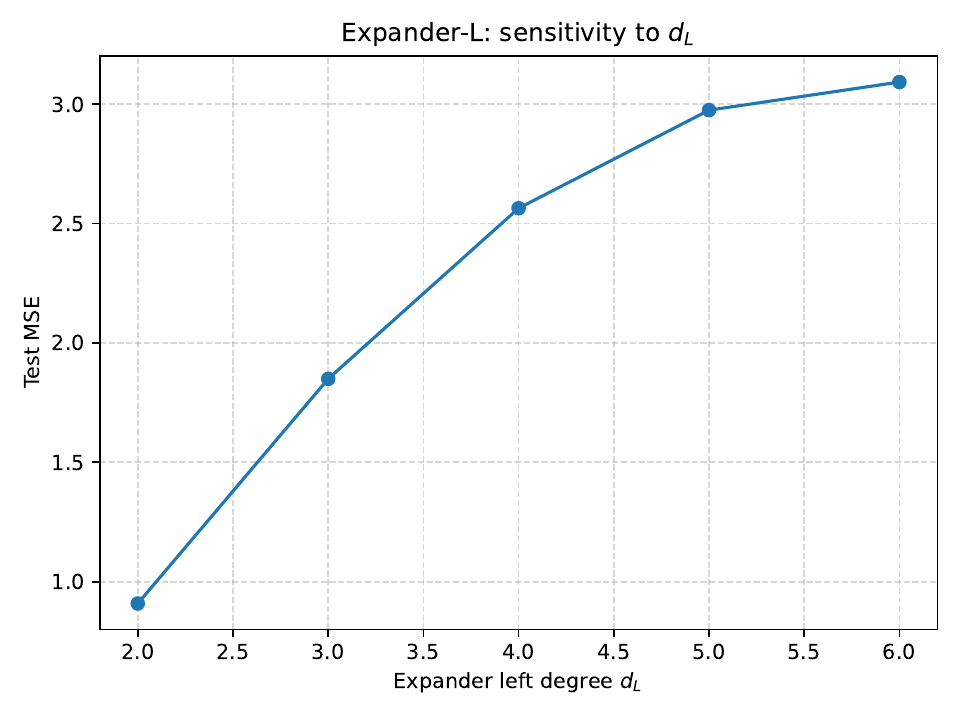}\hfill
  \includegraphics[width=0.48\linewidth]{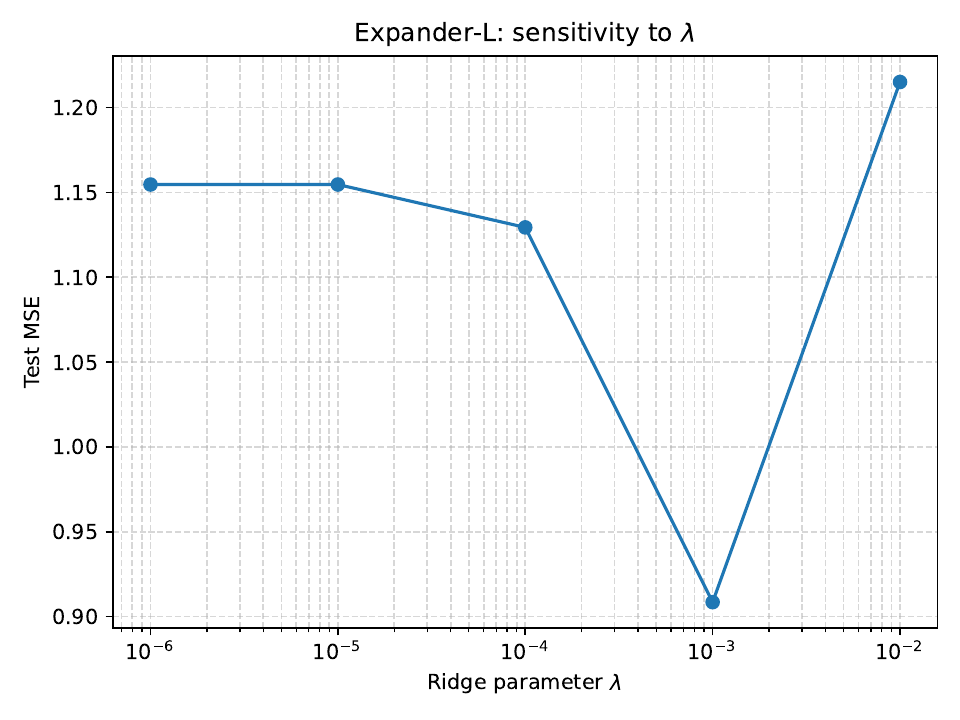}
  \caption{%
    Sensitivity of \ExpanderL to primary hyperparameters: number of seeds $R$ (top left), number of filtering iterations $T$ (top right), expander left degree $d_L$ (bottom left), and ridge regularization parameter $\lambda$ (bottom right).
  }
  \label{fig:expander-hparams-core}
\end{figure}

Finally, we report sensitivity to the main algorithmic hyperparameters of \ExpanderL. Across all ablations, we vary one parameter while keeping the others fixed at a default configuration, and measure test MSE under the synthetic setting. Figure~\ref{fig:expander-hparams-core} sweeps the number of seeds $R$, the number of filtering iterations $T$, the expander left degree $d_L$, and the ridge regularization parameter $\lambda$, and shows broad stability regions for each. Figure~\ref{fig:expander-hparams-secondary} similarly ablates the repetition parameter $r$, pruning threshold $\theta$, shrinkage factor $\rho$, and the candidate clustering radius; performance degrades mainly for overly aggressive pruning or insufficient filtering depth, while conservative choices remain stable. Figure~\ref{fig:expander-hparams-B} varies the sketch dimension $B$ and shows weak dependence across the tested range. Unless otherwise stated, we use a single consistent configuration 

\begin{figure}[h!]
  \centering
  \includegraphics[width=0.48\linewidth]{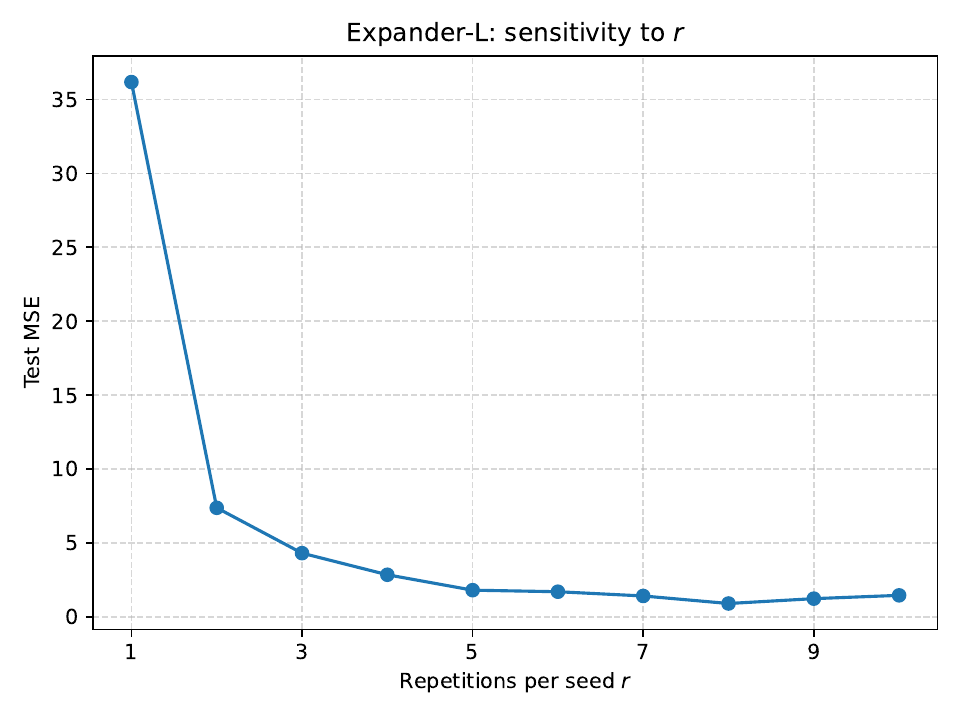}\hfill
  \includegraphics[width=0.48\linewidth]{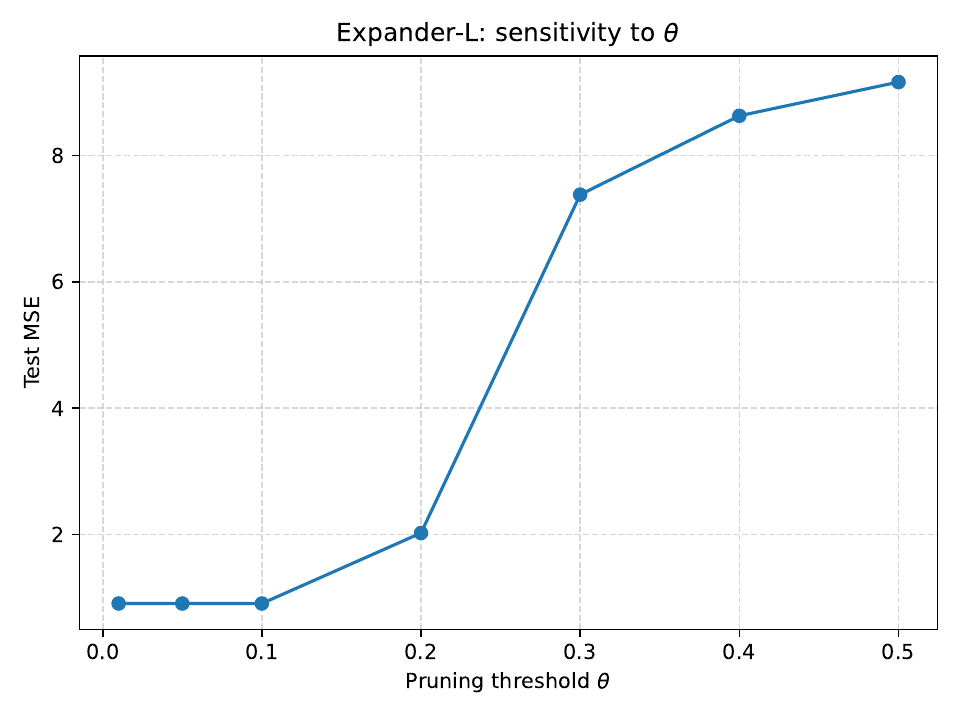}\\[0.6em]
  \includegraphics[width=0.48\linewidth]{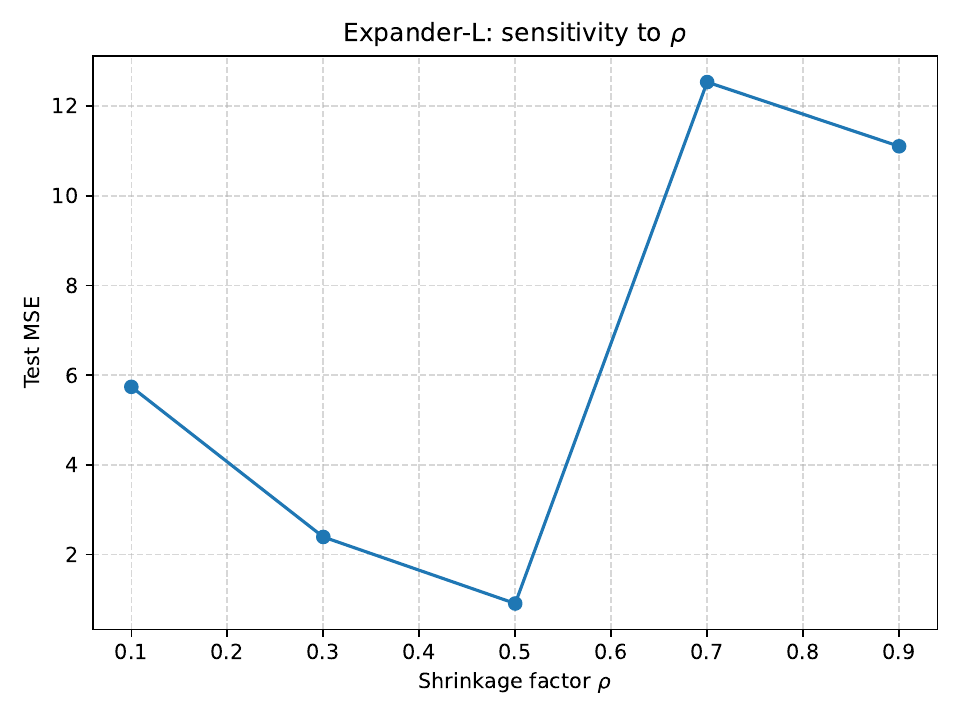}\hfill
  \includegraphics[width=0.48\linewidth]{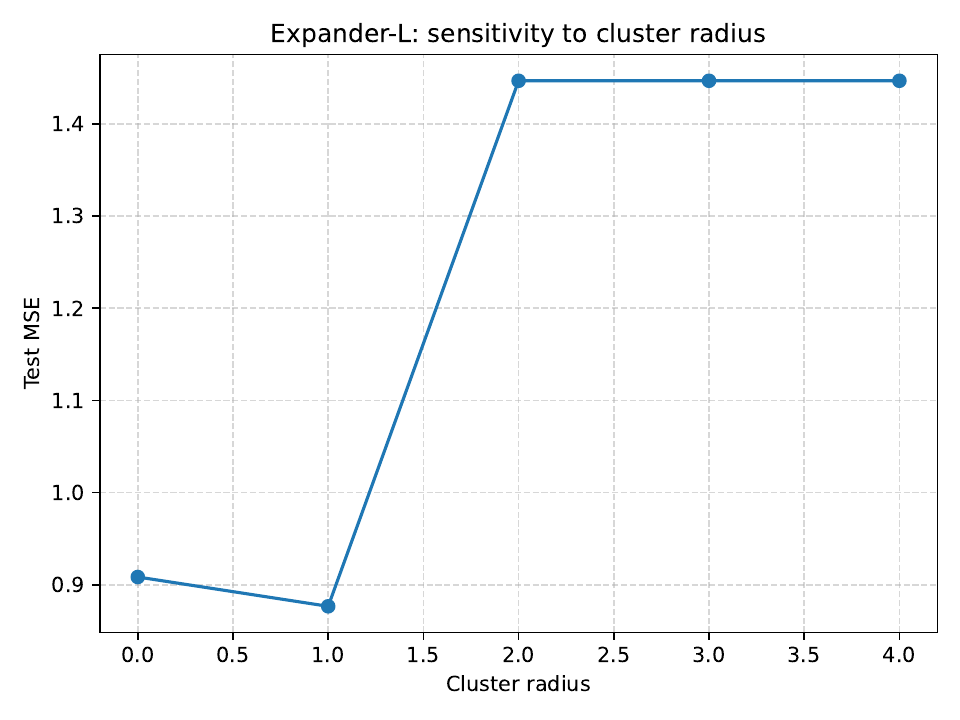}
  \caption{%
    Ablations for secondary hyperparameters of \ExpanderL: bucket repetitions $r$ (top left), pruning threshold $\theta$ (top right), shrinkage factor $\rho$ (bottom left), and clustering radius (bottom right).
  }
  \label{fig:expander-hparams-secondary}
\end{figure}

\begin{figure}[h!]
  \centering
  \includegraphics[width=0.55\linewidth]{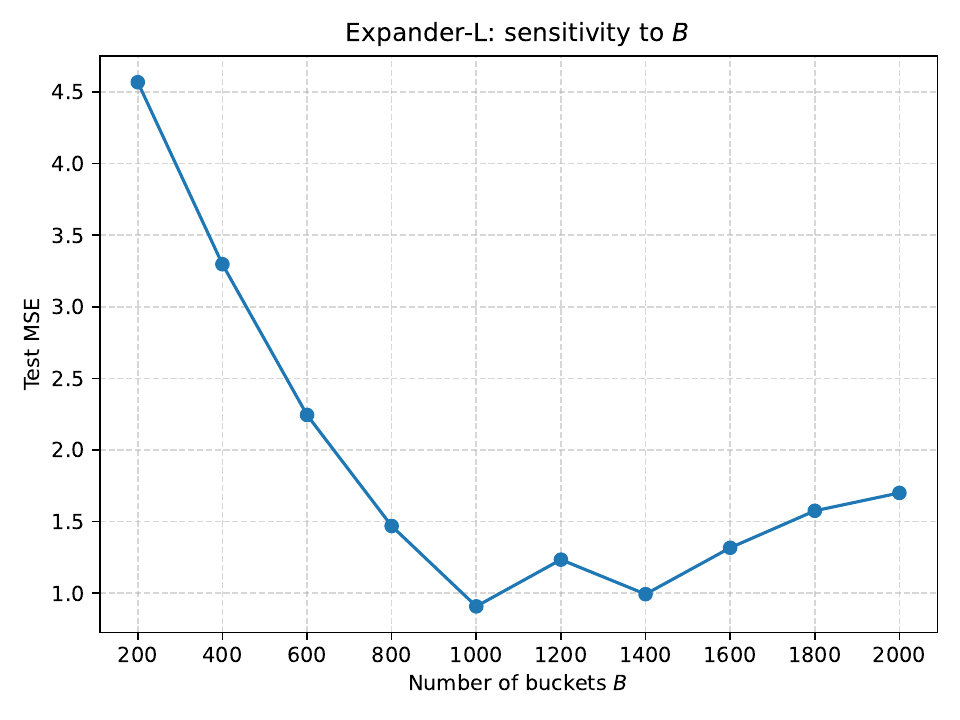}
  \caption{%
    Sensitivity of \ExpanderL to the sketch dimension $B$.
  }
  \label{fig:expander-hparams-B}
\end{figure}

\section{Conclusion}
\label{sec:conclusion}

We introduced a new algorithm for list-decodable linear regression based on expander sketching. The key idea is to replace the usual “given batches” assumption with a synthetic batching scheme built from sparse, signed expander graphs. This construction isolates most inliers in lightly corrupted buckets, allowing standard robust aggregation and spectral filtering to recover the true regression direction.

The method runs in near input-sparsity time and achieves near-optimal sample complexity. It matches the best known theoretical rates while being simpler and more combinatorial than previous SoS or SDP-based approaches.

Conceptually, the result shows that external combinatorial randomness—here from expander sketches—can create structure that enables robust estimation even under strong adversarial noise. This offers a new perspective on how sketching and robust statistics can interact to overcome information-theoretic barriers such as those from the SQ model.

Overall, our framework highlights that combinatorial sketching can serve as a general tool for resilient learning and estimation.

\bibliographystyle{splncs04}
\bibliography{mybibliography}

\end{document}